\let\footnote=\endnote
\newcommand{\xhdr}[1]{\vspace{1mm} \noindent{\bf #1}}
\newcommand{\HS}{\texttt{H-SF}}
\newcommand{\AR}{\mathcal A_{\text{random}}}
\begin{document}


\RUNAUTHOR{xxx}

\RUNTITLE{Adaptively Robust LLM Inference Optimization under Prediction Uncertainty}

\TITLE{Adaptively Robust LLM Inference Optimization under Prediction Uncertainty}

\ARTICLEAUTHORS{%
\AUTHOR{Zixi Chen}
\AFF{Department of Mathematics, Peking University,  \EMAIL{chenzixi22@stu.pku.edu.cn}} 
\AUTHOR{Yinyu Ye}
\AFF{Department of Management Science and Engineering, Stanford University \&  Department of Industrial Engineering and Decision Analytics, HKUST, \EMAIL{yyye@stanford.edu}}
\AUTHOR{Zijie Zhou}
\AFF{Department of Industrial Engineering and Decision Analytics, HKUST, \EMAIL{jerryzhou@ust.hk}}
} 

\ABSTRACT{
We study the problem of optimizing Large Language Model (LLM) inference scheduling to minimize total latency. LLM inference is an online and multi-task service process and also heavily energy consuming by which a pre-trained LLM processes input requests and generates output tokens sequentially. Therefore, it is vital to improve its scheduling efficiency and reduce the power consumption while a great amount of prompt requests are arriving. A key challenge in LLM inference scheduling is that while the prompt length is known upon arrival, the output length, which critically impacts memory usage and processing time, is unknown. To address this uncertainty, we propose algorithms that leverage machine learning to predict output lengths, assuming the prediction provides an interval classification (min-max range) for each request.  

We first design a conservative algorithm, \(\mathcal{A}_{\max}\), which schedules requests based on the upper bound of predicted output lengths to prevent memory overflow. However, this approach is overly conservative: as prediction accuracy decreases, performance degrades significantly due to potential overestimation. To overcome this limitation, we propose \(\mathcal{A}_{\min}\), an adaptive algorithm that initially treats the predicted lower bound as the output length and dynamically refines this estimate during inferencing. We prove that \(\mathcal{A}_{\min}\) achieves a log-scale competitive ratio. Through numerical simulations, we demonstrate that \(\mathcal{A}_{\min}\) often performs nearly as well as the hindsight scheduler, highlighting both its efficiency and robustness in practical scenarios. Moreover,  \(\mathcal{A}_{\min}\) relies solely on the lower bound of the prediction interval—an advantageous design choice since upper bounds on output length are typically more challenging to predict accurately.
}
\KEYWORDS{Robust and Online Scheduling, LLM Inference, Operations Management in AI with Prediction} 

\maketitle

\section{Introduction}

Recent advances in large language models (LLMs) \citep{brown2020language,chowdhery2023palm,openai2023gpt,kaplan2020scaling,wei2022emergent} have redefined the boundaries of artificial intelligence, showcasing exceptional proficiency in generating human-like text across a wide spectrum of languages and contexts. These advanced neural architectures, built upon vast and diverse textual datasets, have become integral to a wide range of practical applications. Their influence extends to AI-driven conversational interfaces \citep{anthropic2023claude,characterai2023,chatgpt2023,openai2023gpt}, next-generation search technologies \citep{bing,googlebard,perplexity}, intelligent programming assistants \citep{codewhisperer2023,githubcopilot2023}, and operations service management \citep{huang2025orlm,cascella2023evaluating,sallam2023utility}, highlighting their adaptability across both specialized and everyday use cases.

Large Language Model (LLM) inference—the process by which a pre-trained LLM processes input requests and generates output tokens sequentially—is not just a computational task but a critical operational challenge with real-world cost and energy implications. Optimizing this process can significantly reduce infrastructure expenses and power consumption, especially given the scale of modern LLM deployments. This problem sits at the intersection of AI systems and operations research (OR), where classical scheduling techniques, adapted to LLM-specific constraints, can unlock substantial efficiency gains.

For each request, the inference consists of two key phases:
\begin{enumerate}
    \item \textit{Input (Prefill) Phase: } The LLM reads the input prompt, computes the query, key, and value representations for each token in the prompt, and produces the first output token.
    \item \textit{Output (Decode) Phase: } The LLM generates subsequent tokens autoregressively—using all previously generated tokens as context—until completion.
\end{enumerate}
For example, given the input prompt ``KMeans is used", the LLM might first generate the token ``for", then use ``KMeans is used for" to produce ``clustering", and so on. Crucially, while the input length is known upfront, the output length is inherently unknown until generation finishes. Recent work has explored methods to predict output lengths in advance, often employing auxiliary machine learning models or even LLMs themselves (\cite{zheng2023response,qiu2024efficient,fu2024efficient,chen2024kvdirect, shahout2024don}). Most of these approaches typically classify requests into predefined output-length intervals based on their predictions.

Given the high volume of requests we typically handle, simultaneous processing is often unfeasible. To optimize efficiency, it is essential to design scheduling algorithms using operations research techniques to prioritize request sequences. Accurate output length prediction is critical for efficient LLM inference scheduling, especially when handling large volumes of requests. Such predictions enable effective scheduling policies by providing two key insights: (i) \textbf{Execution Time Estimation: } Since tokens are generated sequentially, the output length directly correlates with the request’s completion time. (ii) \textbf{Memory Usage Forecasting: } The key-value (KV) cache memory grows linearly during decoding, as each new token’s KV embeddings are stored and retained for all subsequent steps. Thus, predicting output length allows systems to anticipate both the computational duration and memory footprint of each request.

The design of efficient scheduling policies for LLM inference—which determines the processing order of a great number of input prompts under output length predictions—has been explored under idealized assumptions. For instance, when predictions are assumed to be perfectly accurate, prior work \cite{jaillet2025online,shahout2024don} proposes shortest-first algorithms that prioritize requests with smaller predicted output lengths. This approach offers two advantages: (i) shorter jobs finish faster, reducing the total waiting time; (ii) the system can process more requests concurrently in larger batches at early stage since shorter requests consume less KV cache memory. However, real-world predictions are inherently imperfect. Inaccurate predictions introduce two key challenges:
\begin{itemize}
    \item \textbf{Suboptimal Scheduling: } Misclassify a long output as short disrupts the processing order, delaying other requests.
    \item \textbf{Memory Overflows and Underflows: } Underestimated memory demands can exceed GPU KV cache limits, forcing request cancellations and halting execution. Overestimated memory demands can reduce the concurrency, increasing the total waiting time.
\end{itemize}

This paper rigorously addresses these challenges by developing robust scheduling algorithms grounded in mathematical theory, ensuring efficiency even under prediction uncertainty.

\subsection{Main Contribution}

We introduce the main contribution and the outline in this section. 

\xhdr{Naive Conservative Algorithm $\mathcal{A}_{\max}$ and its Competitive Analysis. } In Section~\ref{sec:benchmark}, we propose a naive benchmark algorithm, $\mathcal{A}_{\max}$, that conservatively treats the predicted upper bound of each job’s output length as its true length. This approach guarantees no memory overflows by over-provisioning KV cache resources for all requests. However, by overestimating the memory usage for each request, the algorithm hurts the concurrency if the prediction error is large. Our analysis reveals two key results: (i) We construct an adversarial arrival sequence proving that $\mathcal{A}_{\max}$ achieves a competitive ratio of at least \( \frac{\alpha^{-1} (1+\alpha^{-1/2})}{2}\), where $\alpha$ is the ratio of the prediction interval’s lower bound to its upper bound. (ii) Moreover, we rigorously show the upper bound of the competitive ratio of $\mathcal{A}_{\max}$ is \( \frac{\alpha^{-1}(1 + \alpha^{-1})}{2} \). The proof leverages a novel memory-preserving combinatorial technique (Section~\ref{sec:memory}), which may inspire further research in LLM inference scheduling under prediction uncertainty.

\xhdr{Robust Advanced Algorithm $\mathcal{A}_{\min}$ and its Competitive Analysis. } 
Motivated by the observation that the competitive ratio of $\mathcal{A}_{\max}$ becomes unbounded as $\alpha$ shrinks (i.e., when predictions lack confidence), we design $\mathcal{A}_{\min}$, a robust algorithm that leverages only the lower prediction bound. Initially, $\mathcal{A}_{\min}$ underestimates each request's output length as its lower bound, dynamically refining this estimate during execution. In Section~\ref{sec:algrobust}, we prove that $\mathcal{A}_{\min}$ achieves asymptotic optimality with a competitive ratio of $\mathcal{O}(\log(\alpha^{-1}))$, demonstrating significantly stronger robustness than $\mathcal{A}_{\max}$. The analysis in Subsection \ref{subsec:log} is notably more challenging than that of $\mathcal{A}_{\max}$: we derive the competitive ratio's closed form as a Rayleigh quotient and then develop a novel estimation technique to bound this quotient logarithmically. In Section \ref{sec:extension}, we also study the performance of $\mathcal{A}_{\min}$ over some specific output distributions. We show that under two-point distribution, geometric distribution, linear weighted geometric distribution, the competitive ratios of $\mathcal{A}_{\min}$ is uniformly bounded by 1.5, 1.7, and 1.56 respectively.

\xhdr{Numerical Experiments. } In Section~\ref{sec:num}, we evaluate our algorithms through three sets of experiments on a real-world dataset~\cite{zheng2023lmsys}. First, we establish a baseline where all requests share an identical prediction interval, representing perfect prediction uniformity. Second, we implement a binned configuration where requests are categorized into ten discrete prediction intervals based on their predictions. Third, we evaluate a fully individualized setting where each request is assigned its own unique prediction interval centered around the true output length, better reflecting practical scenarios where prediction confidence varies per request. Notably, while the second experiment groups requests by interval similarity, the third preserves each request's distinct interval characteristics. Across all three configurations, $\mathcal{A}_{\max}$ demonstrates strong performance only when predictions are highly accurate, while $\mathcal{A}_{\min}$ proves remarkably robust, consistently matching or approaching the performance of the hindsight algorithm that operates with perfect knowledge of the true output lengths.

\subsection{Other Related Work}

\xhdr{Optimization in LLM Inference. } While existing LLM inference literature has predominantly focused on system design and engineering optimizations \citet{patel2023splitwise, zhong2024distserve, yu2022orca, agrawal2023sarathi, agrawal2024taming}, recent work has begun establishing theoretical foundations for LLM inference scheduling. The seminal work of \cite{jaillet2025online} first formalized a mathematical framework for analyzing algorithm performance in this context. Our work extends this foundation by introducing prediction intervals—where algorithms must operate robustly without knowledge of true output lengths. Parallel developments include \cite{ao2025optimizing}, which incorporates multi-metric scheduling constraints, \cite{wang2025llm}, which designs efficient scheduling algorithms for the setting where the input size is heterogeneous, and \cite{li2025throughput}, which characterizes stability conditions for LLM inference systems. Together, these works mark the emergence of theoretical optimization approaches complementing the field's traditional systems focus.

\xhdr{Decision-Making and Operations Management with Prediction Uncertainty. } Our work connects to broader research on learning-augmented algorithms, where predictions inform decision-making under uncertainty. While \cite{shahout2024don} develops prediction methods for LLM inference without considering robust scheduling, other domains have deeply explored this paradigm: \cite{lykouris2021competitive} analyzes online caching with access predictions, \cite{golrezaei2023online} examines online resource allocation with predicted demand, \cite{agrawal2011unified} studies the prediction market design, and \cite{jin2022online,antoniadis2020secretary} investigates prediction-enhanced secretary problem and online matching. The robust optimization literature \citep{bertsimas2004price} also provides foundational techniques for handling uncertainty, while recent advances by \cite{zhou2022advance} and \cite{zhang2023robust} demonstrate applications to scheduling problems. Our work bridges these perspectives by developing robust policies specifically for prediction-informed LLM scheduling frameworks.

\xhdr{Online and Offline Scheduling.} The operations research community has extensively studied online and offline scheduling problems \citep{chen1998review,blazewicz2007handbook,mak2015appointment,leung2010competitive}. These problems involve determining the processing order and start times for a large set of jobs that cannot be executed simultaneously. Prior work includes batch scheduling, where jobs are processed in parallel or grouped into batches \citep{xing2000parallel,yang2003approximation,cao2005parallel,chen2008logistics}, and resource-constrained scheduling, where jobs consume limited resources during execution \citep{brucker1999resource,hiermann2015metaheuristics}.

The LLM inference scheduling problem studied in our work combines these two dimensions: it is both batch-based and resource-constrained, with GPU memory acting as a reusable resource. Crucially, the memory usage patterns of LLM inference exhibit unique characteristics that distinguish our problem from classical scheduling models. These distinct features necessitate the development of new algorithmic approaches tailored to this setting.

\section{Problem Settings}

In this paper, we study an extension of the LLM inference scheduling model proposed in \cite{jaillet2025online}. We begin by reviewing the mathematical framework presented in \cite{jaillet2025online} and then describe the modifications introduced in our model. We consider a setting with a single computational worker equipped with a KV cache of size \( M > 0 \), capable of storing up to \( M \) tokens. In practice, the value of \( M \) depends on the complexity of the large language model and the available hardware memory of the computational worker. We assume that \( M \) is known to the decision-maker. A total of \( n \) jobs (prompts) are waiting in the queue, where each job \( i \) has a size \( s_i > 0 \), representing the number of tokens in the prompt. Following the assumptions in \cite{jaillet2025online}, and motivated by the observation that in real-world parallel computing environments prompts assigned to a worker typically have similar lengths, we assume \( s_i = s > 0 \) for all \( i \in [n] \). The value of \( s \) is known to the decision-maker.

\xhdr{Prediction Interval for Output Length. } To process each prompt, the model generates the output token by token. 
Let \( o_i > 0 \) denote the realized output length of job \( i \), representing the number of tokens generated in its response. Since the realized value \( o_i \) is not revealed until the last output token is generated, it remains unknown to the decision-maker during the process. 
In \cite{jaillet2025online}, it is assumed that the realized output length \( o_i \) is perfectly predicted and known in advance. In contrast, our current model assumes that \( o_i \) is unknown a priori but can be predicted to lie within an interval \([ \ell, u ]\), and we denote $\alpha = \frac{\ell}{u}$. For clarity and to build intuition regarding the complexity of the problem and the design of our algorithm, we first consider the case where all jobs share the same predicted interval \([ \ell, u ]\). We then generalize the result in Section \ref{subsec:hetero} to allow each job’s output length to fall within different predicted intervals.


\xhdr{Batch Processing, Memory Constraint, and Cancellation of Requests. }  
Following the model in \cite{jaillet2025online}, we allow jobs to be processed in batches. At each discrete time step \( t \), the scheduler selects a subset of jobs \( S^{(t)} \) to form a batch for processing. Each batch takes exactly one unit of time to process. For any job \( i \in S^{(t)} \), let \( a_i^{(t)} \in \{0, 1, \ldots, o_i\} \) denote the number of output tokens it has generated by time \( t \). If \( a_i^{(t)} < o_i \), then processing the batch at time \( t \) results in the generation of the \( (a_i^{(t)} + 1) \)-th token for job \( i \). Once \( a_i^{(t)} = o_i \), job \( i \) is fully completed.

Each batch must also satisfy a memory constraint determined by the KV cache limit \( M \). At any time \( t \), let \( \mathcal{A}^{(t)} \) denote the set of active jobs—i.e., those that have begun processing in some earlier batch. For each active job \( i \in \mathcal{A}^{(t)} \), the memory required is \( s + a_i^{(t)} \), accounting for both the prompt size and the number of output tokens generated so far. The total memory usage at time \( t \) must not exceed the limit \( M \), yielding the following constraint:
\begin{equation} \label{eq:constraintM}
\sum_{i \in \mathcal A^{(t)}} (s + a_i^{(t)}) \leq M, \qquad \forall t \geq 0.
\end{equation}

Since our setting assumes that only interval predictions of \( o_i \) are available, it is hard to guarantee feasibility of constraint \eqref{eq:constraintM} under some non-preemptive policies. For example, suppose the total memory usage at time \( t \) is \( M - 1 \) with two active jobs, each with at least two tokens remaining. Even if the decision-maker delays one job and processes the other, both jobs will eventually require at least 2 units of additional memory. Thus, there is no way to complete both without exceeding the memory constraint in some future time step.

To address this challenge, based on the non-preemptive structure, our model permits the cancellation of jobs. That is, the scheduler may cancel any active job \( i \) at time \( t \), discarding all previously generated output tokens \( a_i^{(t)} \) and resetting the job’s state to unprocessed. We emphasize that since the sampling method is greedy, the total realized output length \( o_i \) of each job remains fixed, regardless of how many times it is cancelled and restarted.

\xhdr{Evaluation Metrics.}  
We use total end-to-end latency as our performance metric. Denote the vector $\mathbf{o}=(o_1,o_2,\ldots,o_n)$ which contains the true value of output length of all requests. W.L.O.G, we assume that $o_1 \leq o_2 \leq \ldots \leq o_n$. Under a scheduling policy \( \pi \), all prompts are arranged into an input queue \( I = ( I_0, I_1, \cdots, I_T ) \), where \( I_t \) denotes the set of prompts whose last (and final) starting time is at time \( t \)—that is, they are not cancelled after \( t \). For each request \( i \in I_t \), we define its latency as \( L_i = t + o_i \), which corresponds to the completion time of its last output token. Since all jobs arrive at time \( t = 0 \), this is also the total end-to-end latency of request \( i \), and the total end-to-end latency of the system is then given by
\begin{align}
\label{eq:latency}
    \text{TEL}(\mathbf o;\pi) := \sum_{i \in [n]} L_i = \sum_{t = 0}^{T} t \cdot |I_t| + \sum_{i \in [n]} o_i.
\end{align}
The second equality follows by grouping all jobs according to their final starting time: each job in \( I_t \) contributes latency \( t \) from waiting and \( o_i \) from processing. The first term sums all waiting times, while the second term sums all output lengths.

\section{Benchmark Algorithms} \label{sec:benchmark}

In this section, we introduce several benchmark algorithms to evaluate a given scheduling policy \( \pi \). A central challenge in our model is that the decision-maker does not know the exact output length \( o_i \) of each request \( i \) in advance. Instead, the only information available is that \( o_i \in [\ell, u] \), and we define the uncertainty parameter as \( \alpha = \frac{\ell}{u} \). The true value of each \( o_i \) is selected adversarially from this interval at the outset. To assess performance under this uncertainty, Subsection \ref{subsec:hindsight} introduces a hindsight benchmark algorithm from \cite{jaillet2025online}, in which all output lengths \( o_i \) are known ahead of time. We also define the competitive ratio as a metric for evaluating policies that operate without access to the true values of \( o_i \). In Subsection \ref{subsec:naive}, we propose a naive benchmark algorithm that lacks knowledge of \( o_i \), and analyze its competitive ratio.

\subsection{Review of Hindsight Benchmark Algorithm and Competitive Ratio} \label{subsec:hindsight}

In the hindsight setting—where the decision-maker has complete knowledge of the output lengths \( o_i \) for all \( i \in [n] \)—our model closely aligns with the setting in \cite{jaillet2025online}, with one key distinction: we allow for job cancellations. However, when the exact values of \( o_i \) are known in advance, cancellations are unnecessary. This is because the memory usage over time can be precisely anticipated and managed using the approach introduced in \cite{jaillet2025online} as follows: at each time \( t \), let \( R_t \) be the set of pending requests awaiting processing. Suppose we consider adding a subset \( U \subset R_t \) to the batch. Define \( t_{\max}(U) := \max_{i \in U} \{ t + o_i \} \), which represents the latest time any job in \( U \) will complete if processing starts at time \( t \). To ensure that the memory constraint is respected throughout this interval, we must verify that the total memory consumption remains below the KV cache limit \( M \) at every time \( t' \in [t, t_{\max}(U)] \). This requirement is formalized by the following constraint:
\begin{equation} \label{eqn:Constraint}
\sum_{i \in S^{(t)}} (s + t' - p_i) \cdot \mathbbm{1}_{\{ o_i \geq t' - p_i \}} + \sum_{i \in U} (s + t' - t) \cdot \mathbbm{1}_{\{ o_i \geq t' - t \}} \leq M, \quad \forall t' \in [t, t_{\max}(U)],
\end{equation}
where \( S^{(t)} \) denotes the set of jobs already in progress at time \( t \), and \( p_i \) is the last start time of job \( i \). The first summation captures memory usage from ongoing jobs, while the second accounts for the new jobs in \( U \). As long as this constraint is satisfied for all relevant time points, the batch is guaranteed to respect the memory limit and no cancellations are needed.

Furthermore, when the decision-maker has full knowledge of the values \( o_i \), either in theory (\cite{jaillet2025online}) or in practical implementations (\cite{shahout2024don,zheng2023response,qiu2024efficient,fu2024efficient,chen2024kvdirect}), one effective batching strategy is shortest-job-first. The classical OR heuristic attempts to include as many of the shortest remaining jobs with respect to $o_i$ as possible in each batch without violating constraint \eqref{eqn:Constraint}. The rationale is twofold: first, shorter jobs contribute less to waiting time, reducing overall latency; second, since the peak memory usage of a job is \( s + o_i \), shorter jobs also occupy less memory, allowing more requests to be packed into earlier batches. We refer to this hindsight benchmark as (Hindsight-Shortest First) \HS, and describe it formally in Appendix \ref{append:benchmark}. Next, we describe the definition of competitive ratio:

\begin{definition}[Competitive Ratio] \label{def:cr}  
Let \( \mathbf{o} = (o_1, o_2, \ldots, o_n) \in [\ell, u]^n \) denote the vector of true output lengths for all requests. For a scheduling policy \( \pi \), let \( \text{TEL}(\mathbf{o};\pi) \) denote the total end-to-end latency under policy \( \pi \), and let \( \text{TEL}(\mathbf{o};\HS) \) denote the latency achieved by \HS which has full knowledge of $\mathbf{o}$. Then, the competitive ratio of policy \( \pi \) is defined as:
\begin{align*}
    \text{CR}(\pi) := \sup_{\mathbf{o} \in [\ell, u]^n} \frac{\mathbb{E}[\text{TEL}(\mathbf{o};\pi)]}{\text{TEL}(\mathbf{o};\HS)}.
\end{align*}
\end{definition}

\subsection{Naive Benchmark Algorithm} \label{subsec:naive}

Now consider the setting where the decision-maker only has access to the prediction interval \( [\ell, u] \) for each job. Since all intervals are identical, the decision-maker cannot distinguish between jobs and possesses no information to prioritize one over another. It can only reason about the range within which each job's output length may fall.  

To handle this uncertainty, we introduce a naive benchmark algorithm, Max-Length Based Algorithm, denoted by \( \mathcal{A}_{\max} \). This algorithm assumes the worst-case scenario by treating every job as if its output length equals \( u \). At each time step, it selects the maximum number of prompts that can be processed without violating the memory constraint in Equation \eqref{eqn:Constraint}. Since the output lengths are overestimated, the algorithm guarantees that the memory limit is never exceeded, and thus no cancellations are required. The detail information can be found in Algorithm~\ref{algo:malb}.

\begin{algorithm}
\caption{$\mathcal{A}_{\max}$}
\label{algo:malb}
\KwIn{Memory Capacity $M$, prompt size $s$, output length lower and bound $\ell \leq o_i \leq u$ for all $i \in [n]$.}
\KwOut{Processing sequence $I = ( I_0, I_1, \cdots, I_T)$.}

\While{there exist waiting requests}{
    Let $R_t$ be the set of waiting prompts at time $t$. Let $S_t$ be the set of tokens currently processing at time $t$.
    
    \textbf{Take all $o_i = u$}, find the largest cardinality value $m_t$ such that there exists a set $U \subset R_t$ such that $|U|=m_t$ and Equation \eqref{eqn:Constraint} hold for all $t' \geq t$.
    
    Randomly sample $I_t \subset R_t$ with $|I_t| = m_t$.

    Process the requests in $I_t \cup S_t$ and update $R_{t+1} = R_t \backslash I_t$.
    }
\end{algorithm}

Although \( \mathcal{A}_{\max} \) guarantees feasibility, it may suffer from inefficiency due to its overly conservative estimation of output lengths. By assuming each job has the maximum possible length \( u \), it overestimates the peak memory usage per job and may significantly under-utilize the available KV cache memory. The following example illustrates this inefficiency:

\begin{example}
Consider a setting with \( n = 5 \) requests, each with input size \( s = 1 \), and output lengths \( o_i \in [1, 4] \). Suppose the true values are \( o_i = 1 \) for all \( i \in [5] \), and the memory capacity is \( M = 10 \). 

The hindsight-optimal algorithm \HS, having access to the true output lengths, knows that the peak memory usage per job is \( s + o_i = 2 \). Hence, it can batch all 5 jobs together, fully utilizing the memory \( M = 2 \times 5 = 10 \). Each job completes after 1 unit of processing time, resulting in a latency of 1 per job and total end-to-end latency of \( 5 \times 1 = 5 \).  

In contrast, \( \mathcal{A}_{\max} \) assumes \( o_i = 4 \) for all jobs, leading to a peak memory estimate of \( s + u = 5 \) per job. It therefore batches only \( \lfloor M / 5 \rfloor = 2 \) jobs at a time. It first processes 2 jobs, then the next 2, and finally the last remaining job. The resulting job completion times are 1, 1, 2, 2, and 3, yielding a total latency of \( 1 + 1 + 2 + 2 + 3 = 9 \).
\end{example}

This example highlights the inefficiency of \( \mathcal{A}_{\max} \), particularly in cases where the prediction interval is wide. As the gap between \( \ell \) and \( u \) increases—i.e., as \( \alpha = \ell/u \) decreases—the degree of memory under-utilization becomes more severe. The following theorem establishes an upper bound on the competitive ratio of \( \mathcal{A}_{\max} \).

\begin{theorem} \label{thm:Amax}
    The competitive ratio of  \( \mathcal{A}_{\max} \) is upper bounded by:
    \[
    \text{CR}(\mathcal{A}_{\max}) \leq \frac{\alpha^{-1}(1+\alpha^{-1})}{2}+\mathcal{O}(\frac{1}{M}).
    \]
\end{theorem}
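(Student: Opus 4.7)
The plan is to bound the competitive ratio by upper-bounding $\mathbb{E}[\text{TEL}(\mathbf{o}; \mathcal{A}_{\max})]$ and lower-bounding $\text{TEL}(\mathbf{o}; \HS)$ separately as explicit functions of $\mathbf{o}$, and then maximizing their quotient over $\mathbf{o} \in [\ell, u]^n$.

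For the upper bound on $\mathcal{A}_{\max}$, the key structural observation is that since the feasibility check \eqref{eqn:Constraint} is evaluated under the pessimistic assumption $o_i = u$, the algorithm keeps at most $b_{\max} = \lfloor M/(s+u) \rfloor$ jobs concurrently active at all times, even when it opportunistically admits new jobs as memory is freed. A throughput argument---the total active-job-time $\sum_i o_i$ must be delivered at a rate of at most $b_{\max}$ per unit time---combined with a sequential-completion accounting of start times then yields a bound of the form $\mathbb{E}[\text{TEL}(\mathbf{o}; \mathcal{A}_{\max})] \leq \frac{n^2 \bar{o}(s+u)}{2M} + \sum_i o_i$ up to $O(1/M)$ correction, where $\bar{o} = \frac{1}{n}\sum_i o_i$.

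For the lower bound on $\HS$, I would exploit its shortest-first structure together with a memory-time area argument. Ordering the outputs as $o_{(1)} \leq \cdots \leq o_{(n)}$, the $k$-th completion time $C_{(k)}$ satisfies $M \cdot C_{(k)} \geq \sum_{i \leq k}\bigl(s\,o_{(i)} + o_{(i)}(o_{(i)}+1)/2\bigr)$, since the cumulative memory-time area contributed by the $k$ first-completing jobs must fit under the $M$-envelope. Summing over $k$ and interchanging gives $\text{TEL}(\mathbf{o}; \HS) \geq \frac{1}{M}\sum_i (n-i+1)\bigl(s\,o_{(i)} + o_{(i)}^2/2\bigr) + \sum_i o_i$, which in the adversarial regime reduces roughly to $\frac{n^2\ell(s+\ell)}{2M}$.

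Taking the ratio of the two bounds reduces the competitive ratio to an adversarial optimization over $\mathbf{o}$: maximize $\bar o (s+u)$ relative to the sorted second-moment-type expression in the denominator. I expect the worst-case $\mathbf{o}$ to concentrate on the two extremes $\{\ell, u\}$ with a tuned fraction of $u$-jobs, and solving the resulting one-parameter optimization (or a KKT-style argument) should yield the supremum $\frac{\alpha^{-1}(1+\alpha^{-1})}{2}$ in the $M \to \infty$ limit, with the $O(1/M)$ term absorbing integer rounding of $b_{\max}$ and the lower-order $\sum_i o_i$ contributions. The main obstacle will be the step that faithfully characterizes $\mathcal{A}_{\max}$'s \emph{dynamic} schedule, since new jobs are admitted continuously as memory is freed rather than in discrete rigid batches, producing overlapping admissions and staggered completions. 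The memory-preserving combinatorial technique mentioned in Section~\ref{sec:memory} is presumably the tool that re-arranges $\mathcal{A}_{\max}$'s actual schedule into a canonical rigid-batch form of weakly larger TEL, at which point the throughput-based upper bound above applies cleanly.
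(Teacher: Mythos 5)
Your overall strategy (a direct two-sided bound: upper-bound $\mathbb{E}[\text{TEL}(\mathbf{o};\mathcal{A}_{\max})]$ via a slot/throughput argument, lower-bound $\text{TEL}(\mathbf{o};\HS)$, then maximize the quotient) is genuinely different from the paper, which instead couples $\mathcal{A}_{\max}$ at memory $M$ with a random-order hindsight benchmark $\AR$ run at memory $\alpha M$, transfers between memory levels via the memory-scaling proposition (cost $\alpha^{-1}$), and separately bounds $\text{CR}(\AR)\le\frac{1+\alpha^{-1}}{2}$ after reducing the worst case to $\mathbf{o}\in\{\ell,u\}^n$. Two issues, one reparable and one fatal to the stated constant. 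First, your structural claim about $\mathcal{A}_{\max}$ is wrong as written: the algorithm can hold strictly more than $b_{\max}=\lfloor M/(s+u)\rfloor$ jobs concurrently (early completions free memory and staggered admissions pass the all-$u$ projection test), and in any case ``throughput at most $b_{\max}$'' is the wrong direction for upper-bounding latency. What you actually need, and what does hold, is the opposite guarantee: whenever fewer than $b_{\max}$ jobs are active and jobs are waiting, one more admission is always feasible (each active job's projected memory never exceeds $s+u$), so concurrency never drops below $b_{\max}$ while a job waits; this repairs your numerator bound, which is then essentially tight.

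The fatal gap is the denominator. The memory--time area bound charges a length-$o$ job only $s\,o+o(o+1)/2$, whereas under \HS's greedy max-cardinality admission a length-$o$ job effectively ties up $s+o$ memory for $o$ time (staggered admissions are blocked because the batch started at the beginning of each phase already saturates the projected peak). Your area bound therefore undercounts $\text{TEL}(\mathbf{o};\HS)$ by a factor approaching $2$ when $s$ is small, and the supremum of the ratio of your two bounds then exceeds the claimed $\frac{\alpha^{-1}(1+\alpha^{-1})}{2}$ over a wide range of $\alpha$: for example, with $s=0$ and all outputs equal, your quotient tends to $2\alpha^{-1}$, which at $\alpha=1$ gives $2$ against the theorem's bound of $1$, and at $\alpha=1/2$ gives at least $4$ against the theorem's $3$. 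So the proposal as written cannot deliver the stated constant; you would need to replace the area bound by the batch-structured accounting of \HS{} (the content of Part~2 of Theorem~\ref{thm:fully_random}, where a length-$o$ job contributes $o(s+o)/M$ of delay to later jobs), or follow the paper's route through $\AR(\alpha M)$, Proposition~\ref{prop:memo_latency}, and Lemma~\ref{lem:max<random}. Your final step also only conjectures that the worst case concentrates on $\{\ell,u\}$; the paper proves this perturbation claim, though with your explicit bounds this part is a minor, fixable omission.
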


While the example above provides intuition for where the inefficiency of \( \mathcal{A}_{\max} \) arises, formally analyzing its competitive ratio is extremely challenging. The difficulty stems from the complex dynamics of the model, particularly the fact that each job’s memory usage increases linearly over time during processing. Therefore, it is hard to find the closed-form connection between the total end-to-end latency and the memory constraint. To address this challenge, we introduces a combinatorial structure that captures the underlying memory evolution in a concise and analyzable way. This \emph{memory-preserving} structure serves as a key proof technique and offers a general framework for analyzing scheduling problems under similar memory constraints. We use a separate section to make the full proof, and one can find the proof in Appendix \ref{sec:memory}.

While Theorem \ref{thm:Amax} provides an upper bound of competitive ratio of  \( \mathcal{A}_{\max} \), this ratio is not tight because it treats all output lengths as equal to \(u\), which causes a mismatch between the effect of output lengths on latency and the effect of the input order. As a result, our analysis cannot fully capture how the actual input sequence influences the scheduling performance, leading to a loose upper bound. Then, we also provide a lower bound of competitive ratio of \( \mathcal{A}_{\max} \).

\begin{theorem} \label{thm:Amaxlower}
    The competitive ratio of \( \mathcal{A}_{\max} \) is lower bounded by 
    \[
    \text{CR}(\mathcal{A}_{\max}) \geq \frac{\alpha^{-1} (1+\alpha^{-1/2})}{2}.
    \]
\end{theorem}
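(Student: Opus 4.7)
The plan is to exhibit an adversarial output-length vector $\mathbf{o} \in [\ell, u]^n$ whose ratio $\mathbb{E}[\text{TEL}(\mathbf{o};\mathcal{A}_{\max})]/\text{TEL}(\mathbf{o};\HS)$ attains the claimed bound asymptotically as $n, M \to \infty$ with $s/M \to 0$. The decomposition $\frac{\alpha^{-1}(1+\alpha^{-1/2})}{2} = \frac{1}{2\alpha} + \frac{1}{2\alpha^{3/2}}$ suggests pairing two sources of inefficiency in $\mathcal{A}_{\max}$: (i) its batch size $B_{\max} \approx M/u$ is a factor of $\alpha^{-1}$ smaller than the $B_H \approx M/\ell$ that $\HS$ can use on short jobs, and (ii) when long jobs appear in its random batches, the effective cycle time degrades further through memory accumulation and the myopic upper-bound planning.

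The natural candidate instance is a two-length mixture: $n_l = \beta n$ jobs of length $u$ and $n_s = (1 - \beta) n$ jobs of length $\ell$, with $\beta$ to be optimized (and anticipated to be $\Theta(\sqrt{\alpha})$ so as to balance the two terms). For this instance, $\HS$ processes all short jobs first in batches of size $B_H \approx M/\ell$ with cycle time $\ell$, then all long jobs in batches of size $B_{\max} \approx M/u$ with cycle time $u$; summing per-batch waiting times yields
\[
\text{TEL}(\mathbf{o}; \HS) \;\approx\; \frac{n^{2}}{2M}\bigl[(1 - \beta^{2})\ell^{2} + \beta^{2} u^{2}\bigr].
\]
For $\mathcal{A}_{\max}$, each batch has size $B_{\max}$ with expected composition $\beta B_{\max}$ long and $(1-\beta) B_{\max}$ short. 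Short jobs vacate memory at time $\ell$, while long jobs persist across up to $m = u/\ell$ consecutive cycles; in steady state the peak memory from accumulated long jobs imposes a binding constraint of the form
\[
\sum_{k=1}^{m-1} \beta f\,(s + k\ell + \tau) + f(s + \tau) \;\le\; M, \qquad \forall\, \tau \in [0, u],
\]
which pins down a per-cycle throughput $f^{\ast}(\beta,\alpha)$ and thence a lower bound on $\mathbb{E}[\text{TEL}(\mathbf{o};\mathcal{A}_{\max})]$. Forming the ratio and optimizing over $\beta$ should recover the claimed lower bound.

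The main obstacle is the steady-state memory-throughput analysis of $\mathcal{A}_{\max}$ under mixed inputs. The binding memory constraint peaks at an interior time $\tau^{\ast} \in (0, u)$ whose location depends non-trivially on both $\beta$ and $\alpha$, and tracking the staircase of long-job contributions across the $u/\ell$ prior cycles requires careful piecewise bookkeeping. A secondary subtlety is controlling the variance introduced by the uniform random sampling of $I_t$, so that per-batch expectations translate into a clean bound on the total expected latency; this should be addressable via a concentration or coupling argument since $B_{\max}$ diverges with $M$. The memory-preserving combinatorial structure developed in Section~\ref{sec:memory} for the upper bound should also be useful here to compactly express the peak-memory accumulation. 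Finally, transient effects at the startup phase (before the long-job pipeline fills) and at the endgame phase (after one job type is exhausted) only contribute lower-order terms that should absorb cleanly in the asymptotic regime.
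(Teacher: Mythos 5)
Your hard instance is essentially the paper's: a two-point mixture of $\ell$- and $u$-jobs with long-job fraction $\Theta(\sqrt{\alpha})$, and your expression for $\text{TEL}(\mathbf o;\HS)$ matches the paper's. The gaps are in the analysis of $\mathcal{A}_{\max}$, which you leave as the "main obstacle" and whose model is mis-specified. The steady-state constraint you propose as binding, $\sum_{k=1}^{m-1}\beta f\,(s+k\ell+\tau)+f(s+\tau)\le M$, charges persisting long jobs only their \emph{actual} token counts; but $\mathcal{A}_{\max}$'s admissions are throttled by Equation~\eqref{eqn:Constraint} with every $o_i$ replaced by $u$, i.e.\ each active job is reserved to grow to $s+u$ before releasing memory. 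The constraint you wrote is satisfied by \emph{every} feasible schedule, so the throughput $f^\ast$ it pins down is (at best) that of the fully random benchmark $\AR(M)$, not of $\mathcal{A}_{\max}$; a latency lower bound derived from it loses exactly the $\alpha^{-1}$ over-reservation factor you need and caps the ratio near $\tfrac{1+\alpha^{-1}}{2}$. The paper avoids this entirely by taking $s=0$ and $M=\ell u$, so that $\mathcal{A}_{\max}$ collapses to $M/u=\ell$ parallel single-job slots serving jobs in uniformly random order; then $\mathbb{E}[\text{TEL}(\mathbf o;\mathcal{A}_{\max})]$ is a one-line half-makespan computation, with no steady-state or interior-peak bookkeeping. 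Since the competitive ratio is a supremum over instances, you are free to make that choice of $M$ as well.

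Second, the step you assert but do not carry out---"forming the ratio and optimizing over $\beta$ should recover the claimed lower bound"---does not go through as set up. With your $\text{TEL}(\mathbf o;\HS)\approx\tfrac{n^2}{2M}\bigl[(1-\beta^2)\ell^2+\beta^2u^2\bigr]$ and any accounting in which $\mathcal{A}_{\max}$ runs at concurrency about $M/u$ (so every job's latency is at most the makespan $\approx \tfrac{u}{M}\,n[(1-\beta)\ell+\beta u]$), the ratio is at most of order $\dfrac{(1-\beta)\alpha+\beta}{\alpha^{2}(1-\beta^{2})+\beta^{2}}$, whose supremum over $\beta$ is $\Theta(\alpha^{-1})$ (about $\tfrac{1+\sqrt2}{2}\,\alpha^{-1}$, attained at $\beta=\Theta(\alpha)$), while at your anticipated $\beta=\Theta(\sqrt\alpha)$ it is only $\Theta(\alpha^{-1/2})$. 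So this instance family, analyzed this way, cannot certify the stated $\tfrac{\alpha^{-1}(1+\alpha^{-1/2})}{2}=\Theta(\alpha^{-3/2})$; you must actually perform the closing optimization rather than assume it balances to the claimed form. (The paper's own last display evaluates the same ratio at $a/b\approx\sqrt{\alpha}$ and asserts equality with $\tfrac{\alpha^{-1}(1+\alpha^{-1/2})}{2}$; redoing that algebra is exactly the point to check, so do not expect the optimization step to rescue the exponent without either a different instance family or a different benchmark accounting.)
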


\begin{proof}{Proof of Theorem \ref{thm:Amaxlower}}
    Take \(s=0\), recall \( \frac{\ell}{u} = \alpha \). Let \(M = \ell u\), and the algorithm \(\mathcal{A}_{\max}\) is assigned with \(N_\ell = a u\) prompts with output length \(\ell\) and \(N_u = b \ell\) prompts with output length \(u\). Here, we take \(\frac{a}{b} \approx \alpha^{1/2}\) and \(a, b\) are large enough. Because \(\mathcal{A}_{\max}\) regards all prompts as being of size \(u\), it will only initiate a new request if there is memory \(u\) available. This is equivalent to considering it as a combination of \(\frac{M}{u} = \ell\) parallel workers, where any one of them can only process one prompt at a time and start a new one randomly after finishing a prompt.

    Then it is simple to calculate \(\mathbb{E}[\text{TEL}(\mathbf{o};\mathcal A_{\max})] = (au + b\ell) \frac{au\ell + b\ell u}{2 \ell}\), since all prompts have the expected output time \(\frac{au\ell + b\ell u}{2 \ell}\). Also, \(\text{TEL}(\mathbf{o};\HS) = u\ell (1+\cdots+a) + b\ell a\ell + u \ell (1+\cdots+b)\). 

    We get 
    \[\text{CR}(\mathcal{A}_{\max}) \geq \frac{u}{l} \frac{(au + b\ell)(a+b)}{ua^2 + \ell2ab + ub^2} = \frac{\alpha^{-1} (1+\alpha^{-1/2})}{2}.\]
    \Halmos
\end{proof}

\section{Robust Advanced Scheduling Algorithm} \label{sec:algrobust}

Previously, we introduced a naive scheduling algorithm, \( \mathcal{A}_{\max} \), and analyzed its competitive ratio, which is asymptotically between \( \frac{\alpha^{-1} (1+\alpha^{-1/2})}{2}\) and \( \frac{\alpha^{-1}(1 + \alpha^{-1})}{2} \) according to Theorems \ref{thm:Amax} and \ref{thm:Amaxlower}, where \( \alpha = \frac{\ell}{u} \) captures the prediction accuracy of the output lengths \( o_i \). This result shows that the performance of \( \mathcal{A}_{\max} \) is highly sensitive to the quality of the prediction model. In the ideal case where the prediction is perfect (i.e., \( \ell = u \), so \( \alpha = 1 \)), the competitive ratio equals to 1, indicating optimal performance. However, in practice, decision-makers often prioritize speed over accuracy in prediction. When the prediction interval is wide (i.e., \( \alpha \to 0 \)), the competitive ratio grows unbounded
with the rate between $\mathcal{O}(\alpha^{-2})$ and $\mathcal{O}(\alpha^{-1.5})$, demonstrating that \( \mathcal{A}_{\max} \) is highly non-robust. This observation raises a key question: can we design a scheduling algorithm that performs well not only under accurate predictions, but also maintains a better competitive ratio under poor prediction quality? In this section, we introduce an advanced and robust policy denoted \( \mathcal{A}_{\min} \).

Recall that in \( \mathcal{A}_{\max} \), each request is pessimistically assumed to have the maximum output length \( u \). When the true output length \( o_i \) is much smaller than \( u \), this conservative estimate leads to significant memory overprotection, reducing the number of simultaneously processed requests and increasing latency. To address this, we approach the problem from the opposite perspective. If the goal is to maximize the number of requests in each batch, it is natural to use the lower bound \( \ell \) as a proxy for each request's output length. Unlike the upper bound \( u \), the lower bound can be dynamically updated: initially, we know only that \( o_i \geq \ell \), we create a variable $\tilde o_i$ for each request $i$ as its output length lower bound and initialize $\tilde o_i = \ell$ at beginning. But once a request has generated \( \tilde{\ell}_i \) tokens, we know \( o_i \geq \tilde{\ell}_i \), and we can update the lower bound accordingly, namely $\tilde o_i =\tilde{\ell}_i$.
Based on this idea, our advanced algorithm \( \mathcal{A}_{\min} \) proceeds in two key steps:
\begin{enumerate}
\item \textit{Memory Overflow Resolution via Ordered Eviction.  }
Because the true output length \( o_i \) is not known in advance and may exceed \( \tilde{o}_i \), the algorithm may encounter a situation where continuing to process the current batch would exceed the memory limit. In this case, \( \mathcal{A}_{\min} \) removes jobs from the batch to bring the memory usage back within the constraint. Specifically, it orders the currently active jobs by increasing \( \tilde{o}_i \), breaks ties randomly, and removes jobs one by one in this order until the projected memory usage becomes feasible. This strategy ensures that the algorithm prefers to retain requests with larger accumulated lower bounds, which are more expensive to restart. Moreover, for each canceled request \( i \), we update the value of \( \tilde{o}_i \)  to reflect the number of output tokens already generated, which is a lower bound of the true output length \( o_i \).

\item \textit{Batch Formation Based on Sorted Lower Bounds.  }
At time \( t = 0 \), the algorithm sets \( \tilde{o}_i = \ell \) for all requests \( i \). To form a batch, it sorts the current set of unprocessed or restartable requests in ascending order of \( \tilde{o}_i \), breaking ties uniformly at random. It then greedily selects as many requests as possible from the sorted list—starting from the smallest \( \tilde{o}_i \)—until the memory constraint is reached. 

\end{enumerate}

Importantly, since the sampling process is greedy and deterministic, removing a request \( i \) does not change its true output length \( o_i \). Thus, even if a request \( i \) is removed and later restarted, the algorithm retains its current lower bound \( \tilde{o}_i \) at the time of removal. This updated value is used for future batch formation. The auxiliary lower bound $\tilde{o}_i$ is updated only when a job $i$ is cancelled, since a job continues processing uninterrupted once selected into a batch; thus, there is no need to update $\tilde{o}_i$ while the job remains active. Upon cancellation, however, the number of output tokens already generated provides a new certified lower bound on the true output length $o_i$, and this updated $\tilde{o}_i$ is used to inform future batch decisions. This strategy avoids unnecessary computation while preserving the algorithm’s adaptiveness and memory efficiency. 

\textit{Key Design Advantage: No Need for Upper Bound Prediction. }  An important feature of $\mathcal{A}_{\min}$ is that it operates solely using the predicted lower bounds $\ell$, without ever relying on the predicted upper bounds $u$. This makes the algorithm significantly more practical and robust: in many real-world settings, generating reliable lower bounds is much easier and faster than estimating sharp upper bounds.  By avoiding dependence on $u$, $\mathcal{A}_{\min}$ remains effective even when the prediction intervals are highly uncertain or asymmetric, making it more suitable for deployment in systems where inference-time estimates must be generated quickly or under weak supervision. The formal pseudocode for \( \mathcal{A}_{\min} \) is presented in Algorithm~\ref{algo:milb}.

\begin{algorithm}
\caption{$\mathcal{A}_{\min}$}   
\label{algo:milb}
\KwIn{Memory capacity $M$, prompt size $s$, output length lower bounds $o_i \geq \ell$ for all $i \in [n]$.}
\KwOut{Processing sequence $I = ( I_0, I_1, \cdots, I_T )$.}

Initialize $\tilde{o}_i \gets \ell$ for all $i \in [n]$.

\While{there exist unfinished requests}{
    Let $R_t$ be the set of waiting prompts at time $t$. Let $S_t$ be the set of tokens currently processing at time $t$.

    \If{projected memory usage of $S_t$ at time $t+1$ exceeds $M$}{
        Sort $S_t$ in ascending order of $\tilde{o}_i$; break ties uniformly at random.

        Remove jobs one by one from $S_t$ (following the sorted order) until projected memory usage at $t+1$ satisfies the memory constraint.

        For each removed request $i$, update $\tilde{o}_i \gets$ number of tokens already generated by request $i$.
    }

    Let $S_t'$ be the set of remaining active requests after any removals.

    Sort $R_t$ in ascending order of $\tilde{o}_i$; break ties uniformly at random.

    Select the largest subset $I_t \subset R_t$ (in order) such that adding $I_t$ to $S_t'$ satisfies the memory constraint in Equation~\eqref{eqn:Constraint}.

    Process the requests in $I_t \cup S_t'$.

    Update $R_{t+1} = R_t \setminus I_t$.
}
\end{algorithm}

Next, we start to analyze the theoretical performance of $\mathcal A_{\min}$. First, the following theorem states the computational complexity of $\mathcal A_{\min}$ is polynomial, and the proof can be found in Appendix \ref{append:algrobust}. 

\begin{theorem}
\label{thm:complexity}
    Consider the KV cache memory limit is $M$, Algorithm $\mathcal A_{\min}$ has  a computational complexity of $\mathcal{O}(M log M)$.
\end{theorem}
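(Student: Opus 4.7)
The plan is to prove the bound by analyzing the cost of a single iteration of the while loop in Algorithm~\ref{algo:milb} and showing that this per-iteration cost is $\mathcal{O}(M \log M)$. The starting point is a structural observation: because every active job occupies at least $s$ units of KV cache, the set $S_t$ of currently-processing jobs satisfies $|S_t| \le M/s$ at every time $t$, and the number of waiting jobs that could possibly be admitted into the current batch (i.e., those compatible with the residual memory) is also bounded by $M/s$. Thus the number of jobs that the algorithm ever touches inside one iteration is $\mathcal{O}(M/s) = \mathcal{O}(M)$, which is the key quantity that replaces $n$ in the complexity bound.

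With that bound in hand, I would walk through each operation inside the loop. Computing the projected memory usage of $S_t$ at time $t+1$ costs $\mathcal{O}(|S_t|) = \mathcal{O}(M)$. If overflow occurs, sorting $S_t$ in ascending order of $\tilde{o}_i$ costs $\mathcal{O}(|S_t| \log |S_t|) = \mathcal{O}(M \log M)$; the subsequent eviction loop then performs at most $|S_t|$ constant-time removals and lower-bound updates $\tilde{o}_i \gets (\text{tokens generated})$. For batch formation, I would maintain $R_t$ with an ordered data structure keyed on $\tilde{o}_i$ (e.g., a min-heap or balanced BST) so that the first $\mathcal{O}(M/s)$ candidates can be produced incrementally; checking each against the feasibility condition~\eqref{eqn:Constraint} is $\mathcal{O}(1)$ given the cumulative memory profile maintained from the previous step.

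The main obstacle is ensuring that the priority-queue operations on $R_t$ cost $\mathcal{O}(\log M)$ rather than $\mathcal{O}(\log n)$, since a priori $|R_t|$ can be as large as $n$. I would handle this by noting that, because at most $M/s$ jobs can be pulled from $R_t$ per iteration, it suffices to maintain a secondary bounded structure holding only the $M/s$ smallest-$\tilde{o}_i$ jobs of $R_t$, refreshed lazily when evicted jobs are reinserted. All operations on this structure cost $\mathcal{O}(\log(M/s)) = \mathcal{O}(\log M)$, and there are only $\mathcal{O}(M/s)$ of them per iteration, contributing $\mathcal{O}(M \log M)$ in total.

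Putting the pieces together, the per-iteration cost is dominated by the eviction sort and the $\mathcal{O}(M/s)$ priority-queue operations, both of which are $\mathcal{O}(M \log M)$; memory checks and feasibility tests contribute only $\mathcal{O}(M)$. Hence each decision step of $\mathcal{A}_{\min}$ runs in $\mathcal{O}(M \log M)$ time, which is the claim of Theorem~\ref{thm:complexity}. The main subtlety—and the place I would spend the most care—is the lazy maintenance of the bounded top-$(M/s)$ structure on $R_t$, to rigorously decouple the per-step complexity from $n$.
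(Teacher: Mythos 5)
Your proposal follows essentially the same route as the paper's proof: bound the number of jobs touched per decision step by $M/s=\mathcal{O}(M)$ (since each active job occupies at least $s$ units of cache), charge $\mathcal{O}(M)$ for the memory-projection and eviction passes, and let the sort on $\tilde{o}_i$ dominate at $\mathcal{O}(M\log M)$ per time step. Your additional care about the waiting pool $R_t$ (which can have size $n$, so a naive sort would cost $\mathcal{O}(n\log n)$) via a bounded top-$(M/s)$ priority structure is a reasonable refinement of a point the paper's proof simply asserts, and does not change the overall argument.
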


Next, Theorem \ref{thm:optimality} states that no algorithm can achieve a better performance than $\mathcal{A}_{\min}$ when $M \to \infty$, showing the asymptotically optimality. The proof can also be found in Appendix \ref{append:algrobust}.

\begin{theorem} \label{thm:optimality}
    For any feasible policy $\pi$ without the full information of $\mathbf{o}=(o_1,o_2,\ldots,o_n)$, when $M \to \infty$, the competitive ratio is lower bounded by
    \[
    \text{CR}(\pi) \geq \text{CR}(\mathcal{A}_{\min}).
    \]
\end{theorem}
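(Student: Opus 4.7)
The plan is to combine a symmetrization argument with a greedy-exchange lemma to show that $\mathcal{A}_{\min}$ is optimal among all feasible policies that lack full knowledge of $\mathbf{o}$. Since every job shares the same prediction interval $[\ell,u]$ and the same prompt size $s$, the $n$ jobs are exchangeable from the decision-maker's viewpoint at $t=0$. By a Yao-style symmetrization, for any feasible $\pi$ one can construct a symmetric counterpart $\tilde{\pi}$---whose decision at each $t$ depends only on the multiset of per-job statistics (prompt size, tokens generated so far, cancellation history), not on job identities---with worst-case competitive ratio no larger than $\text{CR}(\pi)$. Hence it suffices to prove $\text{CR}(\tilde{\pi}) \geq \text{CR}(\mathcal{A}_{\min})$ for all symmetric $\tilde{\pi}$.

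Next, I would pick a near-worst-case adversarial vector $\mathbf{o}^* \in [\ell,u]^n$ for $\mathcal{A}_{\min}$, identified via the Rayleigh-quotient characterization developed in Subsection \ref{subsec:log}; this instance consists of an appropriate mixture of output lengths across $[\ell,u]$, with $n$ scaled so the problem remains non-trivial as $M \to \infty$. The goal becomes $\mathbb{E}[\text{TEL}(\mathbf{o}^*;\tilde{\pi})] \geq \mathbb{E}[\text{TEL}(\mathbf{o}^*;\mathcal{A}_{\min})]$. The key observation is that the only job-distinguishing information ever revealed under a symmetric policy is the number of tokens each job has produced so far, which is exactly the certified lower bound $\tilde{o}_i$ that $\mathcal{A}_{\min}$ maintains. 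Thus $\tilde{o}_i$ is a sufficient statistic for the residual output-length distribution, and the sufficient-statistic analogue of shortest-remaining-processing-time---namely, selecting jobs with smallest $\tilde{o}_i$ into the batch and evicting those with smallest $\tilde{o}_i$ first---is the optimal rule for minimizing expected total completion time under exchangeability, mirroring the reasoning that establishes \HS{} as the hindsight optimum.

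The bulk of the work is the exchange argument itself: given any symmetric $\tilde{\pi}$ that deviates from $\mathcal{A}_{\min}$'s sorting rule at some decision epoch, I would swap two adjacent inversions in its selection or eviction order to match $\mathcal{A}_{\min}$ and argue the resulting batch-size trajectory stochastically dominates the original; summing per-step improvements gives the desired inequality. The main obstacle is that, unlike classical SRPT, a single swap under memory constraints and cancellations may cascade into a long chain of future cancellations that resists direct coupling. I plan to handle this by a pairwise coupling that uses the $M \to \infty$ limit to neutralize discretization and boundary effects (e.g., fractional batch slots and the $\mathcal{O}(1/M)$ overhead already seen in Theorem \ref{thm:Amax}), so that the two trajectories differ to first order only at the swapped jobs, with downstream discrepancies vanishing in the limiting competitive ratio.
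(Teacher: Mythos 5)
Your proposal follows essentially the same route as the paper's proof of Theorem \ref{thm:optimality}: a symmetry-plus-exchange argument in which the certified lower bound $\tilde o_i$ is the only job-distinguishing statistic, so admitting and evicting in order of smallest $\tilde o$ dominates any deviating policy, with the $M \to \infty$ regime absorbing boundary/cascade effects. The paper implements this as a first-point-of-deviation swap (also covering the case where one policy acts while the other idles) and concludes pointwise expected-latency dominance before passing to the ratio, so your plan matches it in substance.
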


Although Theorem \ref{thm:optimality} proves the optimality of $\mathcal{A}_{\min}$, it is still necessary to analyze its competitive ratio to quantify its improvement over $\mathcal{A}_{\max}$. However, like $\mathcal{A}_{\max}$, deriving an upper bound on this ratio is highly challenging. The next subsection presents our proof approach and theoretical results.

\subsection{Logarithm Competitive Ratio Bound of $\mathcal{A}_{\min}$} \label{subsec:log}

In this subsection, our main purpose is to introduce our novel method to prove the following theorem:

\begin{theorem} \label{thm:aminmain}
    While $M \to \infty$, the asymptotical competitive ratio of \( \mathcal{A}_{\min} \) is $\mathcal{O}\left(\log (\alpha^{-1}) \right)$.
\end{theorem}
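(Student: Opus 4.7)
The plan is to analyze $\mathcal{A}_{\min}$ by separating the processing-time contribution to $\text{TEL}$, which equals $\sum_i o_i$ for any feasible policy (including \HS{}), from the waiting-time contribution $\sum_t t\,|I_t|$; since the former cancels in the competitive ratio, I would focus on bounding the waiting-time ratio. The key structural intuition is that the lower-bound estimates $\tilde o_i$ are monotonically refined through cancellations, and the number of refinement rounds a single job can undergo admits a geometric control that ultimately contributes a $\log(\alpha^{-1})$ factor rather than the $\alpha^{-1}$ factor seen for $\mathcal{A}_{\max}$. This is the qualitative reason the robust variant beats the conservative one by an exponential margin.

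Next I would partition requests by true output length into geometric bins $B_k = \{i : o_i \in [2^k \ell, 2^{k+1}\ell)\}$ for $k = 0,\ldots,K$ with $K = \lceil \log_2(\alpha^{-1}) \rceil$, and let $\mathbf{n} = (n_0,\ldots,n_K)$ with $n_k = |B_k|$. Within each epoch of execution—a maximal interval between two successive cancellation events—$\mathcal{A}_{\min}$ behaves as a shortest-first batcher with respect to the current $\tilde o_i$ values, so the memory-preserving combinatorial technique developed for $\mathcal{A}_{\max}$ in Appendix \ref{sec:memory} applies epoch-by-epoch. Summing these epoch-level contributions in the asymptotic regime $M\to\infty$, I expect $\text{TEL}(\mathbf{o};\mathcal{A}_{\min})$ and $\text{TEL}(\mathbf{o};\HS)$ to take the closed forms $\mathbf{n}^{\top} A \mathbf{n}$ and $\mathbf{n}^{\top} B \mathbf{n}$ respectively, for non-negative matrices $A,B$ whose entry $(k,j)$ encodes how jobs in bin $k$ delay jobs in bin $j$ under the corresponding policy. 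The competitive ratio then reduces to the generalized Rayleigh quotient
\begin{equation*}
\text{CR}(\mathcal{A}_{\min}) \;=\; \sup_{\mathbf{n} \geq 0} \; \frac{\mathbf{n}^{\top} A \mathbf{n}}{\mathbf{n}^{\top} B \mathbf{n}}.
\end{equation*}

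The main obstacle is bounding this Rayleigh quotient by $\mathcal{O}(\log(\alpha^{-1}))$. Diagonal entries ($k=j$) are routine: within a single bin $\mathcal{A}_{\min}$'s belief eventually aligns with the truth, so $A_{kk}/B_{kk} = \mathcal{O}(1)$. The off-diagonal entries with $k > j$ (long jobs delaying short jobs) carry the cost of the false-start phenomenon, and a naive bound would inflate each delay by $\alpha^{-1}$ and yield the $\mathcal{A}_{\max}$-style $\mathcal{O}(\alpha^{-1})$ rate rather than $\mathcal{O}(\log(\alpha^{-1}))$. To avoid this, I would exploit the fact that the surviving active batch shrinks by only a bounded factor across each cancellation level, so that the per-job delay contribution at level $m$ is $\mathcal{O}(1)$ and the telescoping sum over $m = 0,\ldots,K$ yields the desired $\log(\alpha^{-1})$ factor. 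Making this telescoping argument rigorous while preserving the bilinear-form structure, so that the resulting entrywise inequality $A \preceq \mathcal{O}(\log(\alpha^{-1}))\cdot B$ can be read off as a Rayleigh-quotient bound, is the most delicate step, and likely corresponds to the ``novel estimation technique'' promised in Subsection \ref{subsec:log}.
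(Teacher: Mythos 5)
There is a genuine gap, and it sits exactly where you flag the ``most delicate step.'' Your plan is to finish by an entrywise comparison $A \preceq \mathcal{O}(\log(\alpha^{-1}))\cdot B$ and read off the Rayleigh-quotient bound (which, for nonnegative vectors, would indeed suffice). But no such entrywise bound holds, whether you index by exact lengths or by your geometric bins. In the paper's closed form (take $s=0$, $\ell=1$, so $\alpha^{-1}=u$), the cross coefficient between the shortest and longest lengths is $a_{1u}\approx u/2$ while $b_{1u}=1$: a single long job's false starts delay a short job by an amount of order $u$ under $\mathcal{A}_{\min}$, whereas under \HS{} the short job imposes only an $\mathcal{O}(1)$ cross term and the long job imposes none on the short one. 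Aggregating lengths into bins $B_k=[2^k\ell,2^{k+1}\ell)$ does not remove this: the bin-level cross coefficients are sums of the same per-length coefficients, so the extreme-bin ratio is still $\Theta(\alpha^{-1})$, not $\mathcal{O}(\log\alpha^{-1})$. The logarithmic bound is true only because such large cross terms are absorbed by the \emph{diagonal} terms of the denominator (e.g.\ $u\,x_1x_u\le\tfrac12(x_1^2+u^2x_u^2)$), and doing this absorption pair-by-pair only yields an $\mathcal{O}(1)$ loss per pair summed over $u$ pairs, i.e.\ no better than linear. The paper's actual ``estimation technique'' is global and spectral: it proves both $\mathbf{A}_u$ and $\mathbf{B}_u$ are positive definite (via Hadamard-product and Bochner/Schoenberg arguments), bounds $\sup_{\vec x>0} R(\vec x)\le\rho(\mathbf{B}_u^{-1}\mathbf{A}_u)\le\mathrm{tr}(\mathbf{B}_u^{-1}\mathbf{A}_u)$, and evaluates the trace exactly using the tridiagonal form of $\mathbf{B}_u^{-1}$, where the diagonal and subdiagonal contributions cancel to the harmonic sum $\sum_{i=1}^u 1/i=\mathcal{O}(\log u)$. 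Your telescoping-over-levels heuristic (``per-job delay contribution at level $m$ is $\mathcal{O}(1)$'') is precisely the claim that fails entrywise, so the proposal as written cannot close.

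A secondary, smaller gap: your Part 1 asserts the quadratic closed forms $\mathbf{n}^\top A\mathbf{n}$ and $\mathbf{n}^\top B\mathbf{n}$ by appealing ``epoch-by-epoch'' to the memory-preserving technique from the $\mathcal{A}_{\max}$ analysis, but that technique only compares order-preserving, optimally packed schedules under scaled memory; it does not produce closed forms. The paper instead partitions the execution into periods indexed by the current lower bound $\tilde o=i$ and uses the uniform tie-breaking symmetry to prove two probabilistic facts — the completion fraction $\mathbb{E}[c_{ij}\mid b_{ij}]=\frac{s+i}{s+j}b_{ij}$ and the distribution-preservation of the surviving pool under deletion — from which the quadratic form (with one coordinate per output length, not per bin) follows by direct computation. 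Without analogues of these two facts, the matrices $A,B$ in your reduction are not actually defined.
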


To prove Theorem \ref{thm:aminmain}, we divide the analysis into two parts. In Part 1, we derive a closed-form expression for the competitive ratio of \( \mathcal{A}_{\min} \). In Part 2, we rigorously bound this expression and show that it scales logarithmically.

\textbf{Part 1: Get a Closed-form expression for the competitive ratio.}

We begin with two key observations:
\begin{itemize}
    \item \textit{Observation 1: } The execution of $\mathcal{A}_{\min}$ can be partitioned into several periods: $\tau_{\ell}, \tau_{\ell+1}, \cdots, \tau_{u}$. Before the start of the period $\tau_i$, all requests $q$ with real output length $o_q < i$ have already been completed. In period $\tau_i$, $\mathcal{A}_{\min}$ attempts to initialize all requests with a lower bound on output length $\Tilde{o} = i$. For example, in the first period, all requests are loaded into memory and processed until they are either completed or deleted. If a request $q$ is deleted, its associated lower bound is updated to a new value $\Tilde{o}_q > i$. During $\tau_i$, we denote by $b_{ij}$, $c_{ij}$, and $d_{ij}$ the numbers of input, completed, and deleted requests with the true output length $o = j$, respectively. These quantities satisfy the following relationship.
    \begin{lemma}
    \label{lem:deletion}
    We have
    \[
    \mathbb{E} [c_{ij} | b_{ij}] = \frac{s+i}{s+j} b_{ij}, \quad \mathbb{E} [d_{ij} | b_{ij}] = \left(1 - \frac{s+i}{s+j}\right) b_{ij}.
    \]
    \end{lemma}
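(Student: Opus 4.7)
The plan is to analyze the survival of a tagged request through period $\tau_i$ by exploiting a symmetry in the cancellation rule. The key observation is that during period $\tau_i$ every active request carries the same lower-bound estimate $\tilde o = i$, since $\tilde o$ is only refreshed upon cancellation and surviving requests never have theirs updated. Consequently, whenever Algorithm \ref{algo:milb} must evict requests to restore the memory constraint, the tie-breaking rule selects victims uniformly at random across the entire active pool, independent of each request's unknown true output length. This is the fact that will let me compute each request's marginal survival probability in isolation.

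I would then introduce a local time index $k = 0, 1, 2, \ldots$ for period $\tau_i$, where $k$ is the number of tokens any still-active request has generated since loading. The initial batch size is $B_i \approx M/(s+i)$, and for $k \leq i$ no eviction occurs because the memory usage $B_i(s+k) \leq M$ remains feasible. In particular, requests with $j = i$ finish at local time $i$ before any eviction, which gives $c_{ii} = b_{ii}$ in agreement with the lemma's formula. For $k > i$, as $M \to \infty$ the constraint $A_k(s+k) \leq M$ is driven tight at each step, so the active count behaves as $A_k \approx M/(s+k)$. The conditional survival probability of any active request at the eviction that transitions local time $k-1$ to $k$ is therefore
\[
\frac{A_{k-1}^{\text{after eviction}}}{A_{k-1}} \;=\; \frac{M/(s+k)}{M/(s+k-1)} \;=\; \frac{s+k-1}{s+k}.
\]
A tagged request with true length $j > i$ must survive the eviction rounds at local times $i+1, i+2, \ldots, j$ to contribute to $c_{ij}$, so its overall survival probability is the telescoping product
\[
\prod_{k=i+1}^{j} \frac{s+k-1}{s+k} \;=\; \frac{s+i}{s+j}.
\]
By linearity of expectation applied across the $b_{ij}$ requests of true length $j$, this immediately yields $\mathbb{E}[c_{ij}\mid b_{ij}] = \frac{s+i}{s+j}\,b_{ij}$, and the complementary expression $\mathbb{E}[d_{ij}\mid b_{ij}] = (1 - \tfrac{s+i}{s+j})\,b_{ij}$ follows from $c_{ij} + d_{ij} = b_{ij}$.

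The main obstacle is justifying the fluid-limit step rigorously. At finite $M$, the active counts $A_k$ fluctuate around their idealized values $M/(s+k)$, and the tagged request's survival is a random product whose factors are correlated with the evolution of $A_k$. I expect to handle this by coupling the process to its deterministic fluid approximation and bounding the discrepancy via concentration inequalities, leveraging the asymptotic regime $M \to \infty$ used throughout the paper. The symmetry established in the first paragraph is essential here, because it guarantees that each request's marginal survival probability depends only on the aggregate trajectory of $A_k$ and not on its true output length, which is exactly what permits decoupling the tagged request from the rest when taking expectations.
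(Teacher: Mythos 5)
Your starting point matches the paper's (very terse) argument: because $\tilde o$ is refreshed only upon cancellation, every active request in period $\tau_i$ carries the same value $\tilde o=i$, so evictions are tie-broken uniformly at random and are independent of the unknown true lengths; the paper then simply asserts by this symmetry that a $\tfrac{s+i}{s+j}$ fraction of the length-$j$ admits complete. Your attempt to derive that fraction via the telescoping product $\prod_{k=i+1}^{j}\tfrac{s+k-1}{s+k}$ is in the same spirit, and it does recover the ratio in the idealized picture you set up.

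The gap is in that picture itself, not merely in the fluid-limit/concentration step you flag at the end. Your key identity $A_k\approx M/(s+k)$ treats the initial cohort as if it alone occupies the full memory at every age $k>i$ and as if each step's evictions are drawn from, and only reduce, that cohort. But $\mathcal{A}_{\min}$ admits new $\tilde o=i$ requests as soon as memory is freed: already at age $i$ the completions of the length-$i$ jobs release an $\Theta(x_i)$ fraction of $M$, which is immediately refilled by fresh age-$0$ admits (and the same happens after every later completion or eviction). From that point on the active pool has mixed ages, the cohort no longer satisfies $A_k(s+k)\approx M$, the number of evictions per step is dictated by the growth of the whole pool rather than of the cohort, and the uniform tie-breaking selects victims from the entire mixed-age pool — so the per-step survival factor $\tfrac{s+k-1}{s+k}$ for a tagged cohort member is no longer justified. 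Moreover, the lemma is a statement about all requests input during $\tau_i$: the mid-period admits, which are also counted in $b_{ij}$, never experience the synchronized trajectory your product is based on, so even a rigorous cohort analysis would not by itself give $\mathbb{E}[c_{ij}\mid b_{ij}]=\tfrac{s+i}{s+j}b_{ij}$. To close the argument you would need to analyze the stationary mixed-age dynamics within $\tau_i$ (admission rate, age distribution, eviction hazard) and show the completion fraction is still $\tfrac{s+i}{s+j}$ for every admitted request — which is precisely the content the paper's one-line symmetry claim glosses over, and which your proposal does not yet supply.
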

    The proof of Lemma \ref{lem:deletion} can be found in Appendix \ref{append:algrobust}.

\item \textit{Observation 2: } In period $\tau_i$, if a request $q$ is deleted and reassigned a new lower bound $\Tilde{o}_q = k$, its true output length must satisfy $o_q \geq \Tilde{o}$. Again, by input symmetry, the probability that its true output length is $o = j \ge k$ is proportional to $b_{ij}$. This is because $\mathcal{A}_{\min}$ breaks ties uniformly when selecting inputs. Formally, we have 
\begin{equation}\label{eq:update_distribution}
    \mathbb{P}_i (o = j \mid \Tilde{o} = k) = \frac{b_{ij}}{\sum_{t \ge k} b_{it}}. 
\end{equation}
\end{itemize}

For a fixed request set $\mathbf{o}_n$ of $n$ requests, where $x_i$ is the proportion of the number of requests with output length $i$ in the request set $\mathbf{o}_n$, the above two observations allow us to derive the expected state transitions across all periods. 

\begin{proposition}
\label{prop:state_transition}
Given the request set $\mathbf{o}_n$, we have
\begin{align}
\mathbb{E} [c_{ij}] = 
\begin{cases}
\displaystyle \frac{s+i}{s+j} x_j n , & \text{if } i = \ell, \\
\displaystyle \frac{1}{s+j} x_j n , & \text{otherwise.}
\end{cases}
\end{align}
\end{proposition}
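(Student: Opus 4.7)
The plan is to prove the claim by strong induction on the period index, combining Lemma~\ref{lem:deletion} with Equation~\eqref{eq:update_distribution}. Since Lemma~\ref{lem:deletion} and the tower property already yield $\mathbb{E}[c_{ij}] = \tfrac{s+i}{s+j}\,\mathbb{E}[b_{ij}]$, the entire task reduces to proving $\mathbb{E}[b_{\ell,j}] = x_j n$ and $\mathbb{E}[b_{i,j}] = \tfrac{x_j n}{s+i}$ for every $i > \ell$. The base case $i = \ell$ is immediate: at $t = 0$ every request has $\tilde{o}_q = \ell$, so the $x_j n$ output-$j$ requests enter $\tau_\ell$ deterministically, and plugging $b_{\ell,j} = x_j n$ into Lemma~\ref{lem:deletion} reproduces the first branch of the proposition.

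For the inductive step I would introduce $D_{k,m,j}$, the count of output-$j$ requests that are deleted during $\tau_k$ and receive the new lower bound $\tilde{o}_q = m$, together with $N_{k,m} := \sum_{j' \ge m} D_{k,m,j'}$. Two bookkeeping identities drive the argument: $b_{m,j} = \sum_{k=\ell}^{m-1} D_{k,m,j}$, since every request input in $\tau_m$ must have been reassigned $\tilde{o}_q = m$ in an earlier period, and $\sum_{m=k+1}^{j} D_{k,m,j} = d_{k,j}$. Equation~\eqref{eq:update_distribution}, together with the uniform tie-breaking of $\mathcal{A}_{\min}$, then gives the symmetry identity
\[
\mathbb{E}[D_{k,m,j} \mid b_{k,\cdot}] \;=\; \mathbb{E}[N_{k,m}\mid b_{k,\cdot}] \cdot \frac{b_{kj}}{\sum_{t\ge m}b_{kt}}.
\]

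Substituting into the second bookkeeping identity and invoking Lemma~\ref{lem:deletion} produces
\[
\sum_{m=k+1}^{j} \frac{\mathbb{E}[N_{k,m}\mid b_{k,\cdot}]}{\sum_{t\ge m}b_{kt}} \;=\; \frac{j-k}{s+j}.
\]
Taking first differences in $j$ and using the algebraic identity $\tfrac{j-k}{s+j} - \tfrac{j-1-k}{s+j-1} = \tfrac{s+k}{(s+j)(s+j-1)}$ yields the closed form $\mathbb{E}[D_{k,m,j}\mid b_{k,\cdot}] = \tfrac{(s+k)\,b_{kj}}{(s+m)(s+m-1)}$, hence $\mathbb{E}[D_{k,m,j}] = \tfrac{(s+k)\,\mathbb{E}[b_{k,j}]}{(s+m)(s+m-1)}$. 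Plugging this into the first bookkeeping identity delivers the recursion
\[
\mathbb{E}[b_{m,j}] \;=\; \frac{1}{(s+m)(s+m-1)} \sum_{k=\ell}^{m-1} (s+k)\,\mathbb{E}[b_{k,j}],
\]
and a one-step induction closes the argument: assuming $\mathbb{E}[b_{\ell,j}] = x_j n$ and $\mathbb{E}[b_{k,j}] = \tfrac{x_j n}{s+k}$ for $\ell < k < m$, the numerator telescopes to $(s+\ell)x_j n + (m-\ell-1)x_j n = (s+m-1) x_j n$, so $\mathbb{E}[b_{m,j}] = \tfrac{x_j n}{s+m}$. A final application of Lemma~\ref{lem:deletion} then delivers both cases of the proposition.

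The principal obstacle is the symmetry identity $\mathbb{E}[D_{k,m,j}\mid b_{k,\cdot}] = \mathbb{E}[N_{k,m}\mid b_{k,\cdot}]\cdot b_{kj}/\sum_{t\ge m}b_{kt}$. Equation~\eqref{eq:update_distribution} supplies only the marginal distribution for a single deleted request, and lifting it to a statement about the aggregate count $D_{k,m,j}$ requires an exchangeability argument rooted in Algorithm~\ref{algo:milb}: within a period every active request shares a common $\tilde{o}$, so each deletion step amounts to uniform random sampling from the active pool, which renders the identities of the deleted requests symmetric relative to the initial composition $b_{k,\cdot}$. Once this symmetry is formalized, the remaining telescoping algebra and the induction are routine.
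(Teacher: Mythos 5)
Your proposal is correct (at the same mean-field level of idealization the paper itself works at) and shares the paper's basic skeleton—induction over the periods $\tau_\ell,\tau_{\ell+1},\ldots$ driven by Lemma~\ref{lem:deletion} and Equation~\eqref{eq:update_distribution}—but it executes that induction in a genuinely more refined way. The paper's own proof only follows the mass explicitly from $\tau_\ell$ into $\tau_{\ell+1}$ (noting that the leftover length-$(\ell+1)$ jobs must finish there) and then closes the argument by appealing to a loosely stated ``distribution preserving property,'' never accounting explicitly for the fact that jobs deleted in period $\tau_k$ scatter across many later periods $\tau_m$. Your decomposition $D_{k,m,j}$, the differencing step giving $\mathbb{E}[D_{k,m,j}\mid b_{k,\cdot}]=\tfrac{(s+k)\,b_{kj}}{(s+m)(s+m-1)}$, and the resulting recursion $\mathbb{E}[b_{m,j}]=\tfrac{1}{(s+m)(s+m-1)}\sum_{k=\ell}^{m-1}(s+k)\,\mathbb{E}[b_{k,j}]$ (which telescopes to $\mathbb{E}[b_{m,j}]=x_jn/(s+m)$, hence both branches of Proposition~\ref{prop:state_transition} via Lemma~\ref{lem:deletion}) is precisely the multi-period bookkeeping the paper hand-waves; your algebra checks out, including the boundary case $m=k+1$ and the consistency $\sum_i\mathbb{E}[c_{ij}]=x_jn$. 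The obstacle you flag—lifting the single-request conditional law in Equation~\eqref{eq:update_distribution} to the aggregate-count identity for $D_{k,m,j}$ via exchangeability—is real, but it is exactly the symmetry assertion the paper already makes without proof in Observation~2, so your argument is no less rigorous than the original and arguably more transparent about where the symmetry is used; what the paper's terser route buys is brevity, while yours buys an explicit verification that the per-period input composition remains proportional to $x_j$ with the claimed $1/(s+m)$ scaling.
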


The proof of Proposition \ref{prop:state_transition} can be found in Appendix \ref{append:algrobust}. Proposition \ref{prop:state_transition} indicates that in the initial period $\tau_{\ell}$, $\mathcal{A}_{\min}$ successfully processes more requests with shorter lengths. Moreover, the proportion of completed requests decreases with $j$ according to the factor $\frac{s+\ell}{s+j}$. In the subsequent periods, the expected number of completed requests remains constant at $\frac{1}{s+j}$ times $x_j n$. Although this expected completion ratio does not change across periods, $\mathcal{A}_{\min}$ achieves it by continually updating the lower bound $\Tilde{o}$ in earlier periods. This demonstrates that $\mathcal{A}_{\min}$ can effectively defer the input of longer requests by actively learning and utilizing structural information from previous steps. We can now derive the competitive ratio of $\mathcal{A}_{\min}$.

\begin{lemma}
\label{thm:Amin}
    Let 
    \[
    \delta_{s,l}(k) =
    \begin{cases}
    s + l, & \text{if } k = l \\
    1, & \text{if } k \ne l
    \end{cases}
    \]
    We have
    \[
    \text{CR}(\mathcal{A}_{\min}) = \frac{\sum_{k=\ell}^{u} \delta_{s,l}(k) (\sum_{i=k}^{u} i x_i) (\sum_{i=k}^{u} \frac{\delta_{s,l}(k)}{s+i}x_i + 2 \sum_{i=k+1}^{u} \frac{i-k}{s+i} x_i)}{\sum_{k=\ell}^{u} k(s+k) x_k (x_k + 2 \sum_{i=k+1}^{u} x_i)} + \mathcal{O}(\frac{1}{M}).
    \]
\end{lemma}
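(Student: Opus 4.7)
The plan is to compute asymptotic closed-forms of $\mathbb{E}[\text{TEL}(\mathbf o;\mathcal A_{\min})]$ and $\text{TEL}(\mathbf o;\HS)$ in the regime where $n^2/M$ dominates, show that each equals $n^2/(2M)$ times the numerator and denominator of the claim respectively (plus shared lower-order terms), and then divide. The $\HS$ side is the easier calculation and I would handle it first to pin down the denominator; the $\mathcal A_{\min}$ side requires combining Proposition~\ref{prop:state_transition}, Lemma~\ref{lem:deletion}, and the within-period bookkeeping sketched below.

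For $\HS$, partition the schedule into periods $\sigma_\ell,\ldots,\sigma_u$, where $\sigma_j$ processes exactly the $x_j n$ length-$j$ requests in batches of capacity $M/(s+j)$ each lasting $j$ time steps. This gives period duration $D_j^{\HS} = j(s+j) x_j n/M$ and start time $T_j^{\HS} = \sum_{k<j} D_k^{\HS}$. Summing (batch size)$\cdot$(batch latency) over all batches in all periods and collecting terms yields $\text{TEL}(\mathbf o;\HS) = (n^2/(2M))\sum_k k(s+k) x_k(x_k + 2\sum_{i>k}x_i) + n\sum_k k x_k + \mathcal O(n)$, matching the denominator up to the universal $\sum_i o_i$ contribution that cancels in the ratio.

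For $\mathcal A_{\min}$, a length-$j$ request that ultimately completes in period $\tau_k$ contributes latency $T_k + \Delta + j$, where $\Delta$ is its within-period batch offset. Proposition~\ref{prop:state_transition} gives the expected count $\delta_{s,\ell}(k) x_j n/(s+j)$, and inverting Lemma~\ref{lem:deletion} gives $\mathbb E[B_k] = \delta_{s,\ell}(k) n (\sum_{j\ge k} x_j)/(s+k)$, whence $\mathbb E[D_k] = k\delta_{s,\ell}(k) n (\sum_{j\ge k} x_j)/M$ and $T_k = \sum_{k'<k} D_{k'}$. The offset $\Delta$ averages to $D_k/2 + \mathcal O(1)$ over batches by the uniform tie-breaking rule (symmetry from Observation 2). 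Substituting into $\mathbb E[\text{TEL}(\mathcal A_{\min})] = \sum_{k,j}\mathbb E[c_{kj}](T_k + \Delta + j)$, the $j$-term telescopes via the identity $\delta_{s,\ell}(\ell) + (j-\ell) = s+j$ to the universal $n\sum_j j x_j$ component; the $T_k$-term provides the contribution from preceding periods after swapping summation order; and the $\Delta$-term provides the within-period contribution. Together these produce exactly $(n^2/(2M))$ times the numerator of the claim.

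The main obstacle is the cross-term $2(i-k)/(s+i)$ inside the inner parenthesis of the numerator. It arises because a length-$i$ request with $i > k$ that survives into period $\tau_k$ occupies memory not only for the $k$ steps of the batch but also extends the batch-start offset for subsequent batches in the same period, and further propagates through its deletion--reinsertion cycle into later periods. Aggregating this propagation and verifying that it combines with the within-period offset to yield the weight $(i-k)/(s+i)$ with the factor of $2$ is the delicate combinatorial step. Checking the same structure consistently across the initial period $k = \ell$ (prefactor $s+\ell$ because all $n$ requests load at once) and subsequent periods $k > \ell$ (prefactor $1$ because only previously-deleted requests appear) produces the unified $\delta_{s,\ell}(k)$ weights. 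Integer-rounding in batch sizes and concentration errors in completion counts contribute additive $\mathcal O(n^2/M^2)$ corrections to each TEL, which divide out to the stated $\mathcal O(1/M)$ residual.
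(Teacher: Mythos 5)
Your overall architecture (split the run into periods $\tau_\ell,\dots,\tau_u$, get the HS denominator from per-length batching, get the $\mathcal{A}_{\min}$ numerator by summing $\mathbb{E}[c_{kj}](T_k+\Delta+j)$ using Proposition~\ref{prop:state_transition}, and divide) is the same as the paper's, and your HS computation and the telescoping $\sum_{k=\ell}^{j}\delta_{s,\ell}(k)=s+j$ for the own-length term are fine. The gap is in the one quantity that drives the whole numerator: the expected period duration. You derive it from the \emph{input} count, $\mathbb{E}[D_k]=\tfrac{k(s+k)}{M}\mathbb{E}[B_k]=\tfrac{k\,\delta_{s,\ell}(k)}{M}\,n\sum_{j\ge k}x_j$, i.e.\ you charge every request entering $\tau_k$ the nominal workload $k(s+k)$. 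The paper instead charges the \emph{completed} requests their true workload, $\mathbb{E}[\tau_k]=\sum_{i\ge k}\tfrac{i(s+i)}{M}\mathbb{E}[c_{ki}]=\tfrac{\delta_{s,\ell}(k)}{M}\,n\sum_{i\ge k} i\,x_i$, and it is precisely this factor $\sum_{i\ge k} i x_i$ that appears in the claimed formula. Your $D_k$ replaces it by $k\sum_{i\ge k}x_i$, which is strictly smaller whenever mass sits above $k$, so the expression you assemble from $T_k=\sum_{k'<k}D_{k'}$ and $\Delta\approx D_k/2$ does not equal the stated numerator; the claim at the end of your third paragraph does not follow.

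Relatedly, the "delicate combinatorial step" you flag for the weight $2\tfrac{i-k}{s+i}$ is misdirected: in the paper this term is not a propagation-of-offsets or duration-extension effect, but simply the expected number of requests still unfinished after period $\tau_k$, each of which waits the full duration $\tau_k$ (equivalently, after swapping the order of summation in your own decomposition, $\sum_{k'>k}\sum_j\mathbb{E}[c_{k'j}]=n\sum_{i>k}\tfrac{i-k}{s+i}x_i$, a one-line consequence of Proposition~\ref{prop:state_transition}, since the surviving fraction of length-$i$ jobs after $\tau_k$ is $1-\tfrac{s+\ell}{s+i}-\tfrac{k-\ell}{s+i}=\tfrac{i-k}{s+i}$). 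If you replace your $D_k$ by the completion-based duration $\tfrac{\delta_{s,\ell}(k)}{M}n\sum_{i\ge k}i\,x_i$ (obtained, as in the paper, via $\mathbb{E}[\tau_k]=\sum_i \tfrac{i(s+i)}{M}\mathbb{E}[c_{ki}]$) and use the remaining-count identity above, your per-request decomposition reduces exactly to the paper's per-period accounting $\tfrac12\tau_k\cdot(\text{completed in }\tau_k)+\tau_k\cdot(\text{remaining after }\tau_k)$ and the stated formula follows; as written, however, the proof does not close.
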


\proof{Proof of Lemma \ref{thm:Amin}}
We compute the two components of
\[
\frac{\mathbb{E}[\text{TEL}(\mathbf{o}_n;\mathcal{A}_{\min})]}{\text{TEL}(\mathbf{o}_n;\HS)}.
\]
For the hindsight algorithm, which processes requests starting from the shortest ones, the total processing time for requests of output length $k$ is $\frac{k(s+k) x_k n }{M}$. At this point, requests with length $k$ are symmetric, and the average latency increase for these requests is $\frac{k(s+k) x_k n}{2 M}$. Additionally, the remaining $\sum_{i=k+1}^{u} x_i n$ requests must wait, incurring a further latency of $\frac{k(s+k) x_k n }{M}$.

Summing over all $k$ from $\ell$ to $u$, we obtain
\[
\text{TEL}(\mathbf{o}_n;\HS) = \frac{n^2}{2M} \left( \sum_{k=\ell}^{u} k(s+k) x_k \left(x_k + 2 \sum_{i=k+1}^{u} x_i\right) \right) (1+\mathcal{O}(\tfrac{1}{M})).
\]

For the algorithm $\mathcal{A}_{\min}$, it successfully completes a portion of the requests in each stage, leading to an expected time cost of
\[
\mathbb{E}[\tau_k] = \sum_{i=k}^{u} \frac{i(s+i)}{M} \mathbb{E}[c_{ii}] = \frac{\delta_{s,l}(k)}{M} \sum_{i=k}^{u} i x_i n.
\]
Furthermore, the expected number of remaining requests is $\sum_{i=k+1}^{u} \frac{i-k}{s+i} x_i n$.

Summing over $k$ from $\ell$ to $u$, we get
\begin{align*}
    \mathbb{E}[\text{TEL}(\mathbf{o}_n;\mathcal{A}_{\min})] =& \frac{n^2}{2M} \left( \sum_{k=\ell}^{u} \delta_{s,l}(k) \left(\sum_{i=k}^{u} i x_i\right) \left(\sum_{i=k}^{u} \frac{\delta_{s,l}(k)}{s+i}x_i + 2 \sum_{i=k+1}^{u} \frac{i-k}{s+i} x_i\right) \right) (1+\mathcal{O}(\tfrac{1}{M})).
\end{align*}

Dividing by $\text{TEL}(\mathbf{o}_n;\HS)$ and taking the limit as $n \to \infty$, we conclude that
\[
\text{CR}(\mathcal{A}_{\min}) = \frac{\sum_{k=\ell}^{u} \delta_{s,l}(k) (\sum_{i=k}^{u} i x_i) (\sum_{i=k}^{u} \frac{\delta_{s,l}(k)}{s+i}x_i + 2 \sum_{i=k+1}^{u} \frac{i-k}{s+i} x_i)}{\sum_{k=\ell}^{u} k(s+k) x_k (x_k + 2 \sum_{i=k+1}^{u} x_i)} + \mathcal{O}(\frac{1}{M}).
\]
\Halmos
\endproof

To see the asymptotic performance of $\text{CR}(\mathcal{A}_{\min})$, W.L.O.G., we take $s=0$, $\ell=1$, then, $\alpha = \frac{\ell}{u}=\frac{1}{u}$. The next corollary writes the competitive ratio of $\mathcal{A}_{\min}$ in the matrix form: \begin{corollary}
\label{cor:A/B}
    Let $\vec{x} = (x_1, \cdots, x_u)^\top$, and define $\mathbf{A}_u = (a_{ij})_{u \times u}$ and $\mathbf{B}_u = (b_{ij})_{u \times u}$, where
    \[
    a_{ij} = \frac{\min(i,j)^2}{ij} \left( \frac{1}{2}(i+j)^2 - \min(i,j)^2 \right),
    \]
    \[
    b_{ij} = \min(i,j)^2.
    \]
    The competitive ratio of $\mathcal{A}_{\min}$ simplifies to the following Rayleigh quotient,
    \[
    \text{CR}(\mathcal{A}_{\min}) = \max_{\vec{x}}\frac{\vec{x}^\top \mathbf{A}_u \vec{x} }{\vec{x}^\top \mathbf{B}_u \vec{x}} + \mathcal{O}(\tfrac{1}{M}).
    \]
\end{corollary}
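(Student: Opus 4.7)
The plan is to start from the closed-form competitive ratio in Lemma~\ref{thm:Amin}, specialize to $s=0$ and $\ell=1$ so that $\delta_{s,\ell}(k)\equiv 1$ for every $k\in\{1,\ldots,u\}$, and then expand both the numerator $N(\vec x)$ and the denominator $D(\vec x)$ as symmetric bilinear forms in the proportion vector $\vec x=(x_1,\ldots,x_u)^\top$. After identifying the coefficient of $x_i x_j$ in each expression with the claimed entries $a_{ij}$ and $b_{ij}$, the ratio becomes $\vec x^\top\mathbf{A}_u\vec x / \vec x^\top\mathbf{B}_u\vec x$. Taking the supremum over adversarial distributions (a supremum over $\vec x$ in the probability simplex) and invoking scale-invariance of a Rayleigh quotient to drop the normalization then yields the stated identity up to the uniform $\mathcal{O}(1/M)$ remainder inherited from Lemma~\ref{thm:Amin}.

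The denominator is the easy half. Expanding $D(\vec x)=\sum_{k=1}^{u} k^2 x_k(x_k+2\sum_{i>k}x_i)$ gives a diagonal contribution $\sum_k k^2 x_k^2$ and an off-diagonal contribution $2\sum_{k<i}k^2 x_k x_i$. Writing $D(\vec x)=\sum_{i,j}b_{ij}x_ix_j$ for the unique symmetric matrix, one reads off $b_{ii}=i^2$ and, for $i\neq j$, $b_{ij}=\min(i,j)^2$, matching the definition of $\mathbf{B}_u$.

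The numerator requires more work. I would split $N(\vec x)=N_1+N_2$, where $N_1=\sum_{k}\sum_{i,j\geq k}\tfrac{i}{j}x_ix_j$ comes from the $\sum_{i\geq k}x_i/i$ factor and $N_2=2\sum_k\sum_{i\geq k,\,j>k}\tfrac{i(j-k)}{j}x_ix_j$ comes from the $2\sum_{i>k}\tfrac{i-k}{i}x_i$ factor. Swapping the order of summation in $N_1$ and using $|\{k:1\leq k\leq\min(i,j)\}|=\min(i,j)$ gives $N_1=\sum_{i,j}\tfrac{i\min(i,j)}{j}x_ix_j$. For $N_2$ the inner range is $1\leq k\leq\min(i,j-1)$, producing a case split: if $i<j$ the inner sum is $\sum_{k=1}^{i}(j-k)=ij-\tfrac{i(i+1)}{2}$, while if $i\geq j$ it is $\sum_{k=1}^{j-1}(j-k)=\tfrac{(j-1)j}{2}$. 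Combining these, the coefficient $c_{ij}$ of $x_ix_j$ in $N$ is asymmetric, so to extract the symmetric matrix $\mathbf{A}_u$ I would compute $\tfrac{1}{2}(c_{ij}+c_{ji})$. For $i=j$ this collapses directly to $i^2=a_{ii}$, and for $i<j$ one checks $c_{ij}+c_{ji}=2i^2-i^3/j+ij$, which coincides with $2a_{ij}=\tfrac{2i}{j}\bigl(\tfrac{1}{2}(i+j)^2-i^2\bigr)$ after expanding $(i+j)^2=i^2+2ij+j^2$.

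The main obstacle is precisely this symmetrization step for the numerator: the two-case treatment of the inner sum in $N_2$ produces noticeably different coefficients depending on whether $i<j$ or $i\geq j$, and the clean matrix entry $a_{ij}=\tfrac{\min(i,j)^2}{ij}\bigl(\tfrac{1}{2}(i+j)^2-\min(i,j)^2\bigr)$ only emerges after adding $c_{ij}$ and $c_{ji}$. Once this algebraic identity is verified, the remaining steps are routine: the $\mathcal{O}(1/M)$ error in Lemma~\ref{thm:Amin} is uniform in $\vec x$, so it survives the supremum unchanged, and the Rayleigh quotient form follows immediately from $\text{CR}(\mathcal{A}_{\min})=\sup_{\vec x}N(\vec x)/D(\vec x)$.
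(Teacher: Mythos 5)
Your proposal is correct and follows essentially the same route as the paper: specialize Lemma~\ref{thm:Amin} to $s=0$, $\ell=1$, expand numerator and denominator as quadratic forms in $\vec{x}$, and match the $x_ix_j$ coefficients with $a_{ij}$ and $b_{ij}$ (your identity $c_{ij}+c_{ji}=2i^2-i^3/j+ij=2a_{ij}$ checks out). The only difference is bookkeeping—you compute asymmetric coefficients with a case split and then symmetrize, while the paper symmetrizes inside the sum over $k$ before evaluating—so the two arguments are substantively identical.
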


The proof of Corollary \ref{cor:A/B} can be found in Appendix \ref{append:algrobust}. Our goal is to show that the Rayleigh quotient
\[
R(\vec{x}) := \frac{\vec{x}^\top \mathbf{A}_u \vec{x}}{\vec{x}^\top \mathbf{B}_u \vec{x}}
\]
is uniformly bounded by $\mathcal{O}(\log u)$ for all $\vec{x} > 0$.

\textbf{Part 2: Bound the Rayleigh quotient.}

To provide the upper bound of $R(\vec{x})$, we start by showing that the two matrices $\mathbf{A}_u$ and $\mathbf{B}_u$ are both positive definite. 

\begin{lemma}[Positive Definiteness of $\mathbf{A}_u$ and $\mathbf{B}_u$]
\label{lem:positive-definite}
For all integers $u \geq 1$, both $\mathbf{A}_u$ and $\mathbf{B}_u$ are positive definite.
\end{lemma}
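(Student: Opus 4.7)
My plan is to argue the two matrices separately: $\mathbf{B}_u$ via a direct Gram-matrix identity, and $\mathbf{A}_u$ by decomposing it into understood positive-definite pieces and reducing the claim to a single spectral comparison.

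For $\mathbf{B}_u$, I would use the telescoping identity $\min(i,j)^2 = \sum_{m=1}^{\min(i,j)}(2m-1)$ to write
\[
\min(i,j)^2 \;=\; \sum_{m=1}^u (2m-1)\,\mathbbm{1}[m\le i]\,\mathbbm{1}[m\le j],
\]
which exhibits $\mathbf{B}_u = \sum_{m=1}^u (2m-1)\,\mathbf{v}_m\mathbf{v}_m^\top$ with $\mathbf{v}_m = (\mathbbm{1}[m\le i])_{i=1}^u$. The vectors $\mathbf{v}_m$ are linearly independent (their nested supports give a triangular system) and the weights $2m-1$ are strictly positive, so $\mathbf{B}_u$ is positive definite. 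Equivalently, since $\min(i,j)$ is itself the Gram matrix of indicator features and hence positive definite, the Schur product theorem applied to $\mathbf{B}_u = \min\circ\min$ gives positive definiteness.

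For $\mathbf{A}_u$, I would use $\min(i,j)\cdot\max(i,j) = ij$ and $(i+j)^2 = \max^2 + 2\min\max + \min^2$ to rewrite
\[
a_{ij} \;=\; \min(i,j)^2 \;+\; \tfrac{1}{2}\,ij \;-\; \tfrac{\min(i,j)^3}{2\max(i,j)},
\]
so that $\mathbf{A}_u = \mathbf{B}_u + \tfrac{1}{2}\mathbf{E}_u - \tfrac{1}{2}\mathbf{F}_u$, where $\mathbf{E}_u = \mathbf{v}\mathbf{v}^\top$ with $\mathbf{v} = (1,2,\ldots,u)^\top$ is rank-one PSD and $(\mathbf{F}_u)_{ij} = \min(i,j)^3/\max(i,j)$. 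Writing $\mathbf{F}_u = \mathbf{B}_u\circ H$ with $H_{ij} = \min(i,j)/\max(i,j) = \min(i,j)\cdot\min(1/i,1/j)$ shows $\mathbf{F}_u$ is positive definite by iterated Schur products of Gram kernels. The claim then reduces to establishing the spectral comparison
\[
2\mathbf{B}_u \;\succ\; \mathbf{F}_u,
\]
since this yields $\mathbf{B}_u - \tfrac{1}{2}\mathbf{F}_u \succ 0$ and hence $\mathbf{A}_u = (\mathbf{B}_u - \tfrac{1}{2}\mathbf{F}_u) + \tfrac{1}{2}\mathbf{E}_u \succ 0$.

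I expect verifying $2\mathbf{B}_u \succ \mathbf{F}_u$ to be the main obstacle. The entry-wise bound $2 - \min/\max \geq 1$ holds, but the pointwise factor $(2 - \min/\max)$ is not itself a PSD kernel (its $2\times 2$ principal minor is strictly negative), so the Schur product theorem does not apply and genuine cancellation between the two kernels must be exploited. My primary approach would be induction on $u$ via the Schur complement, exploiting the explicit factorization $\mathbf{B}_u = VDV^\top$ with $V_{ij} = \mathbbm{1}[j\le i]$ and $D = \mathrm{diag}(1,3,\ldots,2u-1)$, which also yields a closed-form tridiagonal expression for $\mathbf{B}_u^{-1}$ that makes the incremental pivot at each inductive step tractable. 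If that accounting proves unwieldy, a backup route is to bypass the decomposition altogether and apply Sylvester's criterion directly to $\mathbf{A}_u$, verifying positivity of every leading principal minor by an induction that exploits the rational-function structure of $a_{ij}$ derived above.
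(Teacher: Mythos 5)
Your handling of $\mathbf{B}_u$ is correct (both the Gram decomposition $\mathbf{B}_u=\sum_m(2m-1)\mathbf{v}_m\mathbf{v}_m^\top$ and the Hadamard-square/Schur-product route; the paper uses the latter), and your algebraic identity $a_{ij}=\min(i,j)^2+\tfrac12\,ij-\tfrac{\min(i,j)^3}{2\max(i,j)}$ checks out, so the reduction $\mathbf{A}_u=\bigl(\mathbf{B}_u-\tfrac12\mathbf{F}_u\bigr)+\tfrac12\mathbf{E}_u$ is valid.

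The genuine gap is that the step you defer, $2\mathbf{B}_u\succ\mathbf{F}_u$, is not a technical loose end but the entire content of the lemma for $\mathbf{A}_u$, and your proposal contains no actual argument for it. Writing $r_{ij}=\min(i,j)/\max(i,j)$, one has $\min(i,j)^2-\tfrac12\min(i,j)^3/\max(i,j)=\tfrac{ij}{2}\,(2r_{ij}-r_{ij}^2)$, so $\mathbf{B}_u-\tfrac12\mathbf{F}_u=DGD$ with $D=\mathrm{diag}(1,2,\ldots,u)/\sqrt{2}$ and $G_{ij}=2r_{ij}-r_{ij}^2$; your target inequality is thus congruent to positive definiteness of the kernel $2r-r^2$, which is exactly the crux you have not addressed, and the entrywise/Schur-product tools you actually deploy are unavailable here (as you note, $2-r$ is not a PSD kernel). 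The paper resolves precisely this point: since $r_{ij}=e^{-|\log i-\log j|}$, we have $G_{ij}=g(\log i-\log j)$ with $g(d)=2e^{-|d|}-e^{-2|d|}$, whose Fourier transform $12/\bigl((1+\xi^2)(4+\xi^2)\bigr)$ is strictly positive, so Bochner's theorem and Schoenberg's theorem (with the distinct points $\log i$) give $G\succ0$, and the diagonal congruence by $\mathrm{diag}(1,\ldots,u)$ transfers this to $\mathbf{A}_u$. Your two proposed substitutes—induction via Schur complements using the tridiagonal $\mathbf{B}_u^{-1}$, or Sylvester's criterion on the leading minors of $\mathbf{A}_u$—are only plans, neither is executed, and there is no evidence the bookkeeping closes; as written, the proposal proves $\mathbf{B}_u\succ0$ but not $\mathbf{A}_u\succ0$.
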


The proof of Lemma \ref{lem:positive-definite} can be found in Appendix \ref{append:algrobust}.

Consider the matrix
\[
\mathbf{C}_u := \mathbf{B}_u^{-1} \mathbf{A}_u.
\]
Since both \( \mathbf{A}_u \) and \( \mathbf{B}_u \) are symmetric and positive definite (by Lemma~\ref{lem:positive-definite}), the matrix \( \mathbf{C}_u \) is similar to a symmetric positive definite matrix, and thus has real, strictly positive eigenvalues. In particular, the maximum of the Rayleigh quotient satisfies
\[
\max_{\vec{x} > 0} R(\vec{x}) \le \max_{\vec{x} \in \mathbb{R}^u} R(\vec{x}) = \rho(\mathbf{C}_u),
\]
where \( \rho(\mathbf{C}_u) \) denotes the spectral radius, i.e., the largest eigenvalue of \( \mathbf{C}_u \).

Since all eigenvalues of \( \mathbf{C}_u \) are positive, its spectral radius is bounded above by the sum of its eigenvalues. Therefore, it suffices to estimate the trace:
\[
\rho(\mathbf{C}_u) \le \mathrm{tr}(\mathbf{C}_u) = \mathrm{tr}(\mathbf{B}_u^{-1} \mathbf{A}_u).
\]

We now compute the trace of \( \mathbf{C}_u = \mathbf{B}_u^{-1} \mathbf{A}_u \) explicitly using the known expressions for \( \mathbf{A}_u \) and the inverse of \( \mathbf{B}_u \). Recall that \( \mathbf{B}_u^{-1} \) is tridiagonal, with entries
\[
(\mathbf{B}_u^{-1})_{ij} =
\begin{cases}
\displaystyle \frac{4i}{4i^2 - 1}, & i = j < u, \\
\displaystyle \frac{1}{2u - 1}, & i = j = u, \\
\displaystyle -\frac{1}{2\min(i,j) + 1}, & |i - j| = 1, \\
0, & \text{otherwise}.
\end{cases}
\]

Using the symmetry of \( \mathbf{A}_u \), the trace can be written as the sum of products over all entries of \( \mathbf{B}_u^{-1} \) and \( \mathbf{A}_u \) along the main and immediate subdiagonals:
\[
\mathrm{tr}(\mathbf{B}_u^{-1} \mathbf{A}_u)
= \sum_{i=1}^u (\mathbf{B}_u^{-1})_{ii} a_{ii} + 2 \sum_{i=1}^{u-1} (\mathbf{B}_u^{-1})_{i,i+1} a_{i,i+1}.
\]
We define the two contributions as:
\begin{align*}
\text{(diagonal terms)}, \quad T_1(u) &:= \sum_{i=1}^u (\mathbf{B}_u^{-1})_{ii} a_{ii} = \sum_{i=1}^{u-1}\frac{4i}{4i^2-1}i^2 + \frac{1}{2u-1} u^2, \\
 \text{(subdiagonal terms)},  \quad T_2(u) &:= \sum_{i=1}^{u-1} (\mathbf{B}_u^{-1})_{i,i+1} a_{i,i+1} = - \sum_{i=1}^{u-1} \frac{1}{2i+1} \frac{i}{i+1} (i^2 +2i + \frac{1}{2}),
\end{align*}
so that
\[
\mathrm{tr}(\mathbf{B}_u^{-1} \mathbf{A}_u) = T_1(u) + 2 T_2(u).
\]

Explicit computation yields the following closed-form expressions:

- For the diagonal term:
\[
T_1(u) = \frac{u^2 + 1}{2} + \frac{1}{2} \sum_{i=1}^{u-1} \frac{1}{2i+1}.
\]

- For the subdiagonal term:
\[
2 T_2(u) = - \frac{u^2-1}{2} + \sum_{i=1}^{u-1} \frac{1}{i+1} - \frac{1}{2} \sum_{i=1}^{u-1} \frac{1}{2i+1}.
\]

Combining all terms:
\[
\mathrm{tr}(\mathbf{B}_u^{-1} \mathbf{A}_u)= T_1(u) + 2 T_2(u) = \sum_{i=1}^{u} \frac{1}{i} = \mathcal{O}(\log u).
\]

Therefore, the spectral radius of \( \mathbf{C}_u \) is also \( \mathcal{O}(\log u) \), and the Rayleigh quotient satisfies:
\[
\max_{\vec{x} > 0} \frac{\vec{x}^\top \mathbf{A}_u \vec{x}}{\vec{x}^\top \mathbf{B}_u \vec{x}} = \mathcal{O}(\log u),
\]
which implies that the competitive ratio of the algorithm \( \mathcal{A}_{\min} \) satisfies:
\[
\text{CR}(\mathcal{A}_{\min} ; \mathcal{D}) = \mathcal{O}\left( \log u \right) = \mathcal{O}\left(\log (\alpha^{-1})\right).
\]
\Halmos 

We now have analyzed the Rayleigh quotient theoretically, establishing a logarithmic competitive ratio for $\mathcal{A}_{\min}$. Finally, we numerically estimate the Rayleigh quotient for the case $s=0$ and $\ell=1$ to verify the theoretical result. As shown in Figure \ref{fig:log_CR}, the logarithmic function provides an excellent fit, with an $R^2$ value exceeding $0.9999$.

\begin{figure}[!h]
\center
\includegraphics[width=0.6\textwidth]{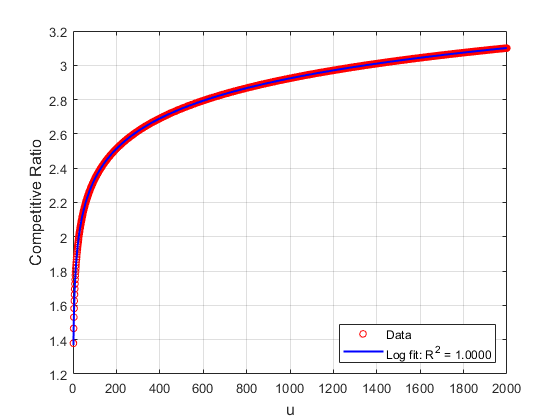}
\caption{Competitive ratio of $\text{A}_{\min}$ versus upper bound $u$, fitted by $0.2555 \log(u + 0.2793) + 1.1587$, assuming $s = 0$, $l = 1$.} 
\label{fig:log_CR}
\end{figure}

\subsection{Heterogeneous Prediction Intervals} \label{subsec:hetero}

Now, we extend our framework by replacing the single prediction interval with $m$ disjoint intervals: ${[\ell_1,u_1], [\ell_2,u_2], \ldots, [\ell_{m},u_m]}$, satisfying $u_j \le \ell_{j+1}$. This formulation naturally models classification-based prediction approaches, where a machine learning model first predicts an output length for each request and then assigns it to the corresponding interval. 

Our algorithm, \( \mathcal{A}_{\min} \), naturally extends to this generalized setting with minimal modifications. For each request $i$ classified into interval $[\ell_j, u_{j}]$, we initialize its lower bound as $\ell_j$ rather than using a uniform initial value across all requests. This modified version (Algorithm \ref{algo:milbrevise} in Appendix \ref{append:algrobust}) guarantees that the initial underestimation becomes more precise for each individual request, and the algorithm consequently achieves better overall performance. As formalized in the following corollary, this adaptation yields a competitive ratio at least equals to the basic setting. We validate the numerical performance in Experiment 2 in Section \ref{sec:num}, demonstrating the practical benefits of incorporating refined prediction intervals.

\begin{corollary}
    Under prediction intervals ${[\ell_1,u_1], [\ell_2,u_2], \ldots, [\ell_{m},u_m]}$, satisfying $u_j \le \ell_{j+1}$, take $\alpha= \frac{\ell_1}{u_m}$. While $M \to \infty$, the asymptotical competitive ratio of the revised \( \mathcal{A}_{\min} \) is $\mathcal{O}\left(\log (\alpha^{-1}) \right)$.
\end{corollary}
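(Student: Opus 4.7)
The plan is to reduce this statement to Theorem~\ref{thm:aminmain} via a domination argument, rather than redoing the full Rayleigh quotient analysis from scratch. First I would observe that the heterogeneous prediction intervals only increase the scheduler's information. If we ran the basic $\mathcal{A}_{\min}$ (Algorithm~\ref{algo:milb}) treating every request as belonging to the single coarse interval $[\ell_1, u_m]$, then every request would be initialized with $\tilde{o}_i = \ell_1$. The revised algorithm instead initializes requests in the $j$-th bin with the tighter value $\tilde{o}_i = \ell_j \geq \ell_1$. Since the hindsight benchmark $\HS$ depends only on the realized output vector $\mathbf{o}$ and is unaffected by the prediction model, it suffices to compare the expected TEL of the two $\mathcal{A}_{\min}$ variants.

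Next I would establish a coupling-based dominance: on any realization of $\mathbf{o}$, and coupling the random tie-breaks of the two executions, the revised algorithm's expected TEL is no larger than that of the basic $\mathcal{A}_{\min}$ run on $[\ell_1, u_m]$. Intuitively, larger initial $\tilde{o}_i$ values make the batch-formation step less likely to overload memory, reducing the number of cancellations and yielding a sorted ordering closer to the shortest-first ideal. I would prove this by induction on the period decomposition $\tau_i$ from Subsection~\ref{subsec:log}: at the start of each period, the multiset of $(\tilde{o}_i, o_i)$ pairs produced by the revised run stochastically dominates that of the basic run, in the sense that each $\tilde{o}_i$ is at least as large under the coupling while the true $o_i$ is identical. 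Once this dominance is in place, applying Theorem~\ref{thm:aminmain} to the basic algorithm with coarse interval $[\ell_1, u_m]$ gives
\[
\mathbb{E}[\text{TEL}(\mathbf{o};\text{basic } \mathcal{A}_{\min})] \;\leq\; \mathcal{O}\!\left(\log \tfrac{u_m}{\ell_1}\right) \cdot \text{TEL}(\mathbf{o};\HS),
\]
and the dominance transfers the same bound to the revised algorithm, yielding the desired $\mathcal{O}(\log \alpha^{-1})$.

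The main obstacle I expect is the dominance step. After a cancellation, the update $\tilde{o}_i \gets (\text{tokens generated})$ is a random quantity depending on the entire batch's history of memory pressure, so isolating a clean stochastic ordering between the two coupled processes requires care, especially across the boundary separating bins. A natural fallback, should the coupling prove awkward, is to redo the accounting of Proposition~\ref{prop:state_transition} directly in the heterogeneous setting: the only structural change is that a request first appears in period $\tau_{\ell_j}$ corresponding to its bin (rather than uniformly in $\tau_{\ell_1}$), so the closed-form analog of Lemma~\ref{thm:Amin} produces a Rayleigh quotient whose matrices are blockwise restrictions of $\mathbf{A}_u, \mathbf{B}_u$ with $u = u_m$. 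The trace computation in Part~2 of Subsection~\ref{subsec:log} only uses the structural form of these matrices and the tridiagonality of $\mathbf{B}_u^{-1}$, so it still collapses to $\sum_{i=1}^{u_m} 1/i = \mathcal{O}(\log u_m) = \mathcal{O}(\log \alpha^{-1})$, completing the proof.
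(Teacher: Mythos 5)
Your primary route — reducing to Theorem~\ref{thm:aminmain} by arguing that the revised algorithm, whose per-bin initializations $\tilde{o}_i=\ell_j\ge\ell_1$ only tighten the lower bounds while the benchmark \HS{} depends solely on $\mathbf{o}$, performs at least as well as the basic $\mathcal{A}_{\min}$ run on the coarse interval $[\ell_1,u_m]$ — is exactly the reasoning the paper relies on, which states the corollary with only this informal dominance justification and no separate appendix proof. Your caution that the cancellation-update coupling needs care is warranted, and your fallback (re-running the period accounting of Proposition~\ref{prop:state_transition} with bin-dependent entry periods and bounding the resulting Rayleigh quotient by the same trace computation with $u=u_m$) supplies more detail than the paper itself does, so the proposal is sound and essentially the same approach.
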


\section{Extensions: $\mathcal{A}_{\min}$ and New Algorithms under Specific Output Distributions} \label{sec:extension}

In previous sections, we assumed the adversary selects request output values within the prediction interval $[\ell, u]$. Here, we examine another variant where outputs follow some specific distributions $\mathcal{D}$ over $[\ell, u]$, and analyze $\mathcal{A}_{\min}$'s performance under this setting. Notably, $\mathcal{A}_{\min}$ operates solely using $\ell$ without distributional knowledge. This motivates our investigation of new algorithms that can leverage $\mathcal{D}$ to improve performance.

We focus on three representative distributions:
\begin{enumerate}
\item Two-point distribution: All outputs are either $\ell$ or $u$, representing an extreme case;
\item Geometric distribution: Outputs exhibit exponential decay, a pattern commonly observed in real-world datasets.
\item Linearly weighted Geometric distribution: Combines exponential decay with a linearly weighted preference for values near the lower bound $\ell$. This modified distribution better captures real-world phenomena where extreme values occur more frequently than standard geometric decay would predict.
\end{enumerate}

\subsection{Two-Point Distribution}

In this section, we examine a special case where the output length $o \in \{\ell, u\}$. We denote this two-point distribution family as $\mathcal{D}_2$. We starts by evaluating the theoretical performance of $\mathcal A_{\min}$. 

\begin{theorem} \label{thm:no}
    Under any distribution $\mathcal D \in \mathcal{D}_2$, the competitive ratio of $\mathcal{A}_{\min}$ is bounded by:
    \[
    \text{CR}(\mathcal{A}_{\min}) \leq \frac{3-\alpha}{2}+\mathcal{O}(\frac{1}{M}).
    \]
\end{theorem}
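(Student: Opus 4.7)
The plan is to substitute the two-point distribution directly into the closed-form expression from Lemma~\ref{thm:Amin} and reduce the resulting bound to a perfect-square inequality. Writing $x_\ell = p$ and $x_u = 1-p$ with $x_k = 0$ otherwise, every term in the denominator carries an explicit factor $x_k$, so only $k = \ell$ and $k = u$ survive, yielding
\[
D = \ell(s+\ell)p(2-p) + u(s+u)(1-p)^2.
\]
The numerator requires more care because each $k$ from $\ell$ to $u$ can still contribute through the partial sums $\sum_{i \ge k} x_i$ and $\sum_{i \ge k} i\,x_i$, even though the intermediate $x_k$ vanish.

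The next step is to isolate the $k = \ell$ term, which uses $\delta_{s,\ell}(\ell) = s+\ell$ and produces
\[
(s+\ell)\bigl(\ell p + u(1-p)\bigr)\left[p + \tfrac{s+2u-\ell}{s+u}(1-p)\right].
\]
For $k \in \{\ell+1, \ldots, u\}$ we have $\delta_{s,\ell}(k) = 1$ and only $i = u$ contributes to the inner sums, so each such $k$ yields $\tfrac{u(1-p)^2(1+2(u-k))}{s+u}$. Summing over this range telescopes via the identity $\sum_{j=0}^{n-1}(2j+1) = n^2$ into $\tfrac{u(u-\ell)^2(1-p)^2}{s+u}$, producing a clean closed form for $N$.

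The heart of the argument is to establish $N - \tfrac{3-\alpha}{2}D \le 0$ pointwise in $p \in [0,1]$. In the representative case $s=0$, $\ell=1$, $u = \alpha^{-1}$, a direct calculation yields the neat identity $N = D + p(1-p)\tfrac{(u-1)(u+1)}{u}$, so the desired bound reduces to $D \ge 2p(1-p)(u+1)$; and the difference $D - 2p(1-p)(u+1)$ factors exactly as the perfect square $(p - u(1-p))^2 \ge 0$. The inequality is tight at $p = u/(u+1)$, which also confirms that $\tfrac{3-\alpha}{2}$ is the correct constant. The general case $s \ne 0$ follows by the same rearrangement: after clearing the denominator $s+u$, the difference $N - \tfrac{3-\alpha}{2}D$ is a quadratic in $p$ that factors as a nonnegative multiple of $(\ell p - u(1-p))^2$.

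The main obstacle is the algebraic bookkeeping in the numerator, especially separating the $k = \ell$ contribution (which carries the extra factor $s + \ell$) from the telescoping contributions for $k > \ell$, and verifying that the resulting quadratic in $p$ factors as a perfect square for general $s$. Once this identity is checked, Lemma~\ref{thm:Amin} together with the implicit $\mathcal{O}(1/M)$ correction immediately yields the claimed bound $\mathrm{CR}(\mathcal{A}_{\min}) \le \tfrac{3-\alpha}{2} + \mathcal{O}(1/M)$.
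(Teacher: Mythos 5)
Your core argument is correct and is essentially the paper's own proof in different notation: the paper likewise substitutes the two-point vector into the closed form of Lemma~\ref{thm:Amin} with $s=0$, writes $t=x_\ell/x_u$, and reduces the bound to $\alpha^2t^2+2\alpha^2t+1\ge 2\alpha(1+\alpha)t$, i.e.\ $(\alpha t-1)^2\ge 0$ --- exactly your perfect square $(p-u(1-p))^2\ge 0$ after normalizing $\ell=1$, with the same tightness point $\alpha t=1$, i.e.\ $p=u/(u+1)$. Your intermediate identity $N=D+p(1-p)\tfrac{(u-1)(u+1)}{u}$ and the reduction to $D\ge 2p(1-p)(u+1)$ both check out.

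The one genuine flaw is your final claim that for $s\neq 0$ the difference $N-\tfrac{3-\alpha}{2}D$ factors as a nonnegative multiple of $(\ell p-u(1-p))^2$. For general $s$ one finds $N-D=\tfrac{(s+\ell)(u-\ell)(s+u+\ell)}{s+u}\,p(1-p)$, and the quadratic that must be nonnegative, after clearing $s+u$, is
\[
\ell(s+\ell)(s+u)\,p^2-2(s+\ell)\bigl(s(u-\ell)+u^2\bigr)\,p(1-p)+u(s+u)^2(1-p)^2,
\]
which is proportional to $(\ell p-u(1-p))^2$ only when $u(s+\ell)=\ell(s+u)$, i.e.\ when $s=0$ (or $\ell=u$). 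Worse, for $s>0$ this form can be indefinite: with $s=1$, $\ell=1$, $u=10$ and $p\approx 0.91$, the lemma's formula gives a ratio of roughly $1.56>\tfrac{3-\alpha}{2}=1.45$. So your argument, like the paper's, really establishes the bound only in the $s=0$ regime; the paper simply sets $s=0$ at the outset, and you should do the same (or invoke an explicit $s$-negligibility assumption) rather than asserting the perfect-square factorization for all $s$.
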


The complete proof appears in Appendix \ref{append:extension}. Theorem \ref{thm:no} establishes that for any $\alpha \in [0,1]$, the competitive ratio of $\mathcal{A}_{\min}$ is bounded above by $\frac{3}{2}$. This represents a significant improvement over $\mathcal{A}_{\max}$, whose competitive ratio is unbounded in the worst case – a scenario that occurs precisely within the two-point distribution family $\mathcal{D}_2$. Our result thus demonstrates that $\mathcal{A}_{\min}$ achieves bounded competitiveness where $\mathcal{A}_{\max}$ fails.

Next, we introduce the \emph{promote--$\ell$ policy} $\mathcal{A}_\ell$, which uses the information that the output distribution belongs to $\mathcal{D}_2$. Under this policy, the server processes each incoming request sequentially. As soon as the currently served request generates $\ell$ tokens, the policy infers that the true length of this request must be $u>\ell$. At that point, $\mathcal{A}_\ell$ removes the request, appends it to the end of the queue, and proceeds to the next one. Any request of length~$\le \ell$ is completed normally. In effect, $\mathcal{A}_\ell$ defers all long jobs after $\ell$ tokens, postponing them until all shorter jobs are finished. The detailed algorithm can be found in Algorithm \ref{algo:promote_l} in Appendix \ref{append:extension}. Next, we provide the theoretical competitive analysis of $\mathcal{A}_\ell$. 

\begin{theorem}\label{thm:Al_CR}
Under any distribution $\mathcal D \in \mathcal{D}_2$, the competitive ratio of $\mathcal{A}_\ell$ is bounded by:
\[
\textnormal{CR}(\mathcal{A}_\ell)
\;\le\;
\begin{cases}
\displaystyle 1+\dfrac{\alpha}{2(1-\alpha)}, & 0<\alpha\le\frac12,\\[10pt]
1+2\alpha^{2}, & \dfrac12\le\alpha\le1,
\end{cases}
\;+\;\mathcal{O}\!\bigl(\tfrac1M\bigr),
\qquad\text{where }\alpha=\ell/u.
\]
\end{theorem}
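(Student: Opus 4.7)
The plan is to reduce Theorem~\ref{thm:Al_CR} to a one-variable optimization over the adversary's free parameter. Under any two-point distribution on $\{\ell,u\}$, the realization $\mathbf{o}$ is determined (up to a symmetry that the algorithm's randomization cancels out) by the fraction $p$ of jobs whose true length is $u$. I would first derive closed-form expressions for $\mathbb{E}[\text{TEL}(\mathbf{o};\mathcal{A}_\ell)]$ and $\text{TEL}(\mathbf{o};\HS)$ as functions of $p$, then take a supremum over $p\in[0,1]$.

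The computation for $\mathcal{A}_\ell$ splits naturally into two phases. In Phase~1, the policy processes all $n$ jobs with an assumed length of $\ell$, so (exactly as in the first period of $\mathcal{A}_{\min}$ in Proposition~\ref{prop:state_transition}) it runs batches of size $\lfloor M/(s+\ell)\rfloor$ each lasting $\ell$ time steps; short jobs complete inside their batch while long jobs are cancelled at the $\ell$-th generated token and re-queued. In Phase~2 only the $pn$ long jobs remain, and $\mathcal{A}_\ell$ knows their true length equals $u$, so it behaves exactly like \HS on longs with batch size $\lfloor M/(s+u)\rfloor$ and round length $u$. Adding up latencies round-by-round and offsetting Phase~2 by the duration of Phase~1 gives, in the asymptotic regime $n,M\to\infty$ used throughout the paper,
\begin{equation*}
\mathbb{E}[\text{TEL}(\mathbf{o};\mathcal{A}_\ell)] = \frac{n^2}{2M}\bigl[(1+p)\ell(s+\ell) + p^2\, u(s+u)\bigr] + \mathcal{O}(n/M).
\end{equation*}
The same argument applied to \HS (shorts first, then longs, using the identity $(1-p)^2+2p(1-p)=1-p^2$) yields
\begin{equation*}
\text{TEL}(\mathbf{o};\HS) = \frac{n^2}{2M}\bigl[(1-p^2)\ell(s+\ell) + p^2\, u(s+u)\bigr] + \mathcal{O}(n/M).
\end{equation*}
Taking the ratio, writing $\alpha=\ell/u$, and absorbing the $s$-dependent corrections into the $\mathcal{O}(1/M)$ term (as in Lemma~\ref{thm:Amin}) collapses everything to
\begin{equation*}
\frac{\mathbb{E}[\text{TEL}(\mathbf{o};\mathcal{A}_\ell)]}{\text{TEL}(\mathbf{o};\HS)} = 1 + \frac{p(1+p)\alpha^2}{(1-p^2)\alpha^2 + p^2} + \mathcal{O}(1/M).
\end{equation*}

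It remains to maximize $g(p):=p(1+p)\alpha^2/[(1-p^2)\alpha^2+p^2]$ on $[0,1]$. Setting $g'(p)=0$ and cross-multiplying reduces cleanly to $(1+p)^2\alpha^2=p^2$, whose positive root is $p^\star=\alpha/(1-\alpha)$. This root lies in $[0,1]$ exactly when $\alpha\le 1/2$; substituting back and simplifying gives $g(p^\star)=\alpha/[2(1-\alpha)]$, which is the first branch of the claim. When $\alpha>1/2$ the critical point leaves the interval, $g$ is monotone on $[0,1]$, and the maximum is attained at the boundary $p=1$ with $g(1)=2\alpha^2$, the second branch. The two branches agree at $\alpha=1/2$, where both evaluate to $3/2$, which is a useful sanity check.

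The main technical obstacle is not this optimization but the careful accounting of the batching dynamics inside the two phases: I would need to verify that cancelling a long job at its $\ell$-th token respects the memory constraint~\eqref{eq:constraintM} (the peak in-batch usage $B_1(s+\ell)$ is at most $M$), that the partially-filled terminal batches in each phase contribute only $\mathcal{O}(n/M)$ to total latency, and that re-queued long jobs inherit the same true length $u$ after restart---the greedy-sampling remark in Section~2 is what makes this last step cost-free. These bookkeeping steps already appear in the analyses of $\mathcal{A}_{\max}$ and $\mathcal{A}_{\min}$, so the mechanics transfer with only minor adaptation.
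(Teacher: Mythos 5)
Your proposal is correct and follows essentially the same route as the paper: your phase-by-phase latency formulas in the parameter $p$ are exactly the paper's expressions in $(x_\ell,x_u)$ (your ratio $1+\alpha^2 p(1+p)/[(1-p^2)\alpha^2+p^2]$ is the paper's $1+h(t)$ with $t=(1-p)/p$), and the final one-variable maximization with critical point $p^\star=\alpha/(1-\alpha)$ and threshold $\alpha=1/2$ matches the paper, which merely bounds the same quotient by an AM--GM inequality instead of calculus.
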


The proof of Theorem \ref{thm:Al_CR} can be found in Appendix \ref{append:extension}. Next, based on the results in Theorems \ref{thm:no} and \ref{thm:Al_CR}, we can study when to choose $\mathcal{A}_{\min}$ and $\mathcal{A}_\ell$.
From Theorem~\ref{thm:no}, we know that
\(
\text{CR}(\mathcal{A}_{\min})\le(3-\alpha)/2+\mathcal{O}(1/M)
\),
while Theorem~\ref{thm:Al_CR} gives a piecewise upper bound for
\(\text{CR}(\mathcal{A}_\ell)\).
To compare the two when \(0\le\alpha\le\frac12\), we consider:
\[
\text{CR}(\mathcal{A}_\ell)\;<\;\text{CR}(\mathcal{A}_{\min})
\quad\Longleftrightarrow\quad
1+\frac{\alpha}{2(1-\alpha)}
\;<\;\frac{3-\alpha}{2},
\]
which reduces to the quadratic inequality \(\alpha^2-3\alpha+1=0\).  
Solving yields the critical threshold:
\[
\alpha^\star=\frac{3-\sqrt5}{2}\approx0.382.
\]
Since \(\alpha=\ell/u\), the condition \(\alpha<\alpha^\star\) translates to
\(u\gtrsim2.62\,\ell\); that is, \emph{when the long request is more than 2.6 times larger than the short one, $\mathcal{A}_\ell$ achieves a smaller competitive ratio.}  
For \(\alpha\ge\alpha^\star\), the monotonic bound \(1+2\alpha^2\) dominates, making $\mathcal{A}_{\min}$ the preferable choice.

\begin{figure}[h]
  \centering
  \includegraphics[width=.55\linewidth]{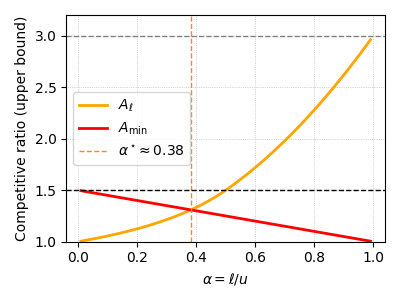}%
  \vspace{-3pt}
  \caption{Upper-bound curves for the competitive ratios under
  \(\mathcal{D}_2\).  When \(\alpha<\alpha^\star\approx0.38\)
  (left of the dashed line), 
  \(\mathcal{A}_\ell\) outperforms 
  \(\mathcal{A}_{\min}\); beyond that point, the inequality reverses.}
  \label{fig:CRcompare}
\end{figure}

\xhdr{Practical takeaway.}
Rather than committing to a single scheduling policy, one can \emph{adaptively} choose between $\mathcal{A}_{\min}$ and $\mathcal{A}_\ell$ based on the empirical ratio
\(\alpha=\ell/u\) observed in the workload. This simple switching strategy ensures the smaller of the two analytical bounds shown in Figure~\ref{fig:CRcompare}, improving the worst-case guarantee
from \(1.5\) to~\(1+\tfrac{\alpha}{2(1-\alpha)}\) in high-skew scenarios, while never exceeding the red curve in low-skew regimes. More broadly, this illustrates a general principle: taking advantage of distributional features, such as the length ratio~\(\alpha\), to select among multiple algorithms, can significantly enhance the robustness of competitive-ratio guarantees.

\subsection{Geometric Distribution}

Let $G(p)$ denote the geometric distribution with parameter $p$, such that $o \sim G(p)$ and $\mathbb{P}(o = k) = p q^{k-1}$, where $q = 1 - p$. The $G(p)$ describes an exponential decay of $o$ on the discrete space $[1,u]$. Under the geometric distribution, it is hard to provide a theoretical result for the competitive ratio. However, the left plot in Figure \ref{fig:G&LG} indicates that the competitive ratio of $\mathcal A_{\min}$ increases when $q$ increases, and is bounded by $1.7$. 

\begin{figure}[!h]
\centering
\begin{subfigure}{0.45\textwidth}
    \includegraphics[width=\linewidth]{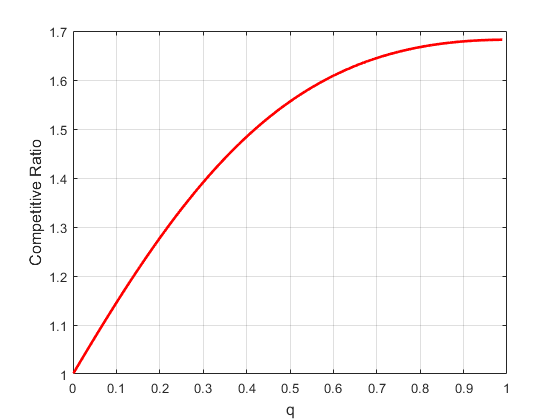}
    \label{fig:geometric}
\end{subfigure}
\hspace{0.05\textwidth}
\begin{subfigure}{0.45\textwidth}
    \includegraphics[width=\linewidth]{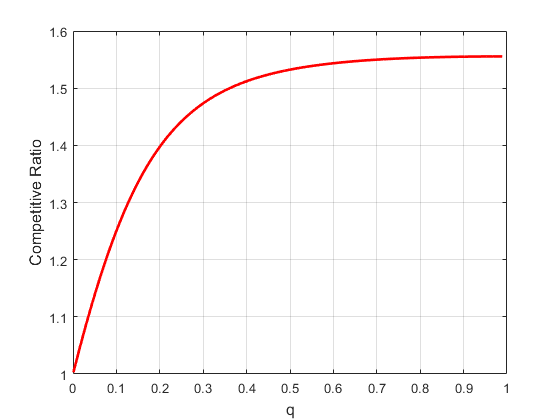}
    \label{fig:linear_geometric}
\end{subfigure}
\caption{Competitive ratio as a function of parameter $q$: (Left) geometric distribution $G(p)$; (Right) linearly weighted geometric distribution $LG(p)$.}
\label{fig:G&LG}
\end{figure}

\subsection{Linearly Weighted Geometric Distribution}

Let $LG(p)$ denote the linearly weighted geometric distribution with the parameter $p$, $o \sim LG(p)$ and $\mathbb{P}(o = k) = k p^2 q^{k-1}$, where $q = 1-p$. Compared to $G(p)$, the linearly weighted geometric distribution more accurately reflects practical scenarios, as it exhibits increasing mass for small values of $o$ and reaches its peak around $\log \tfrac{1}{q}$.

The right plot of Figure \ref{fig:G&LG} shows that the competitive ratio of $\mathcal A_{\min}$ again increases when $q$ increases, and is bounded by $1.6$. Furthermore, a closed-form expression can be derived for the linearly weighted geometric distribution $LG(p)$, and the tight competitive ratio upper bound is about $1.56$. The following theorem shows the result and the proof can be found in Appendix \ref{append:extension}.

\begin{theorem}
\label{cor:LG}
As $u \to \infty$ and $M \to \infty$, the competitive ratio of $\mathcal{A}_{\min}$ under any linearly weighted geometric distribution $\mathcal D \in LG(p)$ is given by
\[
\text{CR}(\mathcal{A}_{\min}) = \frac{(1+q)^2 (1 + 3q + 6q^2 + 3q^3 + q^4)}{1 + 2q + 11q^2 + 8q^3 + 11q^4 + 2q^5 + q^6} \le \frac{14}{9} \approx 1.56.
\]
\end{theorem}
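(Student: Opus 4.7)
The plan is to apply Lemma~\ref{thm:Amin} directly in the limit $u \to \infty$, $M \to \infty$, with the conventional choices $s=0$ and $\ell=1$. Under these choices $\delta_{s,l}(k)=1$ for every $k$, so the competitive-ratio formula collapses to
\[
\text{CR}(\mathcal{A}_{\min}) = \frac{\sum_{k\ge 1} \bigl(\sum_{i\ge k} i\, x_i\bigr)\bigl(\sum_{i\ge k} \tfrac{x_i}{i} + 2 \sum_{i>k} \tfrac{i-k}{i} x_i\bigr)}{\sum_{k\ge 1} k^2 x_k \bigl(x_k + 2 \sum_{i>k} x_i\bigr)}.
\]
Substituting $x_i = i p^2 q^{i-1}$, every inner sum becomes a geometric tail and can be evaluated in closed form. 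The central observation is that $x_i/i = p^2 q^{i-1}$ and $(i-k)x_i/i = (i-k) p^2 q^{i-1}$, so these two inner sums reduce to $pq^{k-1}$ and $q^k$ respectively, collapsing the numerator bracket to $(1+q)q^{k-1}$.

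For the heavier term $\sum_{i\ge k} i x_i = p^2 \sum_{i\ge k} i^2 q^{i-1}$, I would expand $i^2 = (i-k)^2 + 2k(i-k)+k^2$ and reindex $j = i-k$ to write the sum as $q^{k-1}$ times a linear combination of $1, k, k^2$ with coefficients in $p, q$; the same reindexing handles $\sum_{i>k} x_i = q^k(1+kp)$. After these simplifications, the summand in the numerator becomes $\tfrac{1+q}{p} q^{2(k-1)} [q(1+q) + 2kqp + k^2 p^2]$, and the summand in the denominator becomes $p^2 q^{2(k-1)} [k^4 p(1+q) + 2k^3 q]$.

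Next I would perform the outer sum over $k$. Setting $r = q^2$ and invoking the standard power sums $\sum_{k\ge 1} k^m r^{k-1}$ for $m = 0,1,2,3,4$, the identity $1-r = p(1+q)$ makes all denominators align and cancels the extraneous $p$-factors. Collecting terms, the numerator becomes $\tfrac{1 + 3q + 6q^2 + 3q^3 + q^4}{p^2(1+q)^2}$ and the denominator becomes $\tfrac{1 + 2q + 11q^2 + 8q^3 + 11q^4 + 2q^5 + q^6}{p^2(1+q)^4}$. Their ratio reproduces exactly the stated closed form.

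The final step is the inequality $\text{CR}(\mathcal{A}_{\min}) \le 14/9$. I would form the difference $14 D(q) - 9 N(q)$, where $N, D$ are the numerator and denominator polynomials. Direct expansion yields the palindromic polynomial $5q^6 - 17q^5 + 37q^4 - 50q^3 + 37q^2 - 17q + 5$. Using the substitution $y = q + 1/q$, this equals $q^3 \bigl[5(y^3 - 3y) - 17(y^2-2) + 37y - 50\bigr] = q^3 (y-2)(5y^2 - 7y + 8)$. Since $y \ge 2$ for all $q \in (0,1]$ and the quadratic $5y^2 - 7y + 8$ has discriminant $49 - 160 < 0$ and is therefore strictly positive, the difference is nonnegative on $(0,1)$, with equality only in the limit $q \to 1^-$. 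The main obstacle I anticipate is the bookkeeping needed to align the many inner sums in the second step; the subsequent palindromic substitution is the cleanest route to the final bound, and a direct calculus-based argument on the rational function would be substantially messier.
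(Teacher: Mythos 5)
Your proposal is correct and follows essentially the same route as the paper: both plug the $LG(p)$ mass function $x_i = i p^2 q^{i-1}$ into the competitive-ratio formula of Lemma~\ref{thm:Amin} (with $s=0$, $\ell=1$), evaluate the geometric tail sums in closed form, and take the ratio, arriving at the same numerator $\frac{1+3q+6q^2+3q^3+q^4}{p^2(1+q)^2}$ and denominator $\frac{1+2q+11q^2+8q^3+11q^4+2q^5+q^6}{p^2(1+q)^4}$. Your explicit verification of the bound $\le 14/9$ via the palindromic factorization $14D-9N = q^3(y-2)(5y^2-7y+8)$ with $y=q+1/q$ is a welcome addition, since the paper asserts that inequality without proof.
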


\section{Numerical Experiments} \label{sec:num}

In this section, we evaluate the performance of different scheduling algorithms using a real-world dataset under various assumptions about the accuracy of output length predictions.

\xhdr{Dataset Overview.}
We use the LMSYS-Chat-1M dataset released by \cite{zheng2023lmsys}, available at \url{https://huggingface.co/datasets/lmsys/lmsys-chat-1m}, which consists of conversations collected from over 210,000 unique IP addresses via the Vicuna demo and the Chatbot Arena platform. For our numerical experiments, we randomly sample a subset of 2,000 conversations.

For each conversation, we define the input size as the number of tokens in the prompt and the output length as the number of tokens in the corresponding model response. Among the selected 2,000 conversations, the input sizes range from 1 to 468 tokens, with a mean of 41, median of 11, and variance of 4,961. The output lengths range from 1 to 883 tokens, with a mean of 85, median of 43, and variance of 9,702. Figure \ref{fig:distribution} displays the empirical distributions of input sizes and output lengths.

\begin{figure}[!ht]
\center
\includegraphics[width=0.8\textwidth]{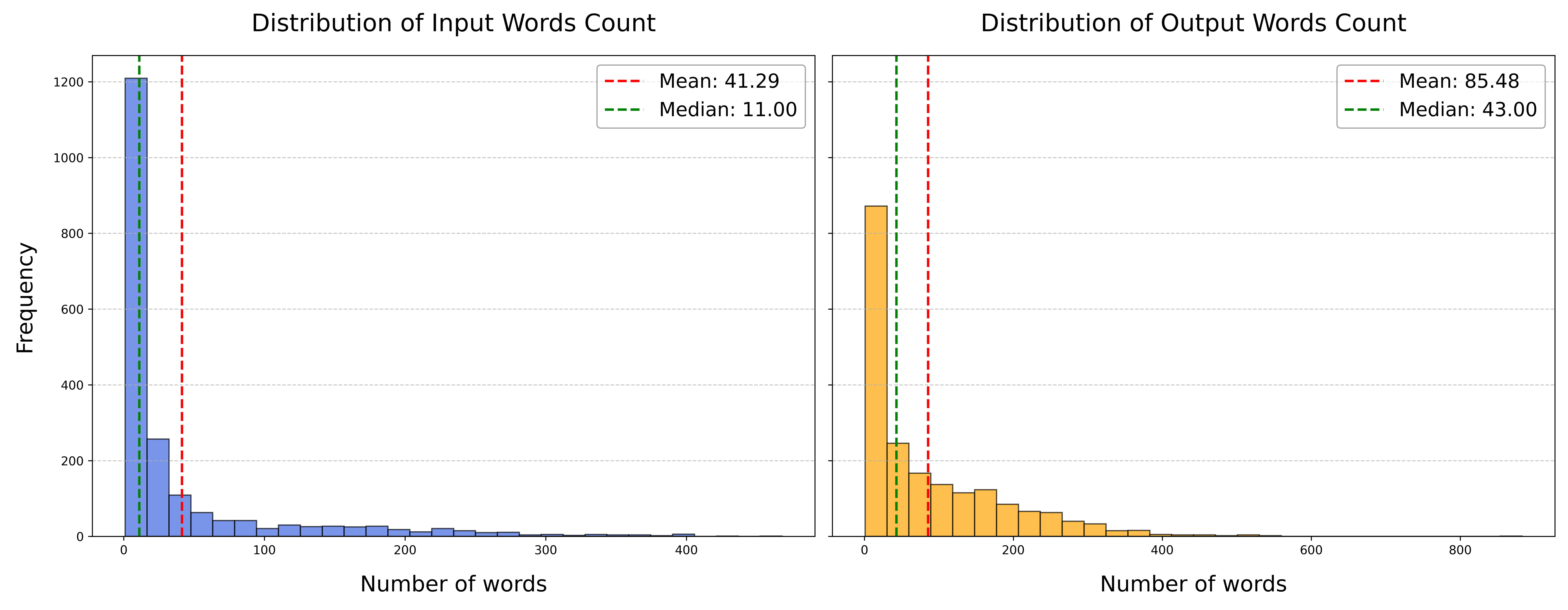}
\caption{Distribution of the number of words of input prompt and output response respectively} 
\label{fig:distribution}
\end{figure}

\xhdr{Experiment Setup.}
We evaluate the scheduling algorithms under three prediction settings that reflect varying levels of output length accuracy:
\begin{enumerate}
\item \textbf{Rough Prediction: } Each request is assumed to have a coarse prediction interval of $[1, 1000]$, representing minimal information about the output length.
\item \textbf{Non-Overlapping Classification: } Based on the true output length of each request, we assign its prediction interval to one of the fixed buckets: $[1, 100], [101, 200], \ldots, [901, 1000]$, simulating a multi-class classification model with non-overlapping intervals.
\item \textbf{Overlapping Interval Prediction: } For each request $i$ with true output length $o_i$, the prediction interval is set as $[ (1 - x) o_i, (1 + x) o_i ]$, where $x \in (0,1)$ controls the accuracy of the prediction. Larger values of $x$ correspond to more precise predictions.
\end{enumerate} 

To observe latency trends under different load levels, we vary the number of requests in each simulation, selecting random subsets of size 200, 400, ..., up to 2,000 from the original dataset. We report the average end-to-end latency, defined as the total latency divided by the number of requests.

In all experiments, we compare the average latency of $\mathcal{A}_{\max}$ and $\mathcal{A}_{\min}$. As a benchmark, we include the \HS{} algorithm, which assumes perfect knowledge of each request's output length and serves as a lower bound on achievable latency. Batch processing time is estimated using the linear regression predictor in Vidur simulator from \cite{agrawal2024vidur} under the simulation environment of the LLaMA2-70B model on two linked A100 GPUs.


\xhdr{Results for Experiment 1.}
In Experiment 1, the prediction interval for all requests is set to the broad range $[1, 1000]$. Under this setting, $\mathcal{A}_{\max}$ pessimistically assumes that each request has an output length of 1000. While this conservative approach avoids memory overflows, it leads to poor memory utilization—each batch includes very few requests, resulting in high latency. In contrast, $\mathcal{A}_{\min}$ initializes the lower bound of each request as 1 and adaptively updates this value during the inference process as tokens are generated.

Figure~\ref{fig:exp1} shows that $\mathcal{A}_{\min}$ achieves average latency nearly identical to the benchmark \HS, which has full knowledge of the true output lengths. This result is particularly striking given that $\mathcal{A}_{\min}$ operates with minimal information—only that each request's output lies somewhere in the interval $[1, 1000]$. It highlights the robustness and adaptiveness of $\mathcal{A}_{\min}$ even under highly uncertain predictions.

\begin{figure}[!ht]
\center
\includegraphics[width=0.8\textwidth]{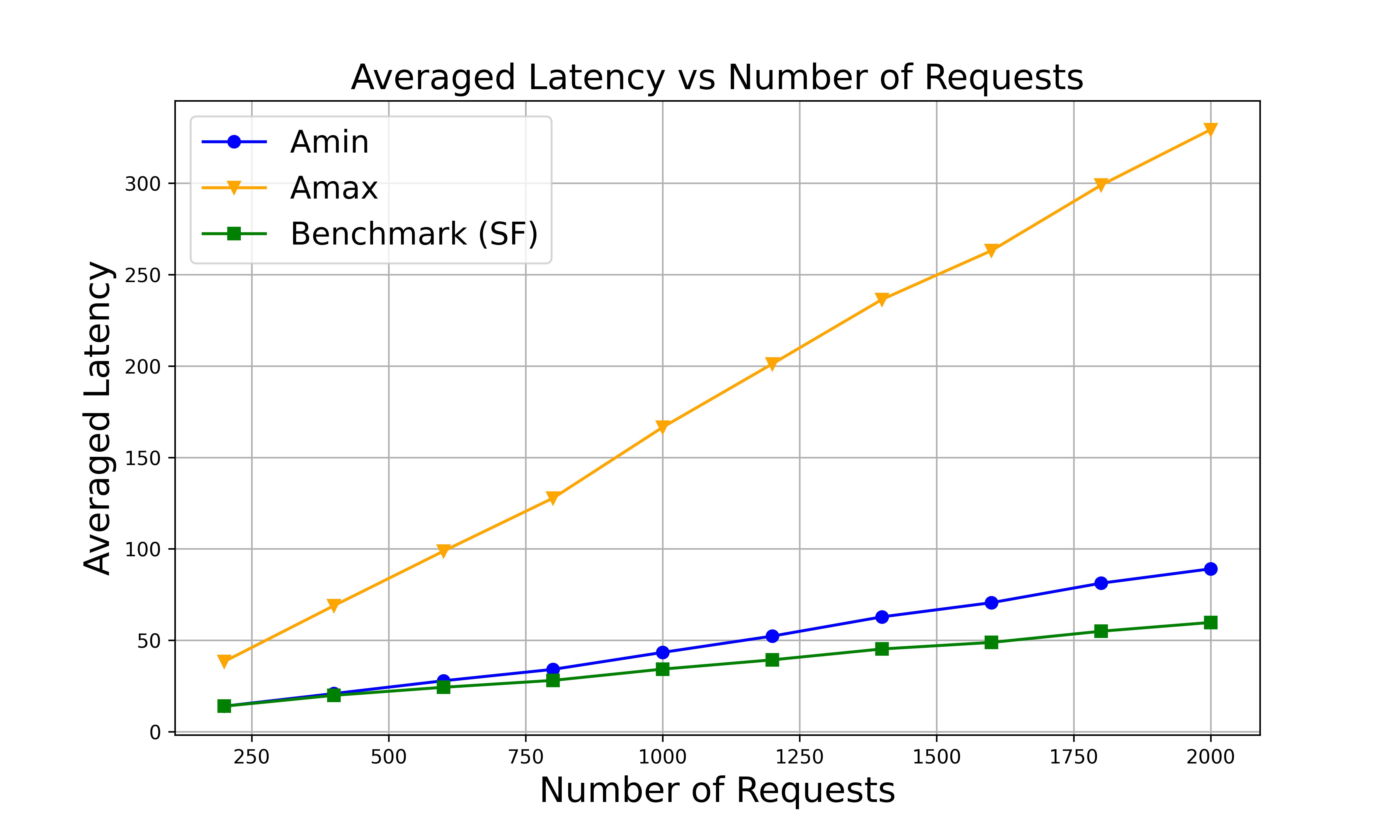}
\caption{Averaged latency between scheduling algorithms when the prediction for every request's output length is $[1,1000]$.} 
\label{fig:exp1}
\end{figure}

\xhdr{Results for Experiment 2.}
In Experiment 2, we classify requests into 10 groups, each associated with a non-overlapping output length prediction interval: $[1, 100], [101, 200], \ldots, [901, 1000]$. Compared to Experiment 1, this setting provides significantly more accurate predictions. Figure~\ref{fig:exp2} shows the average latency of the three scheduling algorithms.

First, we observe that $\mathcal{A}_{\max}$ achieves a substantial improvement over its performance in Experiment 1—the average latency is nearly halved. This is because $\mathcal{A}_{\max}$ now treats each request as having an output length equal to the upper bound of its assigned group interval, allowing more requests to be packed into each batch compared to the uniform assumption of 1000 tokens in Experiment 1.

More importantly, Figure~\ref{fig:exp2} also demonstrates that as prediction accuracy improves, the performance of $\mathcal{A}_{\min}$ closely approaches that of the benchmark \HS. This highlights a key strength of $\mathcal{A}_{\min}$: its ability to adaptively leverage better predictions to achieve latency performance nearly identical to an ideal scheduler with full information.

\begin{figure}[!ht]
\center
\includegraphics[width=0.8\textwidth]{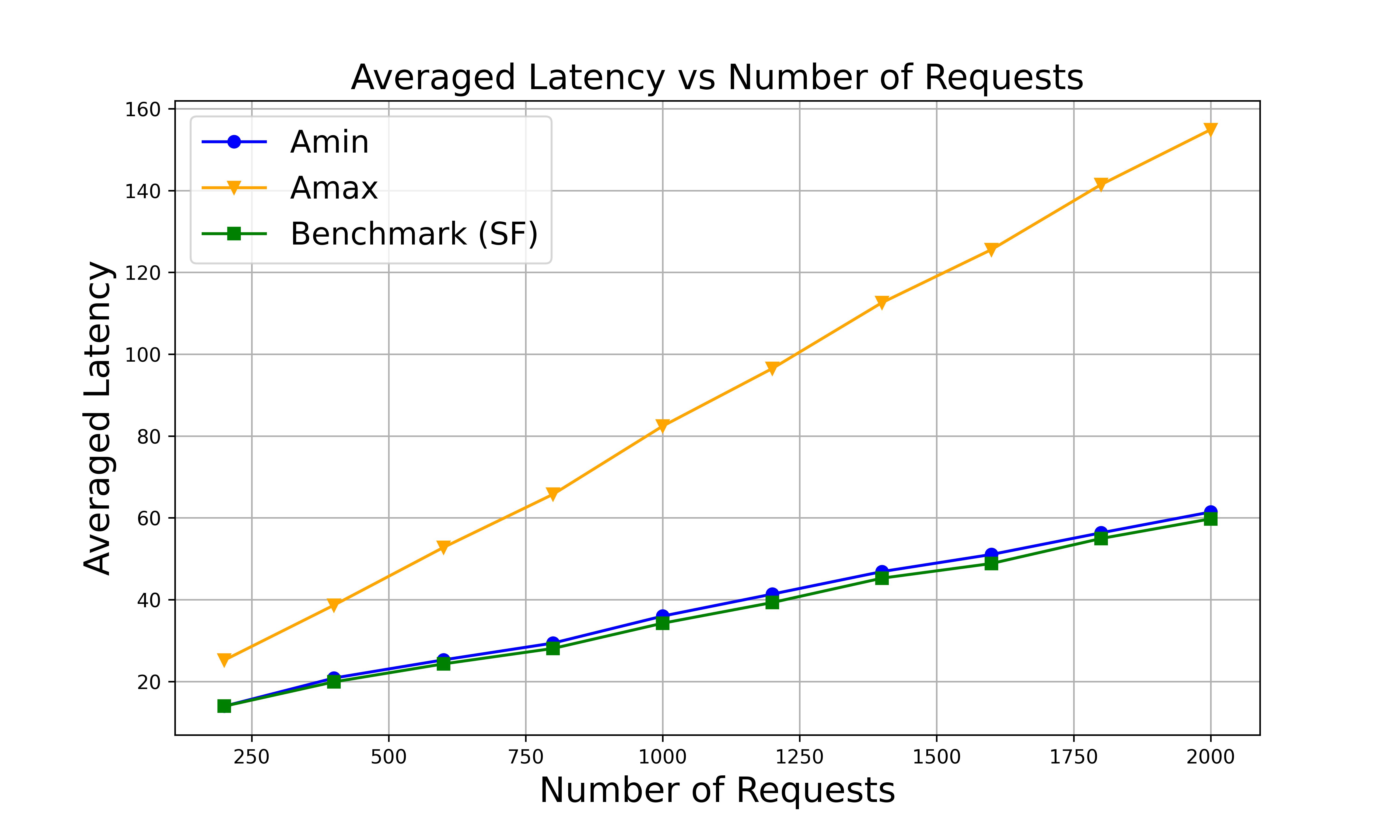}
\caption{Averaged latency between scheduling algorithms when the prediction for every request's output length is one of $[1, 100], [101, 200], \ldots, [901, 1000]$.} 
\label{fig:exp2}
\end{figure}

\xhdr{Results for Experiment 3.}
In Experiment 3, each request $i$ is assigned a prediction interval of the form $[ (1 - x)o_i, (1 + x)o_i ]$, where $x \in \{0.1, 0.95, 0.99\}$ controls the accuracy of the prediction. A smaller value of $x$ corresponds to more accurate predictions.

As shown in Figure~\ref{fig:exp3}, when $x = 0.1$, indicating highly accurate predictions, both $\mathcal{A}_{\max}$ and $\mathcal{A}_{\min}$ perform well, achieving low average latency. However, as the prediction accuracy decreases (i.e., $x = 0.95$ or $x = 0.99$), the performance of $\mathcal{A}_{\max}$ deteriorates significantly. This is because the algorithm treats each request pessimistically by assuming its output length is near the upper bound of its interval, which leads to low memory utilization and small batch sizes.

In contrast, $\mathcal{A}_{\min}$ continues to maintain low average latency even under highly uncertain predictions. Its performance remains close to the hindsight benchmark \HS, highlighting the robustness of $\mathcal{A}_{\min}$ in the face of imprecise output length estimates.

\begin{figure}[!ht]
\center
\includegraphics[width=0.8\textwidth]{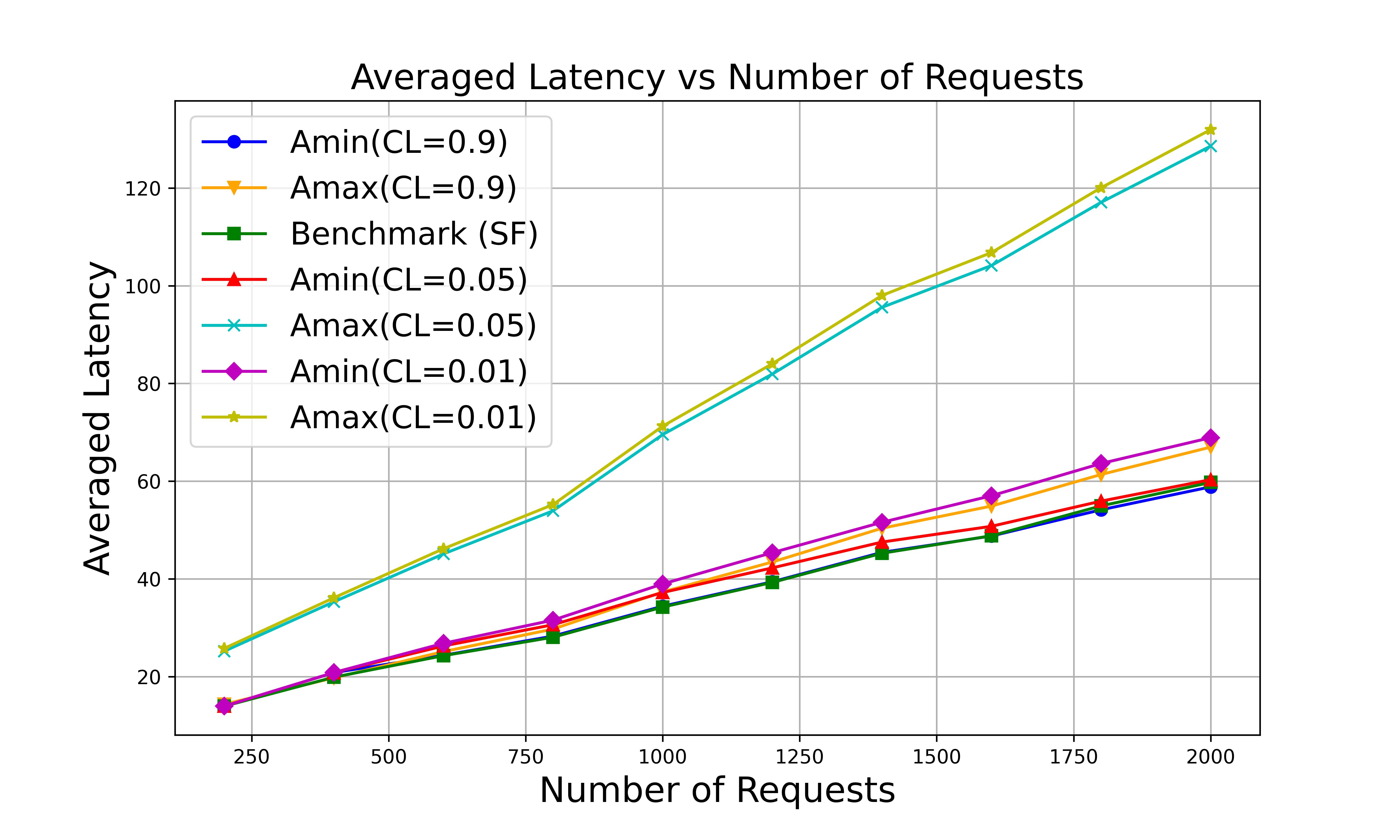}
\caption{Averaged latency between scheduling algorithms when the prediction for every request $i$'s output length is $[(1-x)o_i,(1+x)o_i]$.} 
\label{fig:exp3}
\end{figure}
\section{Conclusion} \label{sec:conclusion}

This paper extends the LLM inference scheduling model introduced by \cite{jaillet2025online} to a more realistic setting where only interval-based predictions of output lengths are available. We first analyze $\mathcal{A}_{\max}$, a conservative algorithm that avoids memory overflow by overestimating request lengths, but demonstrate its fundamental limitation: poor robustness when prediction intervals are wide, leading to excessive resource allocation and reduced concurrency. Our key contribution is $\mathcal{A}_{\min}$, an adaptive algorithm that employs strategic underestimation followed by progressive refinement during execution. Through both theoretical analysis and comprehensive numerical experiments, we establish that $\mathcal{A}_{\min}$ achieves robust performance across varying prediction quality while maintaining high system efficiency. These results provide practical insights for deploying LLM inference systems under prediction uncertainty.

\newpage

\bibliographystyle{ACM-Reference-Format}
\bibliography{reference}

\ECSwitch

\ECHead{Proofs of Statements}

\section{Memory-Preserving Proof Technique for Theorem \ref{thm:Amax}}
\label{sec:memory}

In this section, we study the relationship between the total end-to-end latency and the memory constraint \( M \). Although it is generally intractable to derive a closed-form expression for total latency as a function of \( M \), we can still make meaningful comparisons between algorithms that satisfy certain structural properties, which will be formally defined in Definitions~\ref{def:1} and~\ref{def:2}. Specifically, by analyzing how total latency scales under different memory capacities for such algorithms, we can bound their relative performance. Based on this approach, we construct a benchmark algorithm and compare its total latency with that of \( \mathcal{A}_{\max} \) to establish the competitive ratio.

To introduce the structural ideas underlying our analysis, we first define the notion of the \textit{corresponding order} between two schedules in Definition~\ref{def:1}. This concept formalizes when two schedules maintain the same relative processing order, even if their exact starting times differ.

\begin{definition}[Corresponding Order] \label{def:1}
Let \( I = ( I_0, I_1, \cdots, I_T ) \) and \( I' = ( I_0', I_1', \cdots, I_T' ) \) be two schedules produced by possibly different algorithms. We say that \( I \) and \( I' \) \textbf{preserve the corresponding order} if, for any two jobs \( n_1 \in I_i \cap I'_{i'} \) and \( n_2 \in I_j \cap I'_{j'} \), it holds that
\[
(i - j)(i' - j') \geq 0.
\]
In other words, if \( n_1 \) is scheduled earlier than \( n_2 \) in \( I \), it is also scheduled no later than \( n_2 \) in \( I' \), and vice versa. Furthermore, we say that \( I' \) is \textbf{delayed relative to} \( I \), denoted \( I \leq I' \), if for every job \( n_0 \in I_i \cap I'_{i'} \), we have \( i \leq i' \).

\end{definition}

Definition~\ref{def:1} focuses on the relative order of job processing rather than their exact starting times. This abstraction is crucial because, when comparing scheduling strategies (e.g., shortest-job-first or longest-job-first) under different memory capacities \( M \), the exact batch compositions and starting times may vary significantly. Due to the linearly increasing memory usage during processing, tracking exact timings is highly complex. However, the key observation is that for scheduling strategies based purely on job ordering, the relative order of job processing remains stable even as \( M \) changes. For instance, shortest-job-first will always prioritize jobs according to their output lengths, independent of the specific value of \( M \).

\begin{example}
Given the memory capacity \( M = 7 \) with \( n = 4 \), \( s = 1\) and \( \mathbf{o} = (1, 2, 3, 4) \). Consider the following different input schedules \( I_1 = (\{1, 3\}, \emptyset, \{2\}, \{4\})\) and \( I_2 = (\{3\}, \{1, 2\}, \emptyset, \{4\})\). We find that every request in \(I_1\) scheduled earlier does not start processing later in \(I_2\) and vice versa. This means that \(I_1\) and \(I_2\) preserve the corresponding order. However, the change of requests \(1\) and \(2\) shows that neither \(I_1\) nor \(I_2\) are delayed relative to the other. Moreover, let \( I_3 = (\{1, 2, 3\}, \emptyset, \{4\})\) which is the optimal input sequence. We can see that \(I_1, I_2, I_3\) preserve the corresponding order pairwise. In addition, because all jobs in \(I_3\) begin earlier, both \(I_1\) and \(I_2\) are delayed relative to \(I_3\), \(I_3 \le I_1, I_2 \).
\end{example}

Next, to meaningfully compare the total latency of two schedules, it is not sufficient for them to simply preserve the same corresponding order. We must also ensure that both schedules fully utilize the available memory—that is, they do not intentionally leave memory unused in a way that artificially increases latency. However, because the memory usage of each job grows linearly during processing, it is nontrivial to characterize what it means for a schedule to fully utilize memory. To formalize this notion, we introduce the concept of an \textit{optimally packed} schedule in Definition~\ref{def:2}.

\begin{definition}[Optimally Packed Schedule] \label{def:2}
Let \( M \) denote the KV cache memory limit. A schedule \( I = ( I_0, I_1, \cdots, I_T ) \) is said to be \textbf{optimally packed} if for every job \( n_0 \in I_i \) with \( i \in [T] \), the following holds:  
Let
\[
j := \max\{k \in [0,i-1] : I_k \neq \emptyset\}
\]
be the most recent time before \( i \) such that \( I_j \) is non-empty. Then, for any \( k \in [j,i) \), moving \( n_0 \) to batch \( I_k \) would result in exceeding the memory limit \( M \) at some point during processing.
\end{definition}



Definition~\ref{def:2} captures the idea that a schedule is memory-efficient in a myopic sense: a job cannot be moved earlier into any nearby batch—specifically, between its original batch and the nearest previous non-empty batch—without violating the memory constraint. The definition focuses only on local perturbations to ensure that no memory is intentionally left unused in a way that could have allowed earlier scheduling within immediate reach. This notion of optimal packing is thus well-suited for analyzing algorithms that follow natural, sequential batching processes. Combining Definitions~\ref{def:1} and~\ref{def:2}, we are now prepared to compare the total latency of two schedules that (i) preserve the same corresponding order and (ii) are both optimally packed, but operate under different memory capacities.

\begin{proposition}[Latency Scaling under Reduced Memory] \label{prop:memo_latency}
Let \( I \) and \( I' \) be two schedules that preserve the corresponding order and are both optimally packed. Suppose \( I \) satisfies a memory constraint of \( M \), while \( I' \) satisfies a memory constraint of \( \beta M \), where \( \beta \in (0,1) \). Then, the total end-to-end latencies satisfy the following inequality:
\begin{align}
    \text{TEL}(\textbf{o};I') \leq \beta^{-1} \, \text{TEL}(\textbf{o};I).
\end{align}
\end{proposition}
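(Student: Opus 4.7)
The plan is to reduce the total-latency bound to a pointwise comparison of cumulative job-start counts, and then establish that comparison via a memory-accounting argument driven by optimal packing. Writing $S_j$ and $S'_j$ for the final start times of job $j$ in $I$ and $I'$, the identity $\text{TEL}(\mathbf{o};I) = \sum_j (S_j + o_j)$ together with its analogue for $I'$ shows that, because $\sum_j o_j$ is identical in the two schedules and $\beta^{-1} \ge 1$, it suffices to prove $\sum_j S'_j \le \beta^{-1}\sum_j S_j$. Defining the cumulative counts $N(t) := |\{j : S_j \le t\}|$ and $N'(t) := |\{j : S'_j \le t\}|$, the Abel-type identity $\sum_j S_j = \sum_{t \ge 0}(n - N(t))$ together with the rescaling $t \mapsto \beta t$ reduces the target inequality to the domination
\[
N'(t) \;\ge\; N(\beta t) \qquad \text{for every } t \ge 0,
\]
i.e., by time $t$ the reduced-memory schedule $I'$ must have started at least as many jobs as $I$ had by the earlier time $\beta t$. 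Because both schedules preserve the same corresponding order on jobs, $N(\cdot)$ and $N'(\cdot)$ count prefixes of a single common ordering, so the claim is equivalent to the clean statement that the $k$-th job in that ordering starts no later than time $S_k/\beta$ in $I'$.

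The domination will be attacked through the memory-time integral, which provides the right conservation law. Over the windows $[0,\beta t]$ in $I$ and $[0,t]$ in $I'$ the quantities $\int m_I\,d\tau$ and $\int m_{I'}\,d\tau$ are bounded above by $M\beta t$ and $\beta M t$ respectively---two upper bounds that coincide exactly. Optimal packing supplies the complementary lower bound: between any two consecutive non-empty batches in either schedule the memory must approach capacity at some intermediate instant, since otherwise a job from the later batch could be shifted into an earlier slot without violating the memory limit, contradicting Definition~\ref{def:2}. Matching the upper bounds on memory-time with the saturation lower bounds from optimal packing forces the two schedules to have executed equivalent job-prefixes by the compared times, delivering $N'(t) \ge N(\beta t)$ up to an $\mathcal{O}(1/M)$ correction that absorbs boundary effects from the final few batches and matches the error term in the theorem's statement.

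The main technical obstacle will be converting the local optimal-packing property into a uniform, quantitative lower bound on the memory-time integral. Because each active job's memory usage grows linearly in its elapsed processing time, the ``memory-violation witness'' guaranteed by optimal packing---the time at which memory would overflow if a particular job were advanced into an earlier batch---may lie deep inside the moved job's hypothetical processing window rather than immediately after the candidate shift time. Amortizing these possibly-delayed witnesses across all batches, so that their contributions to $\int m\,d\tau$ aggregate into the required saturation bound without double-counting overlapping windows, is the delicate combinatorial step and is precisely where the memory-preserving structure formalized in Definitions~\ref{def:1} and~\ref{def:2} will do most of its work.
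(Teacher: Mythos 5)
Your proposal is a programme rather than a proof: the entire weight of the argument rests on the domination $N'(t)\ge N(\beta t)$, and you never actually establish it. The reduction via Abel summation is fine, but the memory--time mechanism you sketch cannot deliver the domination in the form you describe. The upper bounds $\int m_I\,d\tau \le M\beta t$ and $\int m_{I'}\,d\tau \le \beta M t$ are not nearly attained under this model: within a batch, each active job's memory ramps linearly from $s+1$ up to its peak $s+o_i$, so even a perfectly packed batch has a time-integrated memory close to \emph{half} of capacity times its duration. Hence ``matching the upper bounds with the saturation lower bounds'' cannot force the two schedules to have executed the same job prefix. Moreover, the saturation you extract from Definition~\ref{def:2} only says that in a \emph{hypothetical} schedule (with one job shifted earlier) memory would overflow at some instant; this bounds the actual memory at that instant from below only up to the shifted job's accumulated size, which is exactly the witness-amortization difficulty you flag at the end --- and defer. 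Deferring it leaves the central step unproven. Finally, you invoke an $\mathcal{O}(1/M)$ correction ``matching the error term in the theorem's statement,'' but Proposition~\ref{prop:memo_latency} is stated as an exact inequality with no such term, so even a completed version of your sketch would prove a different (weaker) statement than the one claimed.

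For comparison, the paper's own proof is much more direct (if terse): since both schedules are optimally packed, process the jobs in the same corresponding order, and each request consumes the same memory profile in both, the concurrency in $I'$ is a $\beta$-fraction of that in $I$ at each moment, so each job's completion time satisfies $L_i' \le \beta^{-1} L_i$; summing over $i$ gives $\text{TEL}(\mathbf{o};I') \le \beta^{-1}\text{TEL}(\mathbf{o};I)$. Your reduction to $N'(t)\ge N(\beta t)$ is essentially an equivalent per-job statement, so the right fix is not more memory--time accounting but a direct argument (as in the paper) that the prefix of jobs started by time $t$ in $I'$ contains the prefix started by time $\beta t$ in $I$, using the identical ordering and the scaled concurrency afforded by optimal packing.
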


\proof{Proof of Proposition \ref{prop:memo_latency}}
We aim to show that, under the given memory constraints, the total end-to-end latency of the input sequence \( I' \) is at most \( \beta^{-1} \) times that of the input sequence \( I \).

First, fix the input order \( \mathbf{o} = (o_1, o_2, \dots, o_n) \).  
Assume that both schedules \( I \) and \( I' \) are optimally packed, meaning that at every time step, the system processes as many requests as permitted by the memory constraint.

Since the memory capacity in \( I' \) is scaled by a factor \( \beta \in (0,1) \) relative to that in \( I \), and each request \( o_i \) consumes the same amount of memory in both schedules, it follows that at each moment, the number of requests being processed in \( I' \) is \( \beta \) times the number in \( I \).

Let \( L_i \) and \( L_i' \) denote the completion times of request \( i \) under schedules \( I \) and \( I' \), respectively.  
Our goal is to relate \( L_i' \) to \( L_i \) for each \( i \in [n] \).

Because \( I' \) has a memory capacity reduced by a factor \( \beta \), and because both schedules are fully packed, the overall processing rate in \( I' \) is slowed down by a factor of \( \beta \) compared to \( I \).  
Thus, to complete the same workload, the time needed under \( I' \) is multiplied by a factor of \( \beta^{-1} \).
This implies that the completion time of each request satisfies
\begin{align}
    L_i' \leq \beta^{-1} L_i, \quad \forall i \in [n].
\end{align}

    Now, recall that the total end-to-end latency $\text{TEL}$ for an input sequence is defined as the sum of the completion times for all requests:
    \begin{align}
        \text{TEL}(\mathbf{o};I) = \sum_{i=1}^n L_i,
    \quad
        \text{TEL}(\mathbf{o};I') = \sum_{i=1}^n L_i'.
    \end{align}
    Substituting the earlier established relation between $L_i$ and $L_i'$, we have:
    \begin{align}
        \text{TEL}(\mathbf{o};I') = \sum_{i=1}^n L_i' \le \sum_{i=1}^n \beta^{-1} L_i = \beta^{-1} \text{TEL}(\mathbf{o};I).
    \end{align}
    Thus, we conclude that 
    \[ \text{TEL}(\mathbf{o};I') \le \beta^{-1} \, \text{TEL}(\mathbf{o};I). \]
\Halmos
\endproof

To apply Proposition~\ref{prop:memo_latency} in our analysis, we first need to construct an auxiliary algorithm that (i) preserves the same corresponding order as \( \mathcal{A}_{\max} \) and (ii) has a competitive ratio that is easier to analyze. By comparing \( \mathcal{A}_{\max} \) with this auxiliary algorithm, we can bound the performance of \( \mathcal{A}_{\max} \). Noting that \( \mathcal{A}_{\max} \) treats all jobs as identical and selects a batch by randomly sampling the maximum feasible number of requests, we construct a fully randomized benchmark algorithm, $\AR$ designed specifically for this comparison. The details of $\AR$ can be found in Algorithm \ref{algo:random} in Appendix \ref{append:memory}.

Algorithm~$\AR$ is an auxiliary algorithm constructed for analysis purposes, and it is not intended to be practically implementable. In particular, it requires full knowledge of all true output lengths $\mathbf{o} = (o_1, o_2, \ldots, o_n)$ as input, similar to a hindsight algorithm that perfectly predicts the realized output lengths of all jobs. More specifically, Algorithm~$\AR$ first randomly samples a permutation of all $n$ requests and then processes the jobs according to this fixed order. At each step, it sequentially adds requests following the permutation, ensuring that the memory constraint in Equation~\eqref{eqn:Constraint} is satisfied at all times. Since it has access to the true values of $\mathbf{o}$, the algorithm can achieve an optimally packed schedule. Next, we present a theorem which characterizes the competitive ratio of Algorithm~$\AR$ and states that for any scheduling policy, the worst case always happens when the true output length $o_i$ is either $\ell$ or $u$ for all $i \in [n]$.

\begin{theorem}[Worst-Case Realizations and Competitive Ratio of $\AR$] \label{thm:fully_random}
For any feasible scheduling policy \( \pi \), the worst-case competitive ratio is attained when each output length \( o_i \) takes either the lower bound \( \ell \) or the upper bound \( u \). That is,
\begin{align}
    \text{CR}(\pi) := \sup_{\mathbf{o} \in [\ell, u]^n} \frac{\mathbb{E}[\text{TEL}(\mathbf{o};\pi)]}{\text{TEL}(\mathbf{o};\HS)} = \sup_{\mathbf{o} \in \{\ell, u\}^n} \frac{\mathbb{E}[\text{TEL}(\mathbf{o};\pi)]}{\text{TEL}(\mathbf{o};\HS)}.
\end{align}
Moreover, the competitive ratio of the auxiliary algorithm \( \AR \) satisfies
\begin{align}
    \text{CR}(\AR) \leq \frac{1 + \alpha^{-1}}{2} + \mathcal{O}(\frac{1}{M}),
\end{align}
as \( M \to +\infty \), where \( \alpha = \ell / u \).
\end{theorem}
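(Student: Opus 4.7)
The theorem decomposes into two claims, which I would tackle in sequence. For the reduction to worst-case $\mathbf{o} \in \{\ell,u\}^n$, my plan is a coordinate-wise extremality argument: fix any feasible policy $\pi$ and any coordinate $o_j$ with $o_{-j}$ frozen. Both $\mathbb{E}[\text{TEL}(\mathbf{o};\pi)]$ and $\text{TEL}(\mathbf{o};\HS)$ are piecewise-polynomial in $o_j$ (the finitely many breakpoints occur when changing $o_j$ either reorders $\mathbf{o}$ for $\HS$ or alters the binding structure of the memory constraint \eqref{eqn:Constraint}), and on each polynomial piece the ratio is a rational function whose only interior stationary point, if any, is a local minimum. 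Thus the piecewise maximum lies at a piece-endpoint; direct comparison across breakpoints then shows the maximum over $[\ell, u]$ is attained at $o_j \in \{\ell, u\}$. Iterating across all $j \in [n]$ confines the supremum to the vertices of the box $[\ell,u]^n$.

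For the explicit bound on $\text{CR}(\AR)$, assume by the reduction that the adversary picks $a$ jobs of length $\ell$ and $b = n-a$ jobs of length $u$. A direct count using \eqref{eq:latency} applied to $\HS$'s two-phase schedule (short jobs first in batches of size $M/(s+\ell)$, then long jobs in batches of size $M/(s+u)$) gives, in the fluid limit $M \to \infty$, the identity $\text{TEL}(\mathbf{o};\HS) \sim \tfrac{1}{2M}\bigl(a^2\ell(s+\ell) + 2ab\,\ell(s+\ell) + b^2 u(s+u)\bigr)$. For $\AR$ I would apply Proposition~\ref{prop:memo_latency} via a coupling: let $\widetilde I$ be the auxiliary schedule that follows the same random permutation as $\AR$ but inflates every job's length to $u$. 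Since $\widetilde I$ is optimally packed under the original memory $M$ and shares the corresponding order with $\AR$, a chain of order-preserving, memory-preserving comparisons yields $\mathbb{E}[\text{TEL}(\mathbf{o};\AR)] \le \tfrac{1}{2M}(a+b)\bigl(a\ell(s+\ell) + bu(s+u)\bigr) + o(n^2/M)$, where the right-hand side equals the exchangeability-average of $\text{TEL}(\HS)$ and the corresponding longest-first latency. Setting $\beta = a/b$, $P = \ell(s+\ell)$, $Q = u(s+u)$, the ratio reduces to $R(\beta) = (\beta+1)(\beta P + Q)/(\beta^2 P + 2\beta P + Q)$; differentiating shows $R'(\beta) = 0$ iff $\beta = \sqrt{Q/P}$, and substitution gives the maximum value $(1 + \sqrt{Q/P})/2$. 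Since $Q/P \le (u/\ell)^2 = \alpha^{-2}$ for every $s \ge 0$ (an elementary inequality equivalent to $\ell \le u$), this is bounded above by $\tfrac12(1 + \alpha^{-1})$, establishing the claim.

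The principal obstacle is justifying the upper bound on $\mathbb{E}[\text{TEL}(\mathbf{o};\AR)]$. Because $\AR$ and $\HS$ use different job orderings, Proposition~\ref{prop:memo_latency} cannot be applied between them directly, and the bound must be routed through the inflated-length auxiliary $\widetilde I$. Verifying that $\widetilde I$ simultaneously preserves the corresponding order with $\AR$, remains optimally packed under the memory budget $M$, and has a tractable expected latency (computable via exchangeability of the random permutation) requires a careful coupling argument. The main technical work lies in a case analysis of which of the two memory constraints in \eqref{eqn:Constraint}—the cumulative batch-size constraint at $t' = \ell$ versus the $u$-job constraint at $t' = u$—binds as the random permutation is unrolled, and in controlling the discretization and edge effects from finite batch sizes so that the resulting correction is absorbed into the stated $\mathcal{O}(1/M)$ term.
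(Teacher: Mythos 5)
Your Part 2 algebra is sound and matches the paper's bound: with $P=\ell(s+\ell)$, $Q=u(s+u)$ and mix $\beta$, the ratio $(\beta+1)(\beta P+Q)/(\beta^2P+2\beta P+Q)$ is maximized at $\beta=\sqrt{Q/P}$ with value $\tfrac12(1+\sqrt{Q/P})\le\tfrac12(1+\alpha^{-1})$; this fills in the maximization the paper only sketches. However, the step you flag yourself as the principal obstacle is indeed where the proposal breaks. The paper does not reach $\mathbb{E}[\text{TEL}(\mathbf{o};\AR)]\le\tfrac{1}{2M}(a+b)\bigl(aP+bQ\bigr)+o(\cdot)$ through any inflated-length auxiliary: it gets it directly from exchangeability of the random permutation — each job's expected preceding workload is half the total workload, which yields exactly the mixed-work expression (your own remark that the right-hand side is the average of shortest-first and longest-first latencies is the correct mechanism). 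Your detour via $\widetilde I$ cannot deliver this bound: inflating every length to $u$ gives a schedule whose expected total latency is roughly $\tfrac{1}{2M}(a+b)^2Q$, which is strictly larger than the mixed-work bound, so a dominance chain through $\widetilde I$ only proves a much weaker inequality (leading to a ratio of order $Q/P\approx\alpha^{-2}$, not $\tfrac12(1+\alpha^{-1})$). Moreover, Proposition~\ref{prop:memo_latency} compares two schedules of the \emph{same} instance $\mathbf{o}$ under different memory capacities; it does not apply to a comparison between the true instance and an instance with altered output lengths, so the "order-preserving, memory-preserving" chain is not available as stated.

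For Part 1, your coordinate-wise extremality plan asserts precisely the fact that needs proof: that on each polynomial piece the ratio has no interior local maximum. A ratio of piecewise polynomials can easily attain its maximum in the interior, and for an arbitrary feasible (possibly adaptive, randomized, cancellation-using) policy $\pi$ no such structure is self-evident. The paper's argument supplies the missing structure via discrete unit perturbations: pairing the instances $\mathbf{o}^{-}$ and $\mathbf{o}^{+}$ (one token removed/added to a single job) and using the policy's inability to distinguish jobs, it shows $\mathbb{E}[\text{TEL}(\mathbf{o}^-;\pi)]+\mathbb{E}[\text{TEL}(\mathbf{o}^+;\pi)]=2\,\mathbb{E}[\text{TEL}(\mathbf{o};\pi)]$ while $\text{TEL}(\mathbf{o}^-;\HS)+\text{TEL}(\mathbf{o}^+;\HS)\le 2\,\text{TEL}(\mathbf{o};\HS)$, and then a mediant inequality forces the ratio to not decrease under one of the two perturbations; iterating drives every coordinate to $\{\ell,u\}$. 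Without establishing an analogue of these two facts (numerator midpoint-linearity from the policy's symmetry, denominator midpoint-concavity for \HS), your reduction to the vertices of the box does not go through.
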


The proof of Theorem \ref{thm:fully_random} can be found in Appendix \ref{append:memory}. 
Next, we compare two versions of Algorithm \(\AR\): 
\begin{itemize}
\item \(\AR(M)\): Algorithm \(\AR\) run with memory capacity \(M\),
\item \(\AR(\alpha M)\): Algorithm \(\AR\) run with memory capacity \(\alpha M\), where \(\alpha = \ell/u\).
\end{itemize}

Both versions produce randomized schedules based on random permutations of the input requests. Recall that Proposition~\ref{prop:memo_latency} states that for any pair of schedules that preserve the corresponding order and are optimally packed, we can directly compare their total latencies. To formalize the connection between \(\AR(M)\) and \(\AR(\alpha M)\), we introduce the following probabilistic coupling:

\begin{definition}[Permutation Coupling]  
Sample a random permutation \( \sigma \) of \( [n] \) uniformly at random. 
\begin{itemize}
\item Under \(\AR(M)\), schedule the jobs according to \( \sigma \), subject to memory capacity \(M\).
\item Under \(\AR(\alpha M)\), schedule the jobs according to the same permutation \( \sigma \), subject to memory capacity \(\alpha M\).
\end{itemize}
\end{definition}

By this probabilistic coupling, each random schedule generated by \(\AR(M)\) is paired with a corresponding schedule generated by \(\AR(\alpha M)\), and each pair occurs with the same probability.
Now, since the coupled schedules preserve the same corresponding order and are optimally packed, Proposition~\ref{prop:memo_latency} implies that the total latency of each realization under \(\AR(\alpha M)\) is at most \(\alpha^{-1}\) times the corresponding latency under \(\AR(M)\).  
Taking expectations over the coupling, we conclude that the expected total latency under \(\AR(\alpha M)\) is at most \(\alpha^{-1}\) times the expected total latency under \(\AR(M)\), namely,
\begin{align} \label{eq:randomcompare1}
    \mathbb{E}[\text{TEL}(\mathbf{o};\AR(\alpha M))] \le \alpha^{-1}\mathbb{E}[\text{TEL}(\mathbf{o};\AR(M))].
\end{align}

As Theorem~\ref{thm:fully_random} establishes the competitive ratio of \(\AR(M)\), to bound the competitive ratio of \(\mathcal{A}_{\max}\), it suffices to compare the expected total latency between \(\mathcal{A}_{\max}\) and \(\AR(M)\). By Equation~\eqref{eq:randomcompare1}, we can introduce \(\AR(\alpha M)\) as an intermediate benchmark. The next lemma shows that it is easier to compare the expected total latency between \(\mathcal{A}_{\max}\) with memory capacity \(M\) and \(\AR(\alpha M)\).

\begin{lemma}
\label{lem:max<random}
    Denote \(\AR(\alpha M)\) as Algorithm \(\AR\) run with memory capacity \(\alpha M\), where $\alpha = \frac{\ell}{u}$, then we have 
    \begin{align} \label{eq:randomcompare2}
    \mathbb{E}[\text{TEL}(\mathbf{o};\mathcal A_{\max})] \le \mathbb{E}[\text{TEL}(\mathbf{o};\AR(\alpha M))].
    \end{align}
\end{lemma}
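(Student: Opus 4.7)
The plan is to establish the inequality pathwise under a coupling of the two randomized algorithms, and then to take expectations. I would couple $\mathcal{A}_{\max}(M)$ and $\AR(\alpha M)$ by forcing them to share the same random permutation $\sigma$ of $[n]$: $\AR(\alpha M)$ already samples such a $\sigma$ at the start, and I would equivalently replace the internal randomness of $\mathcal{A}_{\max}(M)$ by letting it admit, at each step, the first $m_t$ waiting jobs in $\sigma$-order. Since $\mathcal{A}_{\max}(M)$ treats all jobs as having length $u$ and is indifferent to their labels, this reformulation leaves the distribution of its output unchanged. Under the coupling, both schedules admit jobs in the same permutation order, so they automatically preserve the corresponding order of Definition~\ref{def:1}.

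Let $\tau_j, \tau'_j$ denote the final start times of the $j$-th job (in $\sigma$-order) under $\mathcal{A}_{\max}(M)$ and $\AR(\alpha M)$ respectively in the coupled run. The core of the argument is the pathwise inequality $\tau_j \le \tau'_j$ for every $j \in [n]$, which I would establish by induction on $j$. The base case is immediate. For the inductive step, assuming $p_i = \tau_i \le \tau'_i = p'_i$ for every $i < j$, I would consider time $t = \tau'_j$ and argue that the admission check of $\mathcal{A}_{\max}(M)$ for job $j$ is already satisfied at $t$, so $\tau_j \le t$. The check compares the two constraints from Equation~\eqref{eqn:Constraint}: $\AR$ uses $\mathbbm{1}[t''-p'_i \le o_i]$ against capacity $\alpha M$, while $\mathcal{A}_{\max}$ uses the pessimistic $\mathbbm{1}[t''-p_i \le u]$ against capacity $M$. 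Two opposing forces are at play: because $o_i \le u$, $\mathcal{A}_{\max}$'s projected memory per job is pointwise larger, but the inductive hypothesis $p_i \le p'_i$ makes its active jobs more advanced (or already completed). Balancing these via the ratio $\alpha = \ell/u$ and the cohort structure of $\mathcal{A}_{\max}$, which bounds the simultaneously projected-active jobs by $\lfloor M/(s+u) \rfloor$, one shows that the $\mathcal{A}_{\max}$ constraint at capacity $M$ is implied by the $\AR$ constraint at capacity $\alpha M$.

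Given $\tau_j \le \tau'_j$ and the fact that the true output length $o_j$ is the same under both algorithms (no cancellations occur in either), we obtain $L_j = \tau_j + o_j \le \tau'_j + o_j = L'_j$ in the coupled realization. Summing over $j$ and taking expectations over the shared $\sigma$ gives the desired inequality $\mathbb{E}[\text{TEL}(\mathbf{o}; \mathcal{A}_{\max})] \le \mathbb{E}[\text{TEL}(\mathbf{o}; \AR(\alpha M))]$.

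The main obstacle is the inductive comparison of the two memory constraints. Even under the permutation coupling, the active sets of the two algorithms at time $t$ can differ nontrivially: $\mathcal{A}_{\max}$ may have completed some jobs that are still active in $\AR$, while simultaneously having started other later-in-$\sigma$ jobs before $\AR$. Partitioning the jobs into admission cohorts, and exploiting both the per-cohort size cap $\lfloor M/(s+u) \rfloor$ under $\mathcal{A}_{\max}$ and the fact that each cohort remains projected-active for a window of length $u$, seems to be the cleanest way to resolve these bookkeeping issues and close the induction.
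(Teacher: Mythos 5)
Your proposal follows essentially the same route as the paper's proof: couple $\mathcal{A}_{\max}(M)$ and $\AR(\alpha M)$ through a shared random permutation $\sigma$, argue that under this coupling every job starts no later under $\mathcal{A}_{\max}(M)$ than under $\AR(\alpha M)$ (the delay relation of Definition~\ref{def:1}), and then sum latencies and take expectations over $\sigma$. The one step you explicitly leave open---the inductive comparison of the two forward-looking memory constraints---is precisely the step the paper settles, and it does so not with your cohort bookkeeping but with a short utilization argument: since $\mathcal{A}_{\max}$ admits a new job whenever $s+u$ units are projected free, each admitted job reserves at most $s+u$ memory while representing at least $s+\ell$ of true work, so its per-job utilization ratio is at least $(s+\ell)/(s+u)\ge\alpha$, which lets $\mathcal{A}_{\max}$ at capacity $M$ advance any prefix of $\sigma$ at least as fast as $\AR$ does at capacity $\alpha M$; your planned induction with the per-cohort cap $\lfloor M/(s+u)\rfloor$ is essentially a more detailed rendering of that same comparison.
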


The proof of Lemma \ref{lem:max<random} can be found in Appendix \ref{append:memory}. Combining the result in Lemma \ref{lem:max<random} and Equation~\eqref{eq:randomcompare2}, we are ready to give the competitive ratio upper bound to $\mathcal A_{\max}$, which states in Theorem \ref{thm:Amax}. The proof can also be found in Appendix \ref{append:memory}.

\section{Supplementary Materials for Section \ref{sec:benchmark}} \label{append:benchmark}

\begin{algorithm}
\label{algo:off}
\caption{\HS}
\KwIn{Memory Capacity $M$, prompt requests $\mathbf{o}=(o_1,o_2,\ldots,o_n)$.}
\KwOut{Processing Sequence: $I( I_0, I_1, \cdots, I_T)$.}

\While{there exist waiting requests}{
    Let $R_t$ be the set of waiting prompts at time $t$. Let $S_t$ be the set of tokens currently processing at time $t$.
    
    Find the set $I_t \subset R_t$ with largest cardinality such that Equation \eqref{eq:constraintM} hold for all $t' \geq t$.

    Process the requests in $I_t \cup S_t$ and update $R_{t+1} = R_t \backslash I_t$.
    }
\end{algorithm}

\section{Supplementary Materials for Section \ref{sec:memory}} \label{append:memory}

\proof{Proof of Theorem \ref{thm:fully_random}}
We split the proof into two parts, where in part 1, we show that the worst case happens only if the output length of each $o_i$ is either $\ell$ or $u$. In part 2, we prove for the upper bound of the competitive ratio of $\AR$.

\textit{---Part 1---}

Without loss of generality, assume the output lengths are ordered as \( o_1 \leq o_2 \leq \cdots \leq o_n \).  
We analyze how the total end-to-end latency changes when the output length of a single request is increased or decreased by one unit.

First, denote \( \pi \) as the scheduling policy \( \mathcal{A}_{\text{random}} \) and fix an input vector \( \mathbf{o} = (o_1, \ldots, o_n) \). Consider decreasing the output length of job \( i \) from \( o_i \) to \( o_i - 1 \), producing a new instance \( \mathbf{o}^- \). Because the job order remains unchanged, the start time of job \( i \) is unaffected. The main effect is that some requests scheduled after job \( i \) may now complete earlier, potentially reducing their latency by one unit.

Let \( F \) denote the set of jobs whose completion times are advanced by one unit due to this reduction. Then, the total latency under \( \mathbf{o}^- \) satisfies
\[
\mathbb{E}[\text{TEL}(\mathbf{o}^-;\pi)] = \mathbb{E}[\text{TEL}(\mathbf{o};\pi)] - 1 - |F|.
\]

Similarly, consider increasing \( o_i \) to \( o_i + 1 \), producing a new instance \( \mathbf{o}^+ \). Let \( B \) denote the set of jobs whose completion times are delayed by one unit. Then,
\[
\mathbb{E}[\text{TEL}(\mathbf{o}^+;\pi)] = \mathbb{E}[\text{TEL}(\mathbf{o};\pi)] + 1 + |B|.
\]

Since the scheduling policy \( \pi \) does not know the true output lengths, and because jobs with the same output size are symmetric under random permutations, we can pair the effects of decreasing and increasing a job’s output length. Taking expectations yields:
\[
\mathbb{E}[\text{TEL}(\mathbf{o}^-;\pi)] + \mathbb{E}[\text{TEL}(\mathbf{o}^+;\pi)] = 2 \, \mathbb{E}[\text{TEL}(\mathbf{o};\pi)].
\]

Now consider the hindsight shortest-job-do-first's latency \( \text{TEL}(\mathbf{o};\HS) \) for each instance. Since the hindsight policy knows all output lengths in advance, changing the output length of a single job affects only the scheduling of subsequent jobs. Moreover, because jobs with smaller output lengths consume less memory and are generally processed earlier under \HS, reducing a job’s output length tends to bring forward more jobs than increasing it delays. This gives
\[
\text{TEL}(\mathbf{o}^-;\HS) + \text{TEL}(\mathbf{o}^+;\HS) \leq 2 \, \text{TEL}(\mathbf{o};\HS).
\]

Using these observations, we can now compare the competitive ratios:
\[
\frac{\mathbb{E}[\text{TEL}(\mathbf{o};\pi)]}{\text{TEL}(\mathbf{o};\HS)} \leq \max\left\{ \frac{\mathbb{E}[\text{TEL}(\mathbf{o}^-;\pi)]}{\text{TEL}(\mathbf{o}^-;\HS)}, \, \frac{\mathbb{E}[\text{TEL}(\mathbf{o}^+;\pi)]}{\text{TEL}(\mathbf{o}^+;\HS)} \right\}.
\]
This inequality shows that either decreasing or increasing the output length of a single job by one unit leads to a higher or equal competitive ratio.

Finally, suppose there exists an instance \( \mathbf{o} \in [\ell, u]^n \) achieving the supremum in the definition of the competitive ratio. If there exists a job \( i \) such that \( \ell < o_i < u \), then by perturbing \( o_i \) to \( o_i-1 \) or \( o_i+1 \) (and reordering the jobs if necessary), we can produce an instance with a higher or equal competitive ratio. Repeating this process finitely many times eventually produces an instance where each \( o_i \in \{ \ell, u \} \).

Therefore,
\[
\sup_{\mathbf{o} \in [\ell, u]^n} \frac{\mathbb{E}[\text{TEL}(\mathbf{o};\pi)]}{\text{TEL}(\mathbf{o};\HS)} = \sup_{\mathbf{o} \in \{ \ell, u \}^n} \frac{\mathbb{E}[\text{TEL}(\mathbf{o};\pi)]}{\text{TEL}(\mathbf{o};\HS)},
\]
completing the proof.

\textit{---Part 2---}

By Part 1, suppose there are \( N_{\ell} \) requests with output length \( \ell \) and \( N_u \) requests with output length \( u \), where \( N_{\ell} \) and \( N_u \) are large. By definition, the expected total latency under Algorithm \( \AR \) can be decomposed as
\[
\mathbb{E}[\text{TEL}(\mathbf{o};\AR)] = N_{\ell} \, \mathbb{E}[L_i \mid o_i = \ell] + N_u \, \mathbb{E}[L_j \mid o_j = u],
\]
where \( L_i \) and \( L_j \) denote the end-to-end latencies of requests with output lengths \( \ell \) and \( u \), respectively.

Since Algorithm \( \AR \) generates a random permutation and processes requests sequentially while ensuring optimal packing, the schedule is optimally packed by construction. The total amount of work, measured in terms of memory consumption over time, is upper bounded by
\[
\ell \lceil \frac{N_\ell}{\lfloor {\frac{M}{s+\ell}} \rfloor} \rceil + u \lceil \frac{N_u}{\lfloor {\frac{M}{s+u}} \rfloor} \rceil \le
\left( 1 + \mathcal{O}(\frac{1}{M}) \right) \left( \frac{N_{\ell}}{M} \, \ell (s+\ell) + \frac{N_u}{M} \, u (s+u) \right),
\]
where \( M \) is the memory capacity, and \( s+\ell \) and \( s+u \) represent the peak memory footprint for requests of type \( \ell \) and \( u \), respectively.

Due to the symmetry of random sampling, each request of a given type has the same expected latency. Furthermore, the expected latency for each request is at most half of the total processing time for all requests, because in expectation, a request will be located near the middle of the cumulative schedule. Thus, we have
\[
\mathbb{E}[L_i \mid o_i = \ell], \, \mathbb{E}[L_j \mid o_j = u] \leq \frac{1}{2} \left( 1 + \mathcal{O}(\frac{1}{M}) \right) \left( \frac{N_{\ell}}{M} \, \ell (s+\ell) + \frac{N_u}{M} \, u (s+u) \right).
\]

Next, consider the hindsight shortest-job-do-first's total latency \( \text{TEL}(\mathbf{o};\HS) \). Since the offline scheduler knows all output lengths in advance and it can prioritize smaller jobs first, requests with output length $l$ have latency at least \( N_{\ell} \frac{1}{2} \ell \lfloor \frac{N_\ell}{\lceil {\frac{M}{s+\ell}} \rceil} \rfloor\). Also, requests with output length $u$ are subject to a delay of \( \ell \lfloor \frac{N_\ell}{\lceil {\frac{M}{s+\ell}} \rceil} \rfloor \) periods, and their latency is more than \( N_{\ell} \left( \ell \lfloor \frac{N_\ell}{\lceil {\frac{M}{s+\ell}} \rceil} \rfloor + \frac{1}{2} u \lfloor \frac{N_u}{\lceil {\frac{M}{s+u}} \rceil} \rfloor \right) \). The corresponding total latency is at least
\[
\text{TEL}(\mathbf{o};\HS) \geq \frac{1}{2} \left( 1 + \mathcal{O}(\frac{1}{M}) \right)^{-1} \left( l (s+\ell) \frac{N_{\ell}^2}{M} + 2 l (s+\ell) \frac{N_{\ell} N_u}{M} + u (s+u) \frac{N_u^2}{M} \right).
\]

Taking the ratio, we obtain
\[
\frac{\mathbb{E}[\text{TEL}(\mathbf{o};\AR)]}{\text{TEL}(\mathbf{o};\HS)} \leq \left( 1 + \mathcal{O}(\frac{1}{M}) \right) \frac{(N_{\ell} + N_u) \left( N_{\ell} \ell (s+\ell) + N_u u (s+u) \right)}{l(s+\ell)N_{\ell}^2 + 2l(s+\ell)N_{\ell}N_u + u(s+u)N_u^2}.
\]

Since \( \alpha = \ell/u \), simplifying the expression yields 
\[
\frac{\mathbb{E}[\text{TEL}(\mathbf{o};\AR)]}{\text{TEL}(\mathbf{o};\HS)} \leq \left( 1 + \mathcal{O}(\frac{1}{M}) \right) \left( 1 + \frac{1 - \alpha^2}{2(\alpha + \alpha^2)} \right) = \frac{1+\alpha^{-1}}{2} + \mathcal{O}(\frac{1}{M}).
\]

Thus, the competitive ratio of \( \AR \) is at most \( \frac{1+\alpha^{-1}}{2} + \mathcal{O}(\frac{1}{M}) \), as claimed.

\Halmos
\endproof

\proof{Proof of Lemma~\ref{lem:max<random}}
We first define a probabilistic coupling between \(\mathcal{A}_{\max}\) and \(\AR(\alpha M)\):  

\begin{definition}[Permutation Coupling for \(\mathcal{A}_{\max}\) and \(\AR(\alpha M)\)]  
Sample a random permutation \( \sigma \) of \([n]\) uniformly at random.  
\begin{itemize}
\item Under \(\mathcal{A}_{\max}\), schedule the jobs according to \(\sigma\), treating all output lengths as \(u\) and applying memory constraint \(M\).
\item Under \(\AR(\alpha M)\), schedule the jobs according to the same permutation \(\sigma\), using the true output lengths and memory constraint \(\alpha M\).
\end{itemize}
\end{definition}

By construction, under this coupling, the two schedules share the same processing order.

Denote by \( I_{\mathcal{A}_{\max}} \) the input sequence generated by \(\mathcal{A}_{\max}\) with memory \(M\), and \( I_{\AR(\alpha M)} \) the input sequence generated by \(\AR(\alpha M)\) with memory \(\alpha M\).  
We claim that, under the coupling,
\[
I_{\mathcal{A}_{\max}} \leq I_{\AR(\alpha M)},
\]
meaning that every request in \( \mathcal{A}_{\max} \) is delayed to \( \AR(\alpha M) \). The definition of one schedule is delayed to the other is defined in Definition \ref{def:1}.

The reasoning is as follows.
At any time, if there remains sufficient memory (at least \( s+u \)) under \(\mathcal{A}_{\max}\), the algorithm immediately initiates a new request, treating it as requiring output length \(u\).  
Since \(u\) is the worst-case (largest) output length and \(\mathcal{A}_{\max}\) batches jobs based on this conservative estimate, the memory utilization rate per job under \(\mathcal{A}_{\max}\) is at least \(\frac{s+\ell}{s+u} \geq \alpha\).

In contrast, under \(\AR(\alpha M)\), the algorithm uses the true output lengths, which can be as small as \(\ell\), and thus may utilize the memory less aggressively.  
Therefore, \(\mathcal{A}_{\max}\) tends to process more requests earlier compared to \(\AR(\alpha M)\).  
As a result, for every request, the number of preceding completed jobs is at least as large under \( \mathcal{A}_{\max} \) as under \( \AR(\alpha M) \), leading to earlier or equal start times.

Consequently, we have
\[
\text{TEL}(\mathbf{o};I_{\mathcal{A}_{\max}}) \leq \text{TEL}(\mathbf{o};I_{\AR(\alpha M)})
\]
for each realization.

Taking expectations over the randomness in the permutation coupling, we conclude
\[
\mathbb{E}[\text{TEL}(\mathbf{o};\mathcal{A}_{\max})] \leq \mathbb{E}[\text{TEL}(\mathbf{o};\AR(\alpha M))],
\]
which completes the proof.
\Halmos
\endproof

\proof{Proof of Theorem \ref{thm:Amax}}
By Lemma \ref{lem:max<random}, we have 
\begin{align}
    \mathbb{E}[\text{TEL}(\mathbf{o};\mathcal A_{\max})] \le \mathbb{E}[\text{TEL}(\mathbf{o};\AR(\alpha M))].
\end{align}

In addition, we define a probabilistic coupling between \(\AR(M)\) and \(\AR(\alpha M)\).
\begin{definition}[Permutation Coupling for \(\AR(M)\) and \(\AR(\alpha M)\)]  
Sample a random permutation \( \sigma \) of \([n]\) uniformly at random.  
\begin{itemize}
\item Under \(\AR(M)\), schedule the jobs according to the same permutation \(\sigma\), using the true output lengths and memory constraint \(M\).
\item Under \(\AR(\alpha M)\), schedule the jobs according to the same permutation \(\sigma\), using the true output lengths and memory constraint \(\alpha M\).
\end{itemize}
\end{definition}

Denote by \( I_{\AR(M)} \) the input sequence generated by \(\AR(M)\) with memory \(M\), and \( I_{\AR(\alpha M)} \) the input sequence generated by \(\AR(\alpha M)\) with memory \(\alpha M\). By construction, we see that they preserve the corresponding order and are both optimally packed. Recall Proposition \ref{prop:memo_latency}, we have
\begin{align}
    \text{TEL}(\textbf{o};I_{\AR(M)}) \leq \alpha^{-1} \, \text{TEL}(\textbf{o};I_{\AR(\alpha M)}).
\end{align}
Taking expectations over the randomness in the permutation coupling, we arrive at
\[
\mathbb{E}[\text{TEL}(\mathbf{o};\AR(\alpha M))] \leq \alpha^{-1} \mathbb{E}[\text{TEL}(\mathbf{o};\AR(M))].
\]

Combining two results, we have 
\[
\mathbb{E}[\text{TEL}(\mathbf{o};\mathcal A_{\max})] \leq \alpha^{-1} \mathbb{E}[\text{TEL}(\mathbf{o};\AR(M))].
\]
Taking the competitive ratio, we obtain
\[
\text{CR}(\mathcal{A}_{\max}) \leq \alpha^{-1} \text{CR}(\mathcal{\AR}).
\]

According to Theorem \ref{thm:fully_random}, the competitive ratio of the algorithm \( \mathcal{A}_{\max} \) satisfies
\begin{align}
    \text{CR}(\mathcal{A}_{\max}) \leq \alpha^{-1} \text{CR}(\mathcal{\AR}) \leq \frac{\alpha^{-1} (1+\alpha^{-1})}{2} + \mathcal{O}(\frac{1}{M}),
\end{align}
as \( M \to +\infty \), which is the desired conclusion.
\Halmos
\endproof

\begin{algorithm}[H]
\label{algo:random}
\caption{Auxiliary $\AR$}
\KwIn{Memory Capacity $M$, prompt size $s$, output length $\mathbf{o} = (o_1, o_2, \ldots, o_n)$.}
\KwOut{Processing sequence $I = ( I_0, I_1, \cdots, I_T)$.}

Randomly generate a permutation \( (i_1, \ldots, i_n) \) of \( [n] \).

\While{there exist waiting requests}{
    Let \( R_t \) be the set of waiting prompts at time \( t \). Let \( S_t \) be the set of tokens currently processing at time \( t \).

    Following the permutation order, successively add requests from \( R_t \) into the memory until no further addition is possible without violating Equation~\eqref{eqn:Constraint} at any \( t' \geq t \).

    Let \( I_t \subset R_t \) be the selected set of prompts.

    Process the requests in \( I_t \cup S_t \) and update \( R_{t+1} = R_t \setminus I_t \).
}
\end{algorithm}

\section{Supplementary Materials for Section \ref{sec:algrobust}} \label{append:algrobust}

\proof{Proof of Theorem~\ref{thm:complexity}}

We analyze the computational complexity of Algorithm~\(\mathcal{A}_{\min}\) step by step.

First, at each time step, to select a batch of new requests, the algorithm sorts the available prompts based on the auxiliary values \( \tilde{o}_i \). Since the number of concurrent prompts is at most proportional to the memory capacity \( M \) (specifically, at most \( M/s \)), sorting requires at most \(\mathcal{O}(M \log M)\) operations.

Second, when verifying whether adding a request would violate the memory constraint, the algorithm must compute the projected memory usage at the next time step. Simulating the projected memory usage across all currently active requests requires \(\mathcal{O}(M)\) time, since the number of active requests is again at most proportional to \(M\).

Third, if a memory overflow is predicted, the algorithm must remove requests to bring memory usage back within the constraint. Since requests are already maintained in sorted order by \( \tilde{o}_i \), selecting and removing jobs can be performed efficiently. The removal operation involves traversing the sorted list and updating memory usage estimates, which takes at most \(\mathcal{O}(M)\) time overall.

Finally, updates to the auxiliary variables \( \tilde{o}_i \) occur when output tokens are generated. These updates involve only local modifications and require \(\mathcal{O}(1)\) time per token, for a total of at most \(\mathcal{O}(M)\) operations per time step.

Combining all steps, the dominant operation per time step is the initial sorting, leading to an overall computational complexity of \(\mathcal{O}(M \log M)\).

\Halmos
\endproof

\proof{Proof of Theorem \ref{thm:optimality}}
We proceed by contradiction. Suppose there exists a feasible policy \( \pi \) that achieves a strictly smaller competitive ratio than \( \mathcal{A}_{\min} \). Then, there must exist a time \( t \) at which \( \pi \) and \( \mathcal{A}_{\min} \) make different decisions for the first time.

We first consider the case in which they choose to add different requests. By definition, \( \mathcal{A}_{\min} \) always adds the request with the smallest pseudo-output length \( \tilde{o} \). Suppose \( \pi \) chooses to start request \( i \), while \( \mathcal{A}_{\min} \) starts request \( j \), with \( \Tilde{o}_i > \Tilde{o}_j \). Then, from the property of conditional expectation, we have
\[
\mathbb{E}[o_i \mid o_i \ge \Tilde{o}_i] > \mathbb{E}[o_j \mid o_j \ge \Tilde{o}_j].
\]
Let the completion time of request \( j \) under policy \( \pi \) be \( t' \). If we modify the schedule so that request \( j \) is started at time \( t \) and request \( i \) is delayed to complete at time \( t' \), then the expected net change in latency is
\[
\mathbb{E}[o_i \mid o_i \ge \Tilde{o}_i] - \mathbb{E}[o_j \mid o_j \ge \Tilde{o}_j],
\]
which is strictly positive, implying that such a switch would reduce total expected latency.

Next, consider the case in which they choose to delete different requests at time \( t \). Again, recall that \( \mathcal{A}_{\min} \) always deletes the request with the smallest \(\tilde{o}\). Suppose \( \pi \) deletes request \( i \), and \( \mathcal{A}_{\min} \) deletes request \( j \), with \( \Tilde{o}_i > \Tilde{o}_j \). This implies that request \( i \) has been processed for a longer time than request \( j \). In this case, we observe that
\[
\mathbb{E}[o_i \mid o_i \ge \Tilde{o}_i] - \Tilde{o}_i \le \mathbb{E}[o_j \mid o_j \ge \Tilde{o}_j] - \Tilde{o}_j.
\]
Moreover, the deleted request must be restarted, resulting in increased latency. The expected latency difference between the two policies is given by
\[
\left( (\mathbb{E}[o_j \mid o_j \ge \Tilde{o}_j] - \Tilde{o}_j) + \mathbb{E}[o_i \mid o_i \ge \Tilde{o}_i] \right) - \left( (\mathbb{E}[o_i \mid o_i \ge \Tilde{o}_i] - \Tilde{o}_i) + \mathbb{E}[o_j \mid o_j \ge \Tilde{o}_j] \right) \ge 0,
\]
which simplifies to \( \Tilde{o}_i - \Tilde{o}_j > 0 \). Hence, switching the deletion from \( i \) to \( j \) would again yield a lower expected latency.

Finally, we exclude the case in which only one of the two algorithms performs an action at time~\(t\), while the other remains idle.  
By construction, \( \mathcal{A}_{\min} \) always acts whenever an insertion or deletion is feasible, since its objective is to exploit as much useful information as possible under capacity constraints.  
Hence, if policy \( \pi \) attempts an insertion earlier than \( \mathcal{A}_{\min} \), the action cannot increase information gain: the inserted request will either need to be deleted prematurely or will consume memory without benefit.  
If, on the other hand, \( \pi \) inserts later than \( \mathcal{A}_{\min} \), then idle slots arise and the system is no longer optimally packed.  
Similarly, if \( \pi \) deletes a request earlier than \( \mathcal{A}_{\min} \), then the partially processed work is wasted and no additional information is gained; deleting later than \( \mathcal{A}_{\min} \) exceeds the memory budget, which necessarily deteriorates performance.

Combining both cases, we reach a contradiction to the assumption that \( \pi \) outperforms \( \mathcal{A}_{\min} \). Therefore, we conclude that
\[
\mathbb{E}[\text{TEL}(\mathbf{o};\mathcal{A}_{\min})] \le \mathbb{E}[\text{TEL}(\mathbf{o};\pi)].
\]
Taking the ratio of both sides, it follows that
\[
\text{CR}(\mathcal{A}_{\min}) \le \text{CR}(\pi).
\]

\Halmos
\endproof

\proof{Proof of Lemma \ref{lem:deletion}}
Since in period $\tau_i$ all requests are treated as having a lower bound $\Tilde{o} = i$, the symmetry among inputs with true length $o = j$ implies that only a fraction $\frac{s+i}{s+j}$ can be completed without deletion. That is,
\[
\mathbb{E} [c_{ij} | b_{ij}] = \frac{s+i}{s+j} b_{ij}.
\]
Similarly, for deletions, we obtain the following,
\[
\mathbb{E} [d_{ij} | b_{ij}] = \left(1 - \frac{s+i}{s+j}\right) b_{ij}.
\]
\Halmos
\endproof

\proof{Proof of Proposition \ref{prop:state_transition}}
We proceed by induction. In the initial period $\tau_{\ell}$, all requests are processed directly, and the result follows from Lemma~\ref{lem:deletion}. In the next period $\tau_{\ell+1}$, all remaining requests with $o = \ell + 1$ must be completed, yielding $\mathbb{E}[c_{\ell+1, \ell+1}] = \mathbb{E}[d_{\ell, \ell+1}] = \frac{1}{s+\ell+1} x_{\ell+1} n$. 

Since the deletion operation preserves the distribution, Equation~\ref{eq:update_distribution} implies that
\[
\mathbb{E}[c_{\ell+1, k}] = \frac{1}{s+\ell+1} \cdot \frac{s+\ell+1}{s+k} x_k n = \frac{1}{s+k} x_k n.
\]
This suggests that in each period, $\mathbb{E}[c_{jj}] = \frac{1}{s+j} x_j n$, and the remaining requests with $o = j+1$ satisfy $\mathbb{E}[c_{j+1, j+1}] = \frac{1}{s+j+1} x_{j+1} n$, which will be fully completed in the subsequent period. By the distribution preserving property, the result follows.
\Halmos
\endproof

\proof{Proof of Corollary \ref{cor:A/B}}
We first calculate the numerator:
\[
\sum_{k=1}^{u} \left( \sum_{i=k}^{u} i x_i \right) \left( \sum_{i=k}^{u} \frac{1}{i}x_i + 2 \sum_{i=k+1}^{u} \frac{i-k}{i} x_i \right).
\]
The coefficient of the cross term $x_i x_j$ (for $i \ne j$) is
\begin{align*}
    &\frac{1}{ij} \sum_{k=1}^{\min(i,j)} \left( i^2(1 + 2j - 2k) + j^2(1 + 2i - 2k) \right) \\ 
    &= \frac{1}{ij} \left( \min(i,j)(i^2 + j^2) + 2\min(i,j)ij(i + j) - (i^2 + j^2)\min(i,j)(\min(i,j) + 1) \right) \\
    &= \frac{\min(i,j)^2}{ij} \left( (i + j)^2 - 2\min(i,j)^2 \right) \\
    &= 2 a_{ij}.
\end{align*}
In the case $i = j$, we have $i^2 x_i^2 = a_{ii} x_i^2$.

For the denominator, the coefficient of $x_i x_j$ for $i \ne j$ is
\[
2 \min(i,j)^2 = 2 b_{ij},
\]
and for $i = j$ we have $i^2 = b_{ii}$.

In conclusion, the competitive ratio can be written as a Rayleigh quotient,
\[
\text{CR}(\mathcal{A}_{\min}) = \max_{\vec{x}}\frac{\vec{x}^\top \mathbf{A}_u \vec{x} }{\vec{x}^\top \mathbf{B}_u \vec{x}} + \mathcal{O}(\tfrac{1}{M}).
\]
\Halmos
\endproof

\proof{Proof of Lemma \ref{lem:positive-definite}}
We begin with $\mathbf{B}_u$. It is known that the matrix $\mathbf{M} = (\min(i,j))_{u \times u}$ is positive definite. Since $\mathbf{B}_u$ is the Hadamard square of $M$, i.e., $B = \mathbf{M} \circ \mathbf{M}$, and Hadamard powers of positive definite matrices preserve positive definiteness, we conclude that $\mathbf{B}_u \succ 0$.

To prove that $\mathbf{A}_u$ is also positive definite, we define a normalized matrix $\mathbf{A}_u' = (a_{ij}')_{u \times u}$,
\[
a_{ij}' := \frac{2}{ij} \cdot a_{ij} - 1.
\]
A direct calculation shows:
\[
a_{ij}' = 2 \cdot \frac{\min(i,j)}{\max(i,j)} - \left( \frac{\min(i,j)}{\max(i,j)} \right)^2.
\]
Let \( r_{ij} := \frac{\min(i,j)}{\max(i,j)} \). Then:
\[
a_{ij}' = 2r_{ij} - r_{ij}^2 = g(\log i - \log j),
\]
where we use the identity
\[
r_{ij} = \exp(-|\log i - \log j|),
\]
and define
\[
g(d) := 2e^{-|d|} - e^{-2|d|}.
\]

We claim that \( g(d) \) is a positive definite function on \( \mathbb{R} \). Since \( g \) is even and continuous, by Bochner’s theorem, it suffices to verify that its Fourier transform is nonnegative. Computing:
\[
\widehat{g}(\xi) = \int_{-\infty}^{\infty} g(d) e^{-i\xi d} \, \mathrm{d}d 
= 4 \cdot \frac{1}{1+\xi^2} - 4 \cdot \frac{1}{4 + \xi^2}
= \frac{12}{(1+\xi^2)(4+\xi^2)} > 0 \quad \forall \xi \in \mathbb{R}.
\]
Hence \( g \) is positive definite.

By Schoenberg's theorem, for any finite set of real numbers \( \{z_1, \dots, z_u\} \subset \mathbb{R} \), the matrix \( (g(z_i - z_j))_{u \times u} \) is positive semi-definite. Moreover, if the points \( z_i \) are mutually distinct and \( g(0) > 0 \), then the matrix is strictly positive definite.

It follows that the matrix
\[
\mathbf{A}_u' = (a_{ij}')_{u \times u} = (g(\log i - \log j))_{u \times u}
\]
is strictly positive definite.

Since \( \mathbf{A}_u = \frac{ij}{2}(a_{ij}' + 1) \), and the factor \( \frac{ij}{2} > 0 \), the transformation preserves positive definiteness. Therefore, \( \mathbf{A}_u \succ 0 \).
\Halmos
\endproof

\begin{algorithm}
\caption{Revised $\mathcal{A}_{\min}$ under Multiple Prediction Intervals}   
\label{algo:milbrevise}
\KwIn{Memory capacity $M$, prompt size $s$, output length lower bounds $\ell_j=\max\{\ell_k | \ell_k \leq o_i, k \in [m]\}$ for all $i \in [n]$.}
\KwOut{Processing sequence $I = ( I_0, I_1, \cdots, I_T )$.}

Initialize $\tilde{o}_i \gets \ell_j$ for all $i \in [n]$.

\While{there exist unfinished requests}{
    Let $R_t$ be the set of waiting prompts at time $t$. Let $S_t$ be the set of tokens currently processing at time $t$.

    \If{projected memory usage of $S_t$ at time $t+1$ exceeds $M$}{
        Sort $S_t$ in ascending order of $\tilde{o}_i$; break ties uniformly at random.

        Remove jobs one by one from $S_t$ (following the sorted order) until projected memory usage at $t+1$ satisfies the memory constraint.

        For each removed request $i$, update $\tilde{o}_i \gets$ number of tokens already generated by request $i$.
    }

    Let $S_t'$ be the set of remaining active requests after any removals.

    Sort $R_t$ in ascending order of $\tilde{o}_i$; break ties uniformly at random.

    Select the largest subset $I_t \subset R_t$ (in order) such that adding $I_t$ to $S_t'$ satisfies the memory constraint in Equation~\eqref{eqn:Constraint}.

    Process the requests in $I_t \cup S_t'$.

    Update $R_{t+1} = R_t \setminus I_t$.
}
\end{algorithm}

\section{Supplementary Materials for Section \ref{sec:extension}} \label{append:extension}

\proof{Proof of Theorem \ref{thm:no}}
Let $\vec{x}=(x_\ell,0,\dots,0,x_u)$ with $0<\ell<u$ and set $s=0$.
Since $x_k=0$ unless $k\in\{\ell,u\}$, Theorem~\ref{thm:Amin} yields
\[
\begin{aligned}
\text{num}
&=\ell(\ell x_\ell+u x_u)\!\left(x_\ell+\frac{\ell}{u}x_u
      +2\frac{u-\ell}{u}x_u\right)
  +u^2x_u\!\left(\frac{u-\ell}{u}\right)^2x_u,\\[4pt]
\text{den}
&=\ell^{2}x_\ell^{2}+2\ell^{2}x_\ell x_u+u^{2}x_u^{2}.
\end{aligned}
\]

Set $\alpha=\ell/u\in(0,1)$ and $t=x_\ell/x_u$ (with $t\ge0$). Dividing numerator and denominator yields:
\[
\text{CR}(\mathcal{A}_{\min})
  =1+\frac{\alpha(1-\alpha^{2})\,t}
          {\alpha^{2}t^{2}+2\alpha^{2}t+1}
     +\mathcal{O}\!\bigl(\tfrac1M\bigr).
\]

Since $(\alpha t-1)^2\ge0$ implies
$\alpha^{2}t^{2}+2\alpha^{2}t+1\ge2\alpha(1+\alpha)t$, we obtain:
\[
\frac{\alpha(1-\alpha^{2})\,t}
     {\alpha^{2}t^{2}+2\alpha^{2}t+1}
\le\frac{1-\alpha}{2}\quad(\forall\,t\ge0).
\]

Therefore,
\[
\text{CR}(\mathcal{A}_{\min})
  \le 1+\frac{1-\alpha}{2}
  +\mathcal{O}\!\bigl(\tfrac1M\bigr)
  =\frac{3-\alpha}{2}
  +\mathcal{O}\!\bigl(\tfrac1M\bigr),
\]
which completes the proof.
\Halmos
\endproof

\begin{algorithm}[ht!]
\caption{$\mathcal{A}_\ell$}
\label{algo:promote_l}
\KwIn{Memory capacity $M$, prompt size $s$, output lengths $o_i \in \{\ell, u\}$ for all $i \in [n]$.}
\KwOut{Processing sequence $I = ( I_0, I_1, \cdots, I_T )$.}

Initialize $\tilde{o}_i \gets \ell$ for all $i \in [n]$.

\While{there exist unfinished requests}{
    Let $R_t$ be the queue of waiting prompts at time $t$. Let $S_t$ be the set of tokens currently processing.

    For any $i \in S_t$ with more than $\ell$ tokens but not yet finished, remove $i$ from $S_t$, set $\tilde{o}_i \gets u$, and move $i$ to the end of $R_t$.

    Let $S_t'$ be the active set after updates.

    Starting from the front of $R_t$, successively add requests to a set $I_t$ as long as $I_t \cup S_t'$ satisfies the memory constraint, using $\tilde{o}_i$ for each $i \in I_t$.

    Activate and process the requests in $I_t$; update $R_{t+1} = R_t \setminus I_t$.
}
\end{algorithm}

\proof{Proof of Theorem \ref{thm:Al_CR}}
Under $\mathcal{A}_\ell$, the request of length $\ell$ is executed first and thus waits exactly $\ell$ batches; its average output token is generated at time $\tfrac12\ell$.  
The request of length $u$ is delayed by all $\ell$ tokens of the first request, in addition to its own $u$ tokens, resulting in an average completion time of $\ell+u/2$.  
Multiplying these average sojourn times by the per-batch workload $(x_\ell+x_u)$ and then by the corresponding request sizes gives:
\[
\mathrm{TEL}(\mathbf{o};\mathcal{A}_\ell)
  =\bigl[\ell(\tfrac12\ell)(x_\ell+x_u)\bigr]x_\ell
   +\bigl[(\ell+\tfrac12 u)(x_\ell+x_u)\bigr]x_u
  =\frac{u^{2}}{2}\,
     \Bigl[\alpha^{2}x_\ell^{2}
           +3\alpha^{2}x_\ell x_u
           +(2\alpha^{2}+1)x_u^{2}\Bigr],\tag{1}
\]
where $\alpha:=\ell/u\in(0,1)$.

In comparison, the Hindsight-Shortest First benchmark (\HS) schedules every token as soon as capacity permits:
\[
\mathrm{TEL}(\mathbf{o};\HS)
  =\frac{u^{2}}{2}\,
     \Bigl[\alpha^{2}x_\ell^{2}
           +2\alpha^{2}x_\ell x_u
           +x_u^{2}\Bigr].\tag{2}
\]

Taking the ratio gives:
\[
\text{CR}(\mathcal{A}_\ell)
  =1+\frac{\alpha^{2}x_\ell x_u+2\alpha^{2}x_u^{2}}
           {\alpha^{2}x_\ell^{2}+2\alpha^{2}x_\ell x_u+x_u^{2}}
    +\mathcal{O}\!\bigl(\tfrac1M\bigr).\tag{3}
\]

To bound the fractional term, let $t:=x_\ell/x_u\ge0$ and define:
\[
h(t)=\frac{\alpha^{2}t+2\alpha^{2}}{\alpha^{2}t^{2}+2\alpha^{2}t+1}.
\]
For $0\le\alpha\le\frac12$, the inequality
$\alpha^{2}t^{2}+2\alpha^{2}t+1\ge2\alpha(1+\alpha)t$ implies
$h(t)\le\alpha/[2(1-\alpha)]$ for all $t\ge0$.
For $\frac12\le\alpha\le1$, $h(t)$ is maximized at $t=0$, yielding $h(t)\le2\alpha^{2}$.
Substituting these bounds produces the desired piecewise estimate:
\[
\text{CR}(\mathcal{A}_\ell)
  \le
  \begin{cases}
     1+\dfrac{\alpha}{2(1-\alpha)}, & 0\le\alpha\le\frac12,\\[8pt]
     1+2\alpha^{2}, & \dfrac12\le\alpha\le1,
  \end{cases}
  +\mathcal{O}\!\bigl(\tfrac1M\bigr),
\]
completing the proof.
\Halmos
\endproof

\proof{Proof of Theorem \ref{cor:LG}}
We use the result of Theorem~\ref{thm:Amin} to compute the denominator:
\[
\sum_{k=1}^{\infty} k^3 q^{k-1} \left(2k q^{k-1} + \sum_{i=k+1}^{\infty} i q^{i-1} \right) = \frac{1 + 2q + 11q^2 + 8q^3 + 11q^4 + 2q^5 + q^6}{(1 - q)^6 (1 + q)^4},
\]
and the numerator:
\[
\sum_{k=1}^{\infty} \left( \sum_{i=k}^{\infty} i^2 q^{i-1} \right) 
\left( \sum_{i=k}^{\infty} q^{i-1} + 2 \sum_{i=k+1}^{\infty} (i - k) q^{i-1} \right) = \frac{1 + 3q + 6q^2 + 3q^3 + q^4}{(1 - q)^6 (1 + q)^2}.
\]
Thus, we have
\[
\text{CR}(\mathcal{A}_{\min}) = \frac{(1+q)^2 (1 + 3q + 6q^2 + 3q^3 + q^4)}{1 + 2q + 11q^2 + 8q^3 + 11q^4 + 2q^5 + q^6} \le \frac{14}{9} \approx 1.56.
\]
\Halmos
\endproof

\end{document}